\journalname{}
\begin{document}

\title{Statistical Inference for Sequential Feature Selection after Domain Adaptation
}
%\subtitle{Do you have a subtitle?\\ If so, write it here}

%\titlerunning{Short form of title}        % if too long for running head

\author{
Duong Tan Loc$^{1, 2}$         
\and
Nguyen Thang Loi$^{1, 2}$
\and \\
Vo Nguyen Le Duy$^{1, 2, 3, \ast}$
}

\authorrunning{Duong Tan Loc, Nguyen Thang Loi, Vo Nguyen Le Duy} % if too long for running head

\institute{
           $^1$University of Information Technology, Ho Chi Minh City, Vietnam \\
	   $^2$Vietnam National University, Ho Chi Minh City, Vietnam \\
	   $^3$RIKEN Center for Advanced Intelligence Project (AIP), Tokyo,
103-0027, Japan \\
           $^\ast$Corresponding author. \email{duyvnl@uit.edu.vn}
}

%\date{Received: date / Revised: date}
% The correct dates will be entered by the editor

\maketitle

\begin{abstract}
%In the context of high-dimensional regression, feature selection methods, such as sequential feature selection (SeqFS), are commonly used to identify relevant features. 
%%
%When data is limited, domain adaptation (DA) becomes crucial. DA is a paradigm that transfers knowledge from a related source domain to a target domain, with the goal of improving generalization performance.
%%
%Although SeqFS under DA is an important task in machine learning, none of the existing methods can guarantee the reliability of its results.
%%
%In this paper, we propose a novel statistical inference method for testing the features selected by SeqFS after DA, leveraging the selective inference (SI) framework.
%%
%The main advantage of the proposed method is its capability to control the false positive rate (FPR) below a pre-specified level of guarantee, $\alpha$ (e.g., 0.05).
%%
%Additionally, a strategic approach is introduced to enhance the statistical power of the test while maintaining a reasonable computational cost.
%%
%Furthermore, we provide extensions of the proposed method to SeqFS with model selection criteria such as AIC, BIC, and adjusted R-squared.
%%
%Extensive experiments are conducted on both synthetic and real-world datasets to validate the theoretical results and demonstrate the superior performance of the proposed method.
%

In high-dimensional regression, feature selection methods, such as sequential feature selection (SeqFS), are commonly used to identify relevant features. When data is limited, domain adaptation (DA) becomes crucial for transferring knowledge from a related source domain to a target domain, improving generalization performance. Although SeqFS-DA is an important task in machine learning, none of the existing methods can guarantee the reliability of its results. In this paper, we propose a novel method for testing the features selected by SeqFS-DA. The main advantage of the proposed method is its capability to control the false positive rate (FPR) below a significance level $\alpha$ (e.g., 0.05). Additionally, a strategic approach is introduced to enhance the statistical power of the test. Furthermore, we provide extensions of the proposed method to SeqFS with model selection criteria including AIC, BIC, and adjusted R-squared. Extensive experiments are conducted on both synthetic and real-world datasets to validate the theoretical results and demonstrate the proposed method's superior performance.
\keywords{Sequential Feature Selection \and Domain Adaptation \and Optimal Transport \and Statistical Hypothesis Testing \and $p$-value}
\end{abstract}

\section{Introduction}
\label{sec:introduction}

Feature selection (FS) is a crucial task in machine learning (ML) and statistics, aimed at identifying the most relevant features in a dataset to enhance model performance, improve interpretability, and reduce computational complexity. 
One of the most popular techniques for FS is sequential feature selection (SeqFS), a greedy algorithmic approach that iteratively constructs a feature set to optimize model performance.
There are two main types of SeqFS: \emph{forward selection} and \emph{backward elimination}. 
In forward selection, starting with an empty feature set, features are sequentially selected one at a time. At  each step the feature is chosen based on its ability to improve the model performance.
In backward elimination, the process starts with all features, and features are removed one by one. Each removal is based on identifying the feature that has the least impact on model performance.
SeqFS has played a critical role in various applications and has been widely used across numerous fields, including economics and finance \citep{he2013feature}, bioinformatics \citep{inza2002gene, saeys2007review, xu2015sequence}, and fault detection \citep{yan2018cost}.

In many real-world applications, the performance of SeqFS can be significantly impaired by limited data availability. 
This challenge arises because insufficient labeled data often leads to overfitting and poor generalization. 
Domain adaptation (DA) provides an effective solution by enabling models to leverage knowledge from a source domain with abundant labeled data and transfer it to a target domain with limited labeled data. By exploiting the underlying similarities between the two domains, DA techniques—such as those based on optimal transport (OT)—align their data distributions, reducing discrepancies and facilitating more effective knowledge transfer. 
This alignment enables SeqFS techniques to identify relevant features in the target domain, even in the presence of data scarcity. 
As a result, DA mitigates the adverse effects of limited data while enhancing the performance of SeqFS in practical applications, where acquiring labeled data is often costly and time-consuming.

When performing SeqFS-DA, there is a significant risk of incorrectly identifying irrelevant features as relevant. 
These erroneous FS results are commonly referred to as false positives (FPs), which can have severe consequences, particularly in high-stakes domains like healthcare. 
For instance, if a model selects irrelevant features, a patient might be falsely assigned a high risk for a serious condition such as Alzheimer’s disease based on spurious genetic or neuroimaging data. 
This can lead to unnecessary treatments, including cognitive therapies or invasive diagnostic procedures, which carry physical, emotional, and financial costs—while failing to address the patient’s actual medical needs. 
Similarly, in infectious disease diagnostics, a false positive could falsely indicate that a healthy individual is infected, resulting in unwarranted quarantines, diagnostic tests, and treatments. 
These not only cause stress but also waste valuable healthcare resources. Such examples underscore the critical need for rigorous statistical methods to control the false positive rate (FPR), ensuring that feature selection processes yield reliable results and prevent harmful or erroneous decision-making.

\begin{figure*}[hbt!]
\centering
\includegraphics[width=1\textwidth]{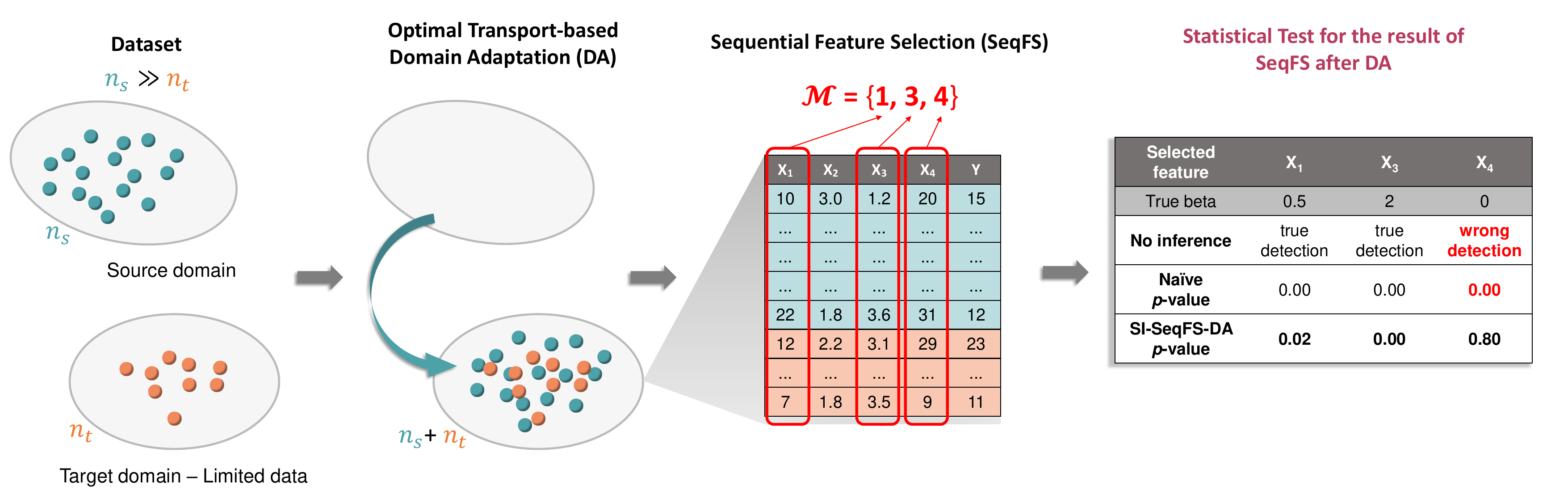}
\caption{Illustration of the proposed SI-SeqFS-DA method. When SeqFS-DA is performed without statistical inference, it often results in the selection of irrelevant features (e.g., $X_4$), as the naive $p$-value for these features may appear small, even though they are falsely detected.
In contrast, the SI-SeqFS-DA method improves feature selection by effectively distinguishing between false positives (FPs) and true positives (TPs). It assigns large $p$-values to FPs and small $p$-values to TPs, ensuring accurate identification of relevant features.}
    \label{fig:SI-SeqFS-DA}
    \vspace{-10pt}
\end{figure*}

We also emphasize the critical importance of managing the false negative rate (FNR) in statistical and ML applications. 
In statistical literature, a common strategy involves initially controlling the FPR at a pre-specified threshold, such as $\alpha = 0.05$, to control the probability of incorrectly identifying an irrelevant feature as relevant. 
However, an exclusive focus on FPR may overlook the consequences of false negatives.
To address this, practitioners aim to minimize the FNR, which is equivalent to maximizing the true positive rate (TPR), defined as TPR $=$ $1$ $-$ FNR. This dual objective---controlling the FPR while maximizing the TPR---balances reducing spurious discoveries and ensuring relevant features are not missed. 
Following this established methodology, this paper adopts a similar approach by proposing a method that theoretically controls the probability of misclassifying an irrelevant feature as relevant while simultaneously minimizing the probability of misclassifying a relevant feature as irrelevant.

To the best of our knowledge, none of the existing methods can control the FPR of SeqFS under DA. 
The primary challenge lies in the fact that, without a thorough examination of the SeqFS selection strategy under the influence of DA, the FPR cannot be properly controlled.
Several methods have been proposed in the literature for controlling the FPR in SeqFS techniques \citep{loftus2015selective, tibshirani2016exact, sugiyama2021more}. 
However, these methods assume that the data comes from the same distribution, which becomes \emph{invalid} in scenarios where a distribution shift occurs, necessitating the use of DA.
Consequently, these existing methods are not applicable to our setting.

To address the challenge of controlling the FPR, our idea is to leverage Selective Inference (Lee et al., 2016) to perform statistical tests on the features selected by SeqFS under DA.
The work of \citet{le2024cad} is the first to utilize SI to enable statistical inference within the context of DA. 
However, their method primarily addresses the unsupervised anomaly detection problem, which is fundamentally different from the setting of SeqFS under DA  we consider in this paper.
Subsequently, \citet{loi2024statistical} proposed a statistical method to test the results of FS under DA. 
Their approach, however, mainly focuses on the Lasso \citep{tibshirani1996regression}, which is distinct from SeqFS methods. 
Furthermore, in their work, controlling the FPR is achieved by relying on the KKT optimality conditions inherent in the convex optimization problem, which is not present in SeqFS methods.
Consequently, their approach cannot be applied to our setting.
Moreover, SI is problem- and model-specific, and directly applying SI in our setting is non-trivial.
This challenge arises from the need to develop a new method for the specific setting and structure of the ML model.
As a result, we must thoroughly examine the selection strategy of SeqFS under the context of DA.

In this paper, we focus on Optimal Transport (OT)-based DA \citep{flamary2016optimal}, a method that has gained popularity in the OT community.
Regarding SeqFS methods, we initially consider Forward SeqFS, where the number of selected features, denoted as $K$, is pre-defined, and subsequently extend the approach to Backward SeqFS.
To enhance the adaptability of our approach, we further extend it to cases where $K$ is not fixed but instead determined dynamically using established model selection criteria, such as the Akaike information criterion (AIC), Bayesian information criterion (BIC), and adjusted $R^2$.
The discussions on future extensions to other types of DA are provided in \S \ref{sec:conclusion}.

\vspace{8pt}
{\noindent \textbf{Contributions.} Our contributions are as follows:}

\begin{itemize}

	\item[$\bullet$] We propose a novel statistical method, named \emph{SI-SeqFS-DA} (statistical inference for SeqFS-DA), to test the features selected by SeqFS in the context of DA.
The proposed SI-SeqFS addresses the challenge of accounting for the effects of SeqFS under DA, ensuring valid inference and providing valid $p$-values for the selected features.
To the best of our knowledge, this is the first method capable of properly controlling the FPR in SeqFS-DA.
	
	\item[$\bullet$] To minimize the FNR, i.e., maximize the TPR, while properly controlling the FPR, we introduce a strategic approach utilizing the concept of \emph{divide-and-conquer}, inspired by \citep{duy2020quantifying, le2024cad}.
	
	\item[$\bullet$] We present several extensions of the proposed SI-SeqFS for both Forward SeqFS and Backward SeqFS, incorporating model selection criteria such as AIC, BIC, and adjusted $R^2$.
	
	\item[$\bullet$] We conduct thorough experiments on synthetic and real-world datasets to validate our theoretical results, highlighting the superior performance of the proposed SI-SeqFS method. Our implementation is available at 
	\begin{center}
	\href{https://github.com/locluclak/SI-SeqFS-DA}{https://github.com/locluclak/SI-SeqFS-DA}.
	\end{center}
\end{itemize}

\vspace{8pt}
{\noindent \textbf{Related works.}}
Traditional statistical inference in SeqFS often encounters challenges related to the validity of \textit{p}-values. A prominent issue stems from the use of \textit{naive} \textit{p}-values, computed under the assumption that the set of selected features is determined a priori. 
This assumption, however, does not hold in practice when employing data-driven feature selection methods such as SeqFS-DA, where the features are chosen adaptively based on the observed data. 
As a result, the naive $p$-values fail to account for the inherent selection bias introduced during the selection process of SeqFS-DA, leading to inflated FPR, undermining the statistical reliability of the inference. 
Data splitting (DS) provides a solution by partitioning the dataset into two distinct subsets: one used for feature selection and the other reserved for inference. This separation ensures that the feature selection process remains independent of the hypothesis testing phase, thereby preserving the validity of the resulting $p$-values. Despite its advantages, DS has notable limitations. Dividing the dataset reduces the amount of data available for each phase, which may weaken the statistical power of the statistical test. Moreover, in some scenarios, such as when the data exhibits strong correlations or limited sample size, splitting the data may not be feasible, further restricting the applicability of this approach.

SI has gained attention as a method for conducting valid inference on the features selected by FS techniques in linear models. 
SI was first introduced in the context of the Lasso by \citet{lee2016exact}. 
The core idea behind SI is to perform inference conditional on the FS process, thereby accounting for the selection bias. 
By doing so, SI enables the computation of valid \textit{p}-values, effectively mitigating the biases introduced during the selection phase and ensuring more reliable statistical conclusions.
The seminal paper not only established the foundation for SI research in the context of FS \citep{loftus2014significance, fithian2015selective, tibshirani2016exact, yang2016selective, hyun2018exact, sugiyama2021more, fithian2014optimal, duy2021more} but also catalyzed the development of SI methods for more complex supervised learning algorithms. 
These include boosting \citep{rugamer2020inference}, decision trees \citep{neufeld2022tree}, kernel methods \citep{yamada2018post}, higher-order interaction models \citep{suzumura2017selective}, and deep neural networks \citep{duy2022quantifying, miwa2023valid}.
However, these methods assume that the data is drawn from the same distribution. As a result, they lose their validity in the context of DA, where distribution shifts occur, making them unsuitable for such scenarios.

Moreover, SI has been introduced and developed for unsupervised learning problems, such as change point detection  \citep{umezu2017selective, hyun2018post, duy2020computing, sugiyama2021valid, jewell2022testing}, clustering \citep{lee2015evaluating, inoue2017post, gao2022selective, chen2022selective}, and segmentation \citep{tanizaki2020computing, duy2022quantifying}
Additionally, SI can be applied to statistical inference on the Dynamic Time Warping (DTW) distance \citep{duy2022exact} and the Wasserstein distance \citep{duy2021exact}, further extending its versatility to various distance metrics.

The studies most closely related to this paper are \citet{le2024cad} and \citet{loi2024statistical}. 
In \citet{le2024cad}, the authors propose a method for computing valid $p$-values for anomalies detected by an anomaly detection method within an OT-based domain adaptation (DA) setting. 
However, their focus is on unsupervised learning and anomaly detection, which differs fundamentally from the supervised setting of SeqFS under DA that we address in this paper .
Subsequently, the authors of \citet{loi2024statistical} introduced a statistical method for testing the results of features selected by Lasso in the context of DA. 
In their approach, controlling the FPR is achieved by leveraging the KKT optimality conditions inherent in the convex optimization problem of Lasso after DA, a property that is not present in SeqFS-DA methods. Consequently, their method is not directly applicable to our setting.

\section{Problem Statement}
To formulate the problem, we consider a regression setup involving two random response vectors defined as follows:
\begin{align*}
    \bm Y^s &= (Y_1^s, \dots, Y_{n_s}^s)^\top \sim \mathbb{N}(\bm \mu^s, \Sigma^s), \\
    \bm Y^t &= (Y_1^t, \dots, Y_{n_t}^t)^\top \sim \mathbb{N}(\bm \mu^t, \Sigma^t),
\end{align*}
where $n_s$ and $n_t$ denote the number of instances in the source and target domains, respectively. The $\bm \mu^s$ and $\bm \mu^t$ are unknown signals, $\bm \veps^s$ and $\bm \veps^t$ are the Gaussian noise vectors with the covariance matrices $\Sigma^s$ and $\Sigma^t$, which are assumed to be known or estimable from independent data.
We denote the feature matrices in the source and target domains, which are non-random, by $X^s \in \RR^{n_s \times p}$ and $X^t \in \RR^{n_t \times p}$, 
respectively, where $p$ is the number of features.
We assume that the number of instances in the target domain is limited, i.e., $n_t$ is much smaller than $n_s$.
The goal is to conduct the statistical test on the results of SeqFS after DA.

\subsection{Optimal Transport (OT)-based DA \citep{flamary2016optimal}}

Let us define the source and target data as:
\begin{align} \label{eq:Ds_Dt}
	D^s &= 
	\begin{pmatrix}
		X^s ~ \bm Y^s
	\end{pmatrix} 
	\in \RR^{n_s \times (p + 1)}
	\quad \text{ and } \quad
	D^t = 
	\begin{pmatrix}
		X^t ~ \bm Y^t
	\end{pmatrix} \in \RR^{n_t \times (p + 1)}.
\end{align}

Next, we define the the cost matrix as:
\begin{align*}
	C(D^s, D^t) 
	& = \Big[
	\big \| D_i^s - D_j^t \big \|^2_2 
	\Big]_{ij} \in \RR^{n_s \times n_t},
\end{align*}
for any $i \in [n_s] = \{1, 2, ..., n_s\}$ and $j \in [n_t]$.
Here, $D_i^s \in \RR^{p + 1}$ and $D_j^t \in \RR^{p + 1}$ are the $i^{\rm th}$ and $j^{\rm th}$ rows of $D^s$ and $D^t$, respectively.
The OT problem for DA between the source and target domains is then defined as:
\begin{align} \label{eq:ot_problem}
		&\hat{T} = \argmin 
		\limits_{T \in \RR^{n_s \times n_t}, ~T \geq 0} 
		~ \big \langle T, C(D^s, D^t) \big \rangle \\ 
		& \quad \quad \quad \quad ~ \text{s.t.} ~~~ T \bm{1}_{n_t} = \frac{\bm 1_{n_s}}{{n_s}}, ~ T^\top \bm{1}_{n_s} = \frac{\bm 1_{n_t}}{{n_t}}  \nonumber,
\end{align}
where $\langle\cdot,\cdot\rangle$ is the Frobenius inner product, $\bm{1}_n \in \RR^n$ is the vector whose elements are set to $1$.
Once the optimal transportation matrix $\hat{T}$ is obtained, the source instances are transported into the target domain.
The transformation $\tilde{D}^s$ of $D^s$ is defined as:
\begin{align} \label{eq:tilde_D_s}
	\tilde{D}^s 
		= n_s \hat{T} D^t \in \RR^{n_s \times (p + 1)}.
\end{align} 
Further details can be found in Section 3.3 of \citet{flamary2016optimal}.
Let us decompose $\tilde{D}^s$ into 
$
	\tilde{D}^s = 
	\begin{pmatrix}
		\tilde{X}^s ~ \tilde{\bm Y}^s
	\end{pmatrix},
$
the matrix $\tilde{X}^s$ and vector $\tilde{\bm Y}^s$ can be defined as:
\begin{align} \label{eq:data_after_da}
	\tilde{X}^s = n_s \hat{T} X^t
	\quad 
	\text{and}
	\quad
	\tilde{\bm Y}^s = n_s \hat{T} \bm Y^t,
\end{align}
according to \eq{eq:tilde_D_s} and the definition of $D^t$ in \eq{eq:Ds_Dt}.
Here, $\tilde{X}^s$ and $\tilde{\bm Y}^s$ represent the transformations of $X^s$ and $\bm Y^s$ to the target domain, respectively.

\subsection{Sequential Feature Selection (SeqFS) after OT-based DA}

For simplicity, we primarily focus on Forward SeqFS.
The extension to Backward SeqFS will be discussed later.
Henceforth, any mention of SeqFS implicitly refers to Forward SeqFS.
Recall that in SeqFS, the feature that provides the greatest improvement to the fit is repeatedly added to the current model.
After each addition, the coefficients are recomputed using least squares regression on the selected features.
This process continues until the model includes $K$ features, where $K$ is pre-specified by the analysts.

After transforming the data from the source domain to the target domain, we apply SeqFS to the combined dataset consisting of the transformed source data and the target data.
Specifically, let us denote
\begin{align} \label{eq:tilde_X_Y}
	\tilde{X} = 
	\begin{pmatrix}
		\tilde{X}^s \\ 
		X^t
	\end{pmatrix}
	\in \RR^{(n_s + n_t) \times p}, 
	\quad
	\tilde{\bm Y} = 
	\begin{pmatrix}
		\tilde{\bm Y}^s \\ 
		\bm Y^ t
	\end{pmatrix}
	\in \RR^{n_s + n_t},
\end{align}
and $\cM \in [p]$ be a set of features, the residual sum of squares from  regressing $\tilde{\bm Y}$ onto $\tilde{X}_\cM$ is defined as:
\begin{align*}
    {\rm RSS}(\tilde{\bm Y}, \tilde{X}_\cM) 
    =  
    \Big \| \Big(I_{n_s+n_t} - P_{\tilde{X}_{\cM}} \Big) \tilde{\bm Y}
    \Big \|_2^2
\end{align*}
where $I_{n_s + n_t} \in \RR^{n_s + n_t}$ is the identity matrix and $P_{\tilde{X}_{\cM}} = \tilde{X}_{\mathcal{M}} \left( \tilde{X}_{\mathcal{M}}^\top \tilde{X}_{\mathcal{M}}\right)^{-1} \tilde{X}_{\mathcal{M}}^\top$.
Then, the procedure of applying SeqFS on $\big ( \tilde{X}, \tilde{\bm Y} \big )$ to obtain a set $\cM$ of indices of $K$ selected features in the target domain is described as follows:
\begin{enumerate}
	\item Let $\cM_0$ denote the \emph{null} model that does not contain any features
	\item For $k = 1$ to $K$:
	\begin{itemize}
		\item[$\bullet$] At step $k$, the feature $j_k$ is selected by solving the optimization problem:
		\begin{align*}
		 j_k = \argmin \limits_{j \in [p] \setminus \mathcal{M}_{k-1}}  
		 {\rm RSS}\Big(\tilde{\bm Y}, \tilde{X}_{\mathcal{M}_{k-1} \cup \{j\}} \Big),
		\end{align*}
		where $\cM_{k - 1} = \{ j_1, ..., j_{k - 1}\}$ is the set of selected features up to step $k-1$.
	\end{itemize}
	\item For notational simplicity, we  denote the final set of selected features as: 
\begin{align} \label{eq:final_selected_model}
	\cM = \cM_K.
\end{align}
\end{enumerate}

\subsection{Statistical Inference on the Selected Features}

We aim to assess whether the features selected in $\mathcal{M}$ are truly relevant or merely the result of chance.
In order to quantify the statistical significance of the $j^{th}$ selected features in $\mathcal{M}$, we consider statistically testing the following  hypotheses:
\begin{align} \label{eq:hypotheses}
 {\rm H}_{0, j}:  \beta_j  = 0 \quad \text{vs.} \quad {\rm H}_{1, j}:  \beta_j \neq 0,
\end{align}
where $\beta_j = \left [ \big ({X^t_{\cM}}^\top X^t_{\cM} \big)^{-1} {X^t_{\cM}}^\top \bm \mu^t \right ]_j$ and $X^t_{\cM}$ denotes the sub-matrix of $X^t$ consisting of the columns corresponding to the set $\cM$.

To test these hypotheses, a natural choice of the test statistic is the least square estimate, defined as:
\begin{align*} 
	\hat{\beta}_j = \left [ \big ({X^t_{\cM}}^\top X^t_{\cM} \big)^{-1} {X^t_{\cM}}^\top \bm Y^t \right ]_j
%	= \bm \eta_j^\top {\bm Y^s \choose \bm Y^t },
\end{align*}
The test statistic can be re-written as a linear contrast w.r.t. the data, expressed as:
\begin{align}\label{eq:test_statistic}
	\hat{\beta}_j = \bm \eta_j^\top {\bm Y^s \choose \bm Y^t }
	\quad
	\text{where}
	\quad
	\bm \eta_j = 
\begin{pmatrix}
	\bm 0^{s} \\ 
	X^t_{\cM} \big ({X^t_{\cM}}^\top X^t_{\cM} \big)^{-1} \bm{e}_j^t
\end{pmatrix},
\end{align}
$\bm 0^{s} \in \RR^{n_s}$ represents a vector where all entries are set to 0, 
$\bm e^{t}_j \in \RR^{|\cM|}$ is a vector in which the $j^{\rm th}$ entry is set to $1$, and $0$ otherwise.

After computing the test statistic from \eq{eq:test_statistic}, we calculate the $p$-value to guide the decision-making.
At a significance level $\alpha \in [0,1]$, typically 0.05, we reject $H_{0,j}$ and conclude that the $j^{th}$ feature is relevant if the $p$-value is less than or equal to $\alpha$. Conversely, if the $p$-value exceeds $\alpha$, there is insufficient evidence to determine the relevance of the $j^{th}$ feature.

\subsection{Computation of a valid $p$-value}

Suppose the hypotheses in \eq{eq:hypotheses} are fixed, meaning they are non-random. In this case, the traditional (naive) $p$-value is calculated by

\begin{align*}
	p^{\rm naive}_j = 
	\mathbb{P}_{\rm H_{0, j}} 
	\Bigg ( 
		\left | \bm \eta_j^\top {\bm Y^s \choose \bm Y^t } \right |
		\geq 
		\left | \bm \eta_j^\top {\bm Y^s_{\rm obs} \choose \bm Y^t_{\rm obs} } \right |
	\Bigg ), 
\end{align*}
where $\bm Y^s_{\rm obs}$ and $\bm Y^t_{\rm obs}$ are the observations (realizations) of the random vectors $\bm Y^s$ and $\bm Y^t$, respectively.
The naive $p$-value is valid in the sense that
\begin{align} \label{eq:valid_p_value}
	\mathbb{P} \Big (
	\underbrace{p_j^{\rm naive} \leq \alpha \mid {\rm H}_{0, j} \text{ is true }}_{\text{a false positive}}
	\Big) = \alpha, ~~ \forall \alpha \in [0, 1],
\end{align} 
i.e., the probability of obtaining a false positive is controlled at the significance level $\alpha$.
However, in our setting, the hypotheses in \eq{eq:hypotheses} are actually \emph{not} fixed in advance, as they are defined based on the (random) results obtained by applying SeqFS after DA on the data.
As a result, the property of a valid $p$-value, as described in \eq{eq:valid_p_value}, is no longer satisfied.
Consequently, in the setting considered in this paper, the naive $p$-value is \emph{invalid}, as it fails to account for the effects of SeqFS under DA.
In the next section, we leverage the SI framework to account for the influence of SeqFS under DA and introduce a valid $p$-value for conducting the statistical test to control the FPR.

\section{Proposed Method}
\label{sec:proposed_method}

To conduct valid statistical inference, it is essential to examine how the data influences the feature selection results through the selection strategy of SeqFS after DA. We address this challenge by leveraging the concept of SI. In the SI framework, statistical inference is based on the sampling distribution of the test statistic, conditioned on the SeqFS results after DA, thereby accounting for how data is processed in SeqFS after DA to obtain the set of selected features.

\subsection{The valid $p$-value in SI-SeqFS-DA}
\begin{figure*}[!t]
    \centering
    \includegraphics[width=1\textwidth]{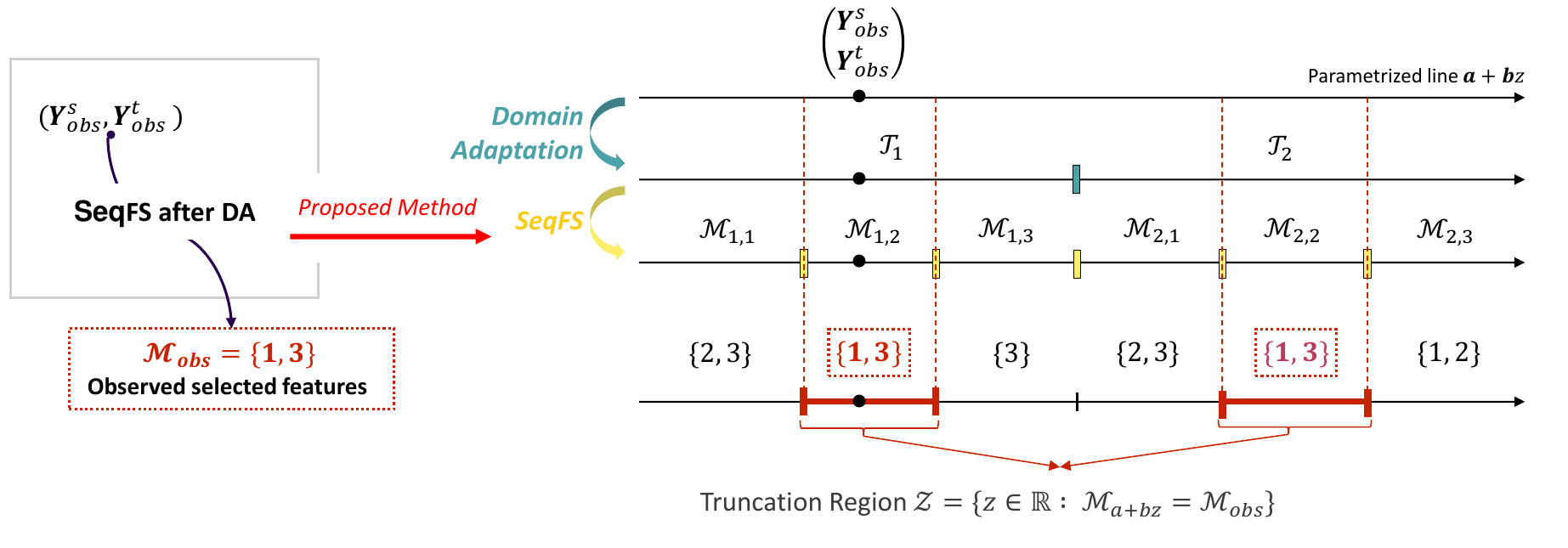}
    \caption{
    % After performing DA, we apply FS to identify the relevant features. 
    % %
    % Next, we parametrize the data using a scalar parameter $z$ in the dimension of the test statistic to define the truncation region $\cZ$, whose the data have the \emph{same} FS results as the observed data. Finally, we conduct the inference by conditioning on $\cZ$. To enhance the efficiency, we utilize a divide-and-conquer strategy to effectively identify the region $\cZ$.
    Illustration of the SI-SeqFS-DA method. First, we transform the data using domain adaptation (DA). Subsequently, sequential feature selection (SeqFS) is applied to identify the relevant features. The data is then parameterized using a scalar parameter $z$, defined in the dimension of the test statistic, to determine the truncation region $\cZ$. To improve computational efficiency, a divide-and-conquer strategy is employed to effectively identify $\cZ$. Finally, valid statistical inference is performed within the identified region $\cZ$.
    }
    \label{fig:line_search_approach}
    \vspace{-10pt}
\end{figure*}
To compute the valid $p$-value, we must first determine the distribution of the test statistic described in \eq{eq:test_statistic}, which is defined as follows based on the concept of SI:
\begin{align} \label{eq:conditional_distribution}
	\mathbb{P} \Bigg ( 
	\bm \eta_j^\top {\bm Y^s \choose \bm Y^t }
	~
	\Big |
	~ 
	\cM_{\bm Y^s, \bm Y^t}
	=
	\cM_{\rm obs}
	\Bigg ),
\end{align}
where $\cM_{\bm Y^s, \bm Y^t}$ is the set of selected features of SeqFS after DA for any \emph{random} vectors $\bm Y^s$ and $\bm Y^t$, and $\cM_{\rm obs} = \cM_{\bm Y^s_{\rm obs}, \bm Y^t_{\rm obs}}$ is the observed selected features when applying SeqFS-DA on the observed data $\bm Y^s_{\rm obs}$ and $\bm Y^t_{\rm obs}$. 
Here, the conditioning part
$
\cM_{\bm Y^s, \bm Y^t}
=
\cM_{\rm obs}
$
in \eq{eq:conditional_distribution} indicates that we only consider vectors $\bm Y^s$ and $\bm Y^t$ on which a specific set of selected features $\cM_{\rm obs}$ is obtained.

Based on the distribution of the test statistic in \eq{eq:conditional_distribution}, the selective $p$-value is then defined as follows:
\begin{align} \label{eq:selective_p}
	p^{\rm selective}_j = 
	\mathbb{P}_{{\rm H}_{0, j}} 
	\Bigg ( ~
		\left | \bm \eta_j^\top {\bm Y^s \choose \bm Y^t } \right |
		\geq 
		\left | \bm \eta_j^\top {\bm Y^s_{\rm obs} \choose \bm Y^t_{\rm obs} } \right |
		~~
		\Bigg | 
		~~
		\begin{array}{l}
		\cM_{\bm Y^s, \bm Y^t}
		=
		\cM_{\rm obs}, \\
		\cQ_{\bm Y^s, \bm Y^t}
		=
		\cQ_{\rm obs}
		\end{array} ~
	\Bigg ), 
\end{align}
%
%where $\cE$ is the conditioning event defined as
%%
%\begin{align} \label{eq:conditioning_event}
%\cE = \Big \{ 
%	\cM_{\bm Y^s, \bm Y^t}
%	=
%	\cM_{\rm obs}, ~
%	\cQ_{\bm Y^s, \bm Y^t}
%	=
%	\cQ_{\rm obs}
%\Big \}. 
%\end{align}
%
where  $\cQ_{\bm Y^s, \bm Y^t}$ is the \emph{nuisance component} defined as 
\begin{align} \label{eq:q_and_b}
	\cQ_{\bm Y^s, \bm Y^t} = 
	\Big ( 
	I_{n_s + n_t} - 
	\bm b
	\bm \eta_j^\top \Big ) 
	{\bm Y^s \choose \bm Y^t}
\end{align}
with 
$
	\bm b = \frac{\Sigma \bm \eta_j}
	{\bm \eta_j^\top \Sigma \bm \eta_j}
$
and 
$
\Sigma = 
\begin{pmatrix}
	\Sigma^s & 0 \\ 
	0 & \Sigma^t
\end{pmatrix}.
$
We note that the nuisance component $\cQ_{\bm Y^s, \bm Y^t}$ corresponds to
the component $\bm z$ in the seminal paper of \cite{lee2016exact} (see Sec. 5, Eq. (5.2)).
The additional conditioning on $\cQ_{\bm Y^s, \bm Y^t}$ is necessary for technical reasons, specifically to enable tractable inference. 
This approach is standard in the SI literature and is employed in nearly all SI-related works we cite.

\begin{lemma} \label{lemma:valid_selective_p}
The selective $p$-value proposed in \eq{eq:selective_p} is a valid $p$-value, i.e.,
\begin{align*}
	\mathbb{P}_{{\rm H}_{0, j}}  \Big (
	p_j^{\rm selective} \leq \alpha
	\Big) = \alpha, ~~ \forall \alpha \in [0, 1].
\end{align*} 
\end{lemma}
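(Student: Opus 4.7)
The plan is to verify the selectivity property by reducing the inference problem to a one-dimensional truncated-Gaussian pivot and then invoking the probability integral transform, exactly in the style of the canonical argument of \citet{lee2016exact}. The conditioning event in \eqref{eq:selective_p} contains \emph{both} $\cM_{\bm Y^s, \bm Y^t}=\cM_{\rm obs}$ and $\cQ_{\bm Y^s, \bm Y^t}=\cQ_{\rm obs}$, and the role of the nuisance conditioning is to strip away all directions in $(\bm Y^s,\bm Y^t)$ orthogonal (in the $\Sigma$-metric) to $\bm \eta_j$, leaving only the scalar component $\bm \eta_j^\top (\bm Y^s,\bm Y^t)^\top$ as the source of randomness.

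First, I would use the decomposition
\[
{\bm Y^s \choose \bm Y^t}
\;=\;
\bm b \, \Big(\bm \eta_j^\top {\bm Y^s \choose \bm Y^t}\Big)
\;+\;
\cQ_{\bm Y^s, \bm Y^t},
\]
which follows directly from the definition of $\bm b$ and $\cQ$ in \eqref{eq:q_and_b}. Once we condition on $\cQ_{\bm Y^s,\bm Y^t}=\cQ_{\rm obs}$, the joint vector $(\bm Y^s,\bm Y^t)$ is a deterministic affine function of the single scalar $Z := \bm \eta_j^\top (\bm Y^s,\bm Y^t)^\top$. Under the Gaussian model and the null $\mathrm{H}_{0,j}$ (which asserts $\bm \eta_j^\top \bm\mu = 0$ with $\bm\mu = (\bm\mu^s,\bm\mu^t)^\top$), the unconditional law of $Z$ is $\mathbb{N}(0,\,\sigma_j^2)$ with $\sigma_j^2 = \bm \eta_j^\top \Sigma \bm \eta_j$.

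Next, I would argue that the extra conditioning on $\cM_{\bm Y^s,\bm Y^t}=\cM_{\rm obs}$ only restricts $Z$ to a fixed (data-independent, once $\cQ_{\rm obs}$ is fixed) subset $\cZ \subset \RR$, namely the set of scalars $z$ such that plugging $\bm b z + \cQ_{\rm obs}$ back into the OT + SeqFS pipeline reproduces $\cM_{\rm obs}$. Hence, conditional on the full event in \eqref{eq:selective_p}, $Z$ follows a truncated normal $\mathrm{TN}(0,\sigma_j^2;\cZ)$. Define $F$ to be the CDF of $|Z|$ under this truncated law; by construction, the selective $p$-value equals $1-F(|Z_{\rm obs}|)$ (equivalently, $2\min\{F^+,1-F^+\}$ depending on how the two-sided version is implemented), and the probability integral transform gives $p_j^{\rm selective} \mid \{\cM=\cM_{\rm obs},\cQ=\cQ_{\rm obs}\} \sim \mathrm{Uniform}(0,1)$ under $\mathrm{H}_{0,j}$. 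Finally, I would marginalize: since the conditional probability $\mathbb{P}_{\mathrm{H}_{0,j}}(p_j^{\rm selective}\le\alpha\mid \cM=\cM_{\rm obs},\cQ=\cQ_{\rm obs})=\alpha$ holds for every realization $(\cM_{\rm obs},\cQ_{\rm obs})$, the tower property yields $\mathbb{P}_{\mathrm{H}_{0,j}}(p_j^{\rm selective}\le\alpha)=\alpha$.

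\paragraph{Main obstacle.}
The only subtle point, and the one I would write most carefully, is verifying that the event $\{\cM_{\bm Y^s,\bm Y^t}=\cM_{\rm obs}\}$, once $\cQ_{\rm obs}$ is fixed, really is measurable as a subset $\cZ$ of the $z$-line and does not itself carry additional randomness. This requires noting that the OT map $\hat T$ and every subsequent SeqFS step are deterministic functions of $(\bm Y^s,\bm Y^t)$ (with $X^s,X^t$ non-random), so that after the $\cQ$-conditioning every quantity in the pipeline is a function of $z$ alone. The actual geometric description and computation of $\cZ$ is deferred to the algorithmic sections (and is precisely the divide-and-conquer step illustrated in Fig.~\ref{fig:line_search_approach}), but for the validity lemma only its \emph{existence} as a deterministic set is needed, which keeps the proof short.
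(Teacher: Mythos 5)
Your proposal is correct and follows essentially the same route as the paper's own proof: conditioning on the nuisance component reduces the data to an affine function of the scalar $Z$, the selection event then truncates the null Gaussian law of $Z$ to a region $\cZ$, the probability integral transform gives conditional uniformity of the $p$-value, and marginalizing over $\cQ_{\rm obs}$ and $\cM_{\rm obs}$ (your tower-property step, which the paper carries out in two explicit stages) yields the unconditional guarantee. Your added remark on the determinism of $\cZ$ as a subset of the $z$-line is a sensible point the paper leaves implicit, but it does not change the argument.
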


\begin{proof}
The proof is deferred to Appendix \ref{app:proof_valid_p}.
\end{proof}

Lemma \ref{lemma:valid_selective_p} indicates that, by utilizing the proposed selective $p$-value, the FPR is theoretically controlled for any level $\alpha \in [0, 1]$.
Once the conditioning event 
$\big \{ \cM_{\bm Y^s, \bm Y^t}
=
\cM_{\rm obs}, ~
\cQ_{\bm Y^s, \bm Y^t}
=
\cQ_{\rm obs} \big \}$
is identified, the selective $p$-value can be computed.
The proper characterization of the conditioning event is crucial for the computation, as it directly influences the validity of the inference. 
In the next section, we will present the detailed identification of $\big \{ \cM_{\bm Y^s, \bm Y^t}
=
\cM_{\rm obs}, ~
\cQ_{\bm Y^s, \bm Y^t}
=
\cQ_{\rm obs} \big \}$, which will then allow for the effective calculation of the selective $p$-value.

\subsection{Characterization of the Conditioning Event}

Let us define the set of ${\bm Y^s \choose \bm Y^t } \in \RR^{n_s + n_t}$ that satisfies the conditions in \eq{eq:selective_p} as:
\begin{align} \label{eq:conditional_data_space}
	{\hspace{-2mm}}\cD = \left \{ 
	{\bm Y^s \choose \bm Y^t }
	\in \RR^{n_s + n_t}
	 ~\Big | ~
	\begin{array}{l}
	\cM_{\bm Y^s, \bm Y^t}
	=
	\cM_{\rm obs}, \\
	\cQ_{\bm Y^s, \bm Y^t}
	=
	\cQ_{\rm obs}
	\end{array}
	\right \}. 
\end{align}
According to the second condition on the nuisance component, the data in $\cD$ is restricted to a \emph{line} in $\RR^n$  as stated in the following lemma.

\begin{lemma} \label{lemma:data_line}
Let us define $\bm a = \cQ_{\rm obs}$, the set $\cD$ in \eq{eq:conditional_data_space} can be rewritten using a scalar parameter $z \in \RR$ as:
\begin{align} \label{eq:conditional_data_space_line}
	\cD = \left \{ {\bm Y^s \choose \bm Y^t } = \bm Y(z) = \bm a + \bm b z \mid z \in \cZ \right \},
\end{align}
where $\bm b$ is defined in \eq{eq:q_and_b} and $\cZ$ is defined as:
\begin{align} \label{eq:cZ}
	\cZ = \Big \{ 
	z \in \RR 
	\mid 
	\cM_{\bm a + \bm b z} = \cM_{\rm obs}
	\Big \}.
\end{align}
Here, with a slight abuse of notation,
$
\cM_{\bm a + \bm b z} = \cM_{{\bm Y^s \choose \bm Y^t }}
$
is equivalent to $\cM_{\bm Y^s, \bm Y^t}$.
\end{lemma}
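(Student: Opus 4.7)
The plan is to prove the lemma by explicitly parameterizing $\cD$ via a single scalar $z$, using the algebraic structure induced by the nuisance component. The key observation is the identity $\bm \eta_j^\top \bm b = 1$, which follows immediately from the definition $\bm b = \Sigma \bm \eta_j / (\bm \eta_j^\top \Sigma \bm \eta_j)$. This identity yields the decomposition
\begin{equation*}
{\bm Y^s \choose \bm Y^t} = \bigl(I_{n_s + n_t} - \bm b \bm \eta_j^\top\bigr) {\bm Y^s \choose \bm Y^t} + \bm b \, \bm \eta_j^\top {\bm Y^s \choose \bm Y^t} = \cQ_{\bm Y^s, \bm Y^t} + \bm b \, z,
\end{equation*}
where $z := \bm \eta_j^\top {\bm Y^s \choose \bm Y^t} \in \RR$.

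For the forward inclusion, I would take any ${\bm Y^s \choose \bm Y^t} \in \cD$ and substitute the nuisance constraint $\cQ_{\bm Y^s, \bm Y^t} = \cQ_{\rm obs} = \bm a$ into the decomposition above, obtaining ${\bm Y^s \choose \bm Y^t} = \bm a + \bm b z$. The remaining selection-event condition $\cM_{\bm Y^s, \bm Y^t} = \cM_{\rm obs}$ then becomes $\cM_{\bm a + \bm b z} = \cM_{\rm obs}$, i.e., $z \in \cZ$, by the definition of $\cZ$.

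For the reverse inclusion, given any $z \in \cZ$, I would set $\bm Y(z) = \bm a + \bm b z$ and check that both defining conditions of $\cD$ hold. The selection condition is immediate from $z \in \cZ$. For the nuisance condition, one first verifies that $\bm \eta_j^\top \bm a = \bm \eta_j^\top \bigl(I - \bm b \bm \eta_j^\top\bigr) {\bm Y^s_{\rm obs} \choose \bm Y^t_{\rm obs}} = 0$ (using $\bm \eta_j^\top \bm b = 1$), which then gives $\bigl(I - \bm b \bm \eta_j^\top\bigr) \bm Y(z) = \bm a + \bm b z - \bm b z = \bm a = \cQ_{\rm obs}$, as required.

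I do not expect any genuine obstacle: the proof is essentially a careful linear-algebraic bookkeeping argument, mirroring the standard SI decomposition from Lee et al.\ (2016, Sec.~5.2) transported to the stacked vector ${\bm Y^s \choose \bm Y^t}$. The one subtlety worth stating carefully is why $\bm a$ is fixed by the projector $I - \bm b \bm \eta_j^\top$ — this is what guarantees that the line $\{\bm a + \bm b z : z \in \RR\}$ is precisely the affine slice of $\RR^{n_s + n_t}$ on which $\cQ$ takes its observed value, so that intersecting with the selection event yields exactly $\cZ$.
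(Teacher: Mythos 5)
Your proof is correct and takes essentially the same route as the paper's: both rest on the decomposition ${\bm Y^s \choose \bm Y^t} = \cQ_{\bm Y^s, \bm Y^t} + \bm b\, \bm \eta_j^\top {\bm Y^s \choose \bm Y^t}$ obtained from the nuisance condition, together with the identity $\bm \eta_j^\top \bm b = 1$. The only difference is that you also spell out the reverse inclusion (via $\bm \eta_j^\top \bm a = 0$), which the paper leaves implicit.
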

\begin{proof}
The proof is deferred to Appendix \ref{app:proof_line}.
\end{proof}

%The proof is provided in Appendix \ref{app:proof_line}. 

Lemma \ref{lemma:data_line} shows that we need to focus only on the \emph{one-dimensional projected} data space 
$\cZ$ in \eq{eq:cZ} rather than the $(n_s + n_t)$-dimensional data space in \eq{eq:conditional_data_space}. 
The idea of restricting the conditional space to a line was implicitly utilized in \citet{lee2016exact} and explicitly discussed in Sec. 6 of \citet{liu2018more}.
Let us denote a random variable $Z \in \RR$ and its observation $Z_{\rm obs} \in \RR$ as follows:
\begin{align*}
	Z = \bm \eta_j^\top {\bm Y^s \choose \bm Y^t } \in \RR 
	~~ \text{and} ~~ 
	Z_{\rm obs} = \bm \eta_j^\top {\bm Y^s_{\rm obs} \choose \bm Y^t_{\rm obs} } \in \RR.
\end{align*}
Then, the selective $p$-value in (\ref{eq:selective_p}) can be rewritten as 
\begin{align} \label{eq:selective_p_reformulated}
	p^{\rm selective}_j = \mathbb{P}_{\rm H_{0, j}} \Big ( |Z| \geq |Z_{\rm obs}| \mid  Z \in \cZ \Big ).
\end{align}

Once the truncation region $\cZ$ is identified, computation of the selective $p$-value in (\ref{eq:selective_p_reformulated}) is straightforward.
Therefore, the remaining task is to identify $\cZ$.

\subsection{Identification of Truncation Region $\cZ$}
\label{subsec:identification_cZ}

Due to the inherent complexity of $\cZ$, whose structure is often too intricate to be captured by simple procedures, directly identifying it poses significant challenges.
To overcome this obstacle, we employ a \emph{``divide-and-conquer''} strategy, which breaks down the problem into more manageable subproblems. 
Inspired by the methodologies of \citet{le2024cad} and \citet{loi2024statistical}, we introduce an efficient method (demonstrated in Fig. \ref{fig:line_search_approach}) that allows for the identification of $\cZ$ in a systematic and computationally feasible manner.
Our strategy is outlined as follows:

\begin{itemize}
	\item[$\bullet$] We decompose the problem into multiple sub-problems by additionally conditioning on the transportation for DA and the \emph{order sequence} of the sets of the selected features across the $K$ steps of the SeqFS method (Sec. \ref{subsubsec:divide_conquer}).
	\item[$\bullet$] We show that each sub-problem can be solved efficiently (Sec \ref{subsubsec:solving_each_subproblem}).
	\item[$\bullet$] We integrate the solutions of multiple sub-problems to obtain $\cZ$ (Sec. \ref{subsubsec:compute_cZ}).
\end{itemize}

\subsubsection{Divide-and-conquer strategy} \label{subsubsec:divide_conquer}

Let $U$ represent the total number of possible transportations for DA along the parameterized line. 
For each transportation $\cT_u$, where $u \in [U]$, we define $V_u$ as the total number of possible order sequences of selected feature sets across the $K$ steps of SeqFS after the $\cT_u$ transportation.
The entire one-dimensional space $\RR$ can be decomposed as:
\begin{align} \label{eq:divide_and_conquer}
	\RR
	& = 
	\bigcup \limits_{u \in [U]}
	\bigcup \limits_{v \in [V_u]}
	\Big \{
	\underbrace{
	z  \in \RR
	\mid 
	\cT_{\bm a + \bm b z} = \cT_u, ~
	\cO_{\bm a + \bm b z} = \cO_{v} 
	}_{
	\text{a sub-problem of additional conditioning}
	}
	\Big \},
\end{align}
where $\cT_{\bm a + \bm b z}$ represents the OT-based DA on $\bm a + \bm b z$ and
$\cO_{\bm a + \bm b z}$ denotes the order sequence of the sets of the selected features across the $K$ steps of the SeqFS after DA. Specifically, the $\cO_{\bm a + \bm b z}$ is defined as follows:
\begin{align*}
	\cO_{\bm a + \bm b z}
	= \Big (
	\cM_1, \cM_2, ..., \cM_K
	\Big ),
\end{align*}
where $\cM_1, \cM_2, \dots, \cM_K$ implicitly represent the selected feature sets across the $K$ steps applied to the data $\bm a + \bm b z$.

Let us define a function that maps $( \cT_{\bm a + \bm b z}, \cO_{\bm a + \bm b z} )$ to a final selected set of features $\cM_{\bm a + \bm b z}$ after $K$ steps of SeqFS-DA as:
\begin{align*}
	\cA : \big ( \cT_{\bm a + \bm b z}, \cO_{\bm a + \bm b z} ) \mapsto \cM_{\bm a + \bm b z}.
\end{align*}
We aim to search a set
\begin{align} \label{eq:cW}
	\cW = 
	\Big \{ 
		(u, v) \mid \cA \big(\cT_u, \cO_v \big ) = \cM_{\rm obs}
	\Big \}.
\end{align}
After obtaining $\cW$, the region $\cZ$ in \eq{eq:cZ} can be computed as follows:
\begin{align}
	\cZ 
	& = \Big \{ 
	z \in \RR 
	\mid 
	\cM_{\bm a + \bm b z} = \cM_{\rm obs}
	\Big \} \nonumber \\ 
	& = 
	\bigcup \limits_{(u, v) \in \cW}
	\Big \{ 
	z \in \RR 
	\mid 
	\cT_{\bm a + \bm b z} = \cT_u,
	\cO_{\bm a + \bm b z} = \cO_v
	\Big \}. \label{eq:cZ_new}
\end{align}

\subsubsection{Solving of each sub-problem}
\label{subsubsec:solving_each_subproblem}

For any $u \in [U]$ and $v \in [V_u]$, we define the subset of the one-dimensional projected dataset along a line that corresponds to a sub-problem in \eq{eq:divide_and_conquer} as follows:
\begin{align} \label{eq:cZ_extra_condition}
	\cZ_{u, v} = 
	\Big \{z \in \RR 
	\mid 
	\cT_{\bm a + \bm b z} = \cT_u,
	\cO_{\bm a + \bm b z} = \cO_v 
	\Big \}.
\end{align}
The sub-problem region can be re-written as:
\begin{align} \label{eq:Z_u_and_Z_v}
	\cZ_{u, v}  = \cZ_u \cap \cZ_v,
\end{align}
where 
$
	\cZ_u  = 
	\Big \{ 
	z \in \RR
	\mid 
	\cT_{\bm a + \bm b z} = \cT_u
	\Big \}
$
and
$
	\cZ_v = 
	\Big \{ 
	z  \in \RR
	\mid 
	\cO_{\bm a + \bm b z} = \cO_v
	\Big \}.
$

\begin{lemma} \label{lemma:cZ_u}
The set $\cZ_u$ can be characterized by a set of quadratic inequalities w.r.t. $z$ described as follows:
\begin{align*}
	\cZ_u
	= \Big \{ 
	z \in \RR 
	\mid 
	\bm p + \bm q z + \bm f z^2 \geq \bm 0
	\Big \},
\end{align*}
where vectors $\bm p$, $\bm q$, and $\bm f$ are defined in Appendix \ref{app:proof_cZ_u}.
\end{lemma}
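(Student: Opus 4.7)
\textbf{Proof proposal for Lemma \ref{lemma:cZ_u}.}
The plan is to first show that, along the parametrized line, every entry of the OT cost matrix is a quadratic polynomial in $z$, and then to express the event that $\cT_u$ is the optimizer of the OT linear program as a conjunction of linear inequalities in the cost matrix entries; substituting the quadratic expressions then yields the claimed quadratic inequalities.

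First, I would unfold the parametrization. By Lemma \ref{lemma:data_line}, we have $(\bm Y^s(z)^\top, \bm Y^t(z)^\top)^\top = \bm a + \bm b z$, so each component of $\bm Y^s(z)$ and $\bm Y^t(z)$ is affine in $z$. Since $X^s$ and $X^t$ are non-random and fixed, it follows from the definitions of $D^s$ and $D^t$ in \eqref{eq:Ds_Dt} that each row $D_i^s(z) \in \RR^{p+1}$ and $D_j^t(z) \in \RR^{p+1}$ is affine in $z$. Consequently, the entry $C_{ij}(z) = \|D_i^s(z) - D_j^t(z)\|_2^2$ is a squared norm of an affine function of $z$, i.e., a polynomial of the form $p_{ij} + q_{ij} z + f_{ij} z^2$ for constants $p_{ij}, q_{ij}, f_{ij}$ computable in closed form from $\bm a$, $\bm b$, $X^s$, and $X^t$.

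Next, I would use the LP structure of the OT problem. Problem \eqref{eq:ot_problem} is a linear program in $T$ with cost vector $\mathrm{vec}(C(z))$ and fixed linear equality constraints (marginal constraints) together with $T \geq 0$. A particular transportation $\cT_u$ is the optimal basic feasible solution (with basis $B_u$) if and only if, by LP duality, there exist dual variables $(\bm\alpha, \bm\beta)$ such that $\alpha_i + \beta_j = C_{ij}(z)$ on the basic entries $(i,j) \in B_u$ and $\alpha_i + \beta_j \leq C_{ij}(z)$ on the nonbasic entries. Solving the equality system on $B_u$ expresses the dual variables as linear functions of the basic cost entries, and substituting into the nonbasic inequalities yields a finite collection of linear inequalities in the entries of $C(z)$. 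Since each entry $C_{ij}(z)$ is quadratic in $z$, each inequality becomes a scalar quadratic inequality of the form $\tilde p_\ell + \tilde q_\ell z + \tilde f_\ell z^2 \geq 0$. Stacking these into vectors $\bm p$, $\bm q$, $\bm f$ gives exactly the characterization stated in the lemma.

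The main obstacle will be making the LP optimality characterization fully rigorous while keeping the vectors $\bm p$, $\bm q$, $\bm f$ well-defined. The subtlety is potential degeneracy of the OT LP, where several bases represent the same vertex, so one has to commit to a specific basis $B_u$ associated with $\cT_u$ (the one actually produced by the solver on the observed data) so that the linear map from costs to dual variables is unambiguous; this is the standard approach taken in the OT-based SI literature such as \citet{le2024cad} and \citet{loi2024statistical}, and I would follow it, deferring the explicit entries of $\bm p$, $\bm q$, $\bm f$ to Appendix \ref{app:proof_cZ_u}.
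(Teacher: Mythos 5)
Your proposal is correct and follows essentially the same route as the paper: both express the cost vector along the line $\bm a + \bm b z$ as a quadratic $\tilde{\bm p} + \tilde{\bm q}z + \tilde{\bm f}z^2$, fix the optimal basis $\cB_u$ associated with $\cT_u$, and characterize optimality by nonnegativity of the reduced costs on the non-basic entries — your dual-potential formulation $\alpha_i + \beta_j \le C_{ij}(z)$ is exactly the reduced-cost condition $\tilde{\bm c}_{\cB_u^c}^\top - \tilde{\bm c}_{\cB_u}^\top H_{:,\cB_u}^{-1}H_{:,\cB_u^c} \ge \bm 0$ that the paper writes out explicitly. Your handling of degeneracy by committing to the solver's basis also matches the paper's (implicit) convention.
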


\begin{proof}
The proof is deferred to Appendix \ref{app:proof_cZ_u}. 
\end{proof}
%
%The purpose of Lemma \ref{lemma:cZ_u} is to ensure that the transportation $\cT_u$ remains the same for all $z \in \cZ_u$.
%
%The composition of the set $\cZ_u$ is determined by the selected algorithm, either $\cA_1$ forward SFS or $\cA_2$ backward SFS. we will detail the process for constructing $\cZ_u$ in each approach

\begin{lemma}\label{lemma:cZFS_v}
Let us define the forward SeqFS problem after $\cT_u$ transportation as, for any $k \in [K]$:
\begin{equation*}
    j_k (z) = \argmin \limits_{j \in [p] \setminus \mathcal{M}_{k-1}} 
    {\rm RSS}\Big (\tilde{\bm Y}_u(z), \tilde{X}_{u_{\mathcal{M}_{k-1} \cup \{j\}}}
    \Big),
\end{equation*}
where $\tilde{\bm Y}_u(z) = \Omega_u \bm Y (z)$ and $\tilde{X}_{u_{\mathcal{M}_{k-1} \cup \{j\}}} = (\Omega_u X)_{\mathcal{M}_{k-1} \cup \{j\}}$.
Here, the matrix $\Omega_u$ is defined as follows: 
\begin{align*}
	\Omega_u = 
	\begin{pmatrix}
		0_{n_s \times n_s} & n_s \cT_u \\
		0_{n_t \times n_s} & I_{n_t}
	\end{pmatrix}
	\in \RR^{(n_s + n_t) \times (n_s + n_t)},
\end{align*}
where $0_{n \times m} \in \RR^{n \times m}$ is the zero matrix, $I_n \in \RR^{n \times n}$ is the identity matrix, and $X = (X^s ~ X^t)^\top$.
The set $\cZ_v$ in \eq{eq:Z_u_and_Z_v} can be characterized by a set of quadratic inequalities w.r.t. $z$ described as follows:
\begin{align*}
    \cZ_v = \Big \{ 
	z \in \RR 
	\mid 
	\bm w + \bm r z + \bm o z^2 \leq \bm 0
	\Big \},
\end{align*}
where the vectors $\bm w$, $\bm r$, and $\bm o$ are defined in Appendix \ref{app:proof_Zv_fw}.
\end{lemma}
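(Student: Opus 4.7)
The plan is to show that the event $\cO_{\bm a + \bm b z} = \cO_v$ is equivalent to the optimality of the prescribed feature at each step of Forward SeqFS applied to the parameterized data $\tilde{\bm Y}_u(z) = \Omega_u (\bm a + \bm b z)$, and then to verify that each such optimality comparison is expressible as a quadratic inequality in $z$. The key observation is that, once we condition on the transportation $\cT_u$, the matrix $\Omega_u$ is fixed (independent of $z$), so both the design matrix $\tilde{X}_u = \Omega_u X$ and every projection matrix $P_{\tilde{X}_{u_\cM}}$ are constants in $z$; all the $z$-dependence is carried by $\tilde{\bm Y}_u(z)$, which is affine in $z$.

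First I would rewrite the selection criterion at step $k$: the prescribed feature $j_k^\star$ from $\cO_v$ is selected if and only if
\begin{align*}
{\rm RSS}\bigl(\tilde{\bm Y}_u(z), \tilde{X}_{u_{\cM_{k-1} \cup \{j_k^\star\}}}\bigr)
\le
{\rm RSS}\bigl(\tilde{\bm Y}_u(z), \tilde{X}_{u_{\cM_{k-1} \cup \{j\}}}\bigr),
\quad
\forall j \in [p] \setminus (\cM_{k-1} \cup \{j_k^\star\}).
\end{align*}
Letting $M_{j} = I_{n_s+n_t} - P_{\tilde{X}_{u_{\cM_{k-1} \cup \{j\}}}}$, each projection $M_j$ is symmetric idempotent, so $M_j^\top M_j = M_j$, and the inequality above rearranges to
\begin{align*}
\tilde{\bm Y}_u(z)^\top (M_{j_k^\star} - M_j)\, \tilde{\bm Y}_u(z) \le 0.
\end{align*}

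Next I would substitute $\tilde{\bm Y}_u(z) = \Omega_u \bm a + (\Omega_u \bm b)\, z$ to expand this expression. Writing $\bm \alpha_u = \Omega_u \bm a$ and $\bm \beta_u = \Omega_u \bm b$, the left-hand side becomes
\begin{align*}
\bm \alpha_u^\top (M_{j_k^\star} - M_j) \bm \alpha_u
+ 2 z\, \bm \alpha_u^\top (M_{j_k^\star} - M_j) \bm \beta_u
+ z^2\, \bm \beta_u^\top (M_{j_k^\star} - M_j) \bm \beta_u,
\end{align*}
which is manifestly a scalar quadratic polynomial in $z$ of the form $w_{k,j} + r_{k,j}\, z + o_{k,j}\, z^2$, with the three coefficients depending only on the conditioned quantities $(u,v,\cM_{k-1},j_k^\star,j)$ and thus fixed. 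Stacking these scalar quadratic inequalities over all $k \in [K]$ and all $j \in [p] \setminus (\cM_{k-1} \cup \{j_k^\star\})$ yields vectors $\bm w$, $\bm r$, $\bm o$ such that $\cZ_v = \{z \in \RR \mid \bm w + \bm r z + \bm o z^2 \le \bm 0\}$, matching the statement of the lemma; the explicit componentwise formulas would be recorded in Appendix \ref{app:proof_Zv_fw}.

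The only subtle point to be careful about is \emph{bookkeeping} rather than mathematics: making sure the stacking order and the sign convention of the inequalities match the definitions given in the appendix, and making sure that the set of ``alternative'' features $j$ at step $k$ is handled correctly (excluding $\cM_{k-1}$ and also excluding $j_k^\star$ itself, which would trivially produce the zero inequality). Since the RSS comparison at each step produces a scalar quadratic inequality whose coefficients are built from fixed projection matrices and the affine parameterization of $\bm Y(z)$, no further analytic difficulty arises; the result is essentially a direct computation enabled by Lemma \ref{lemma:data_line} together with the fact that conditioning on $\cT_u$ freezes the transport-dependent quantities.
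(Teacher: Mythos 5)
Your proposal is correct and follows essentially the same route as the paper's proof: reduce the event $\cO_{\bm a + \bm b z} = \cO_v$ to the stepwise RSS optimality comparisons, write each comparison as a quadratic form in $\tilde{\bm Y}_u(z)$ via the residual projection matrices, substitute the affine parameterization $\Omega_u(\bm a + \bm b z)$, and stack the resulting scalar quadratic inequalities over all steps $k$ and competitor features $j$. The only (immaterial) difference is that you simplify $M_j^\top M_j = M_j$ using idempotence, whereas the paper keeps the product $P^{\perp\top}P^{\perp}$ explicit in its coefficient formulas.
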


\begin{proof}
The proof is deferred to Appendix \ref{app:proof_Zv_fw}.
\end{proof}
In Lemmas \ref{lemma:cZ_u} and \ref{lemma:cZFS_v}, we show that $\cZ_u$ and $\cZ_v$ can be \emph{analytically obtained} by solving the respective systems of quadratic inequalities.
Once $\cZ_u$ and $\cZ_v$ are computed, the sub-problem region $\cZ_{u, v}$ in \eq{eq:Z_u_and_Z_v} is determined by the intersection $\cZ_{u, v} = \cZ_u \cap \cZ_v$.

\begin{remark}

The selective $p$-value computed using the sub-problem region $\cZ_{u,v}$ remains valid and this fact is well-known in the literature of conditional SI. 
However, the primary drawback in this case is the low TPR, i.e.,  high FNR.
To minimize the FNR, it is necessary to compute the selective $p$-value using the region $\cZ$ defined in \eq{eq:cZ}. The approach for efficiently computing $\cZ$ will be discussed in the next section.
\end{remark}

\begin{algorithm}[!t]
\caption{\texttt{SI-SeqFS-DA}}
\label{alg:si_seqfs_da}
\begin{footnotesize}
\textbf{Input:} $X^s, \bm Y^s_{\rm obs}, X^t, \bm Y^t_{\rm obs}, z_{\min}, z_{\max}$

\begin{algorithmic}[1]
\vspace{4pt}
    \STATE $\cM_{\rm obs} \gets$ SeqFS after DA on $\Big (X^s, \bm Y^s_{\rm obs} \Big )$ and  $\Big (X^t, \bm Y^t_{\rm obs} \Big)$
    \vspace{4pt}
    \FOR{$j \in \cM_{\rm obs}$}
    \vspace{4pt}
        \STATE Compute $\bm \eta_{j} \gets$ Eq. (\ref{eq:test_statistic}), $\boldsymbol{a} \text{ and } \boldsymbol{b} \gets$ Eq. (\ref{eq:conditional_data_space_line}), $X = \big (X^s ~ X^t \big )^\top$
%        \vspace{4pt}
%        \STATE $X = \big (X^s ~ X^t \big )^\top$
        \vspace{4pt}
        \STATE $\cW \gets$ {\tt divide\_and\_conquer} \big (X, $\boldsymbol{a}, \boldsymbol{b}, z_{\min}, z_{\max}$ \big ) $\quad \quad  //$ Algorithm \ref{alg:divide_and_conquer}
        \vspace{4pt}
        \STATE Identify $\cZ \gets $ Eq. \eq{eq:cZ_new} with $\cW$
        \vspace{4pt}
        \STATE Compute $p^{\rm selective}_{j} \gets$ Eq. (\ref{eq:selective_p_reformulated}) with $\cZ$
        \vspace{4pt}
    \ENDFOR
\end{algorithmic}
\textbf{Output:} $\{p^{\rm selective}_{i}\}_{i \in \mathcal{M}_{\rm obs}}$
\end{footnotesize}
\end{algorithm}

\begin{algorithm}[!t]
\renewcommand{\algorithmicrequire}{\textbf{Input:}}
\renewcommand{\algorithmicensure}{\textbf{Output:}}
\begin{footnotesize}
 \begin{algorithmic}[1]
  \REQUIRE $X, \bm a, \bm b, z_{\rm min}, z_{\rm max}$
	\vspace{4pt}
	\STATE Initialization: $u = 1$, $v = 1$, $z_{u, v}  = z_{\rm min}$, $\cW = \emptyset$
	\vspace{4pt}
	\WHILE {$z_{u, v} < z_{\rm max}$}
		\vspace{4pt}
		\STATE Obtain $\cT_u \leftarrow$ OT-based DA on $\bm a + \bm b z_{u, v}$
		\vspace{4pt}
		\STATE Compute $[\ell_u, r_u] = \cZ_u \leftarrow$ Lemma \ref{lemma:cZ_u} 
		\vspace{4pt}
		\STATE $r_{u, v} = \ell_u$
		\vspace{4pt}
		\WHILE {$r_{u, v} < r_u$}
		\vspace{4pt}
		\STATE Compute $\tilde{X}_u$ and $\tilde{\bm Y}_u(z_{u, v}) \gets $ Lemma \ref{lemma:cZFS_v}
		\vspace{4pt}
		\STATE Obtain $\cO_v \leftarrow$ FS after DA on $\big (\tilde{X}_u, \tilde{\bm Y}_u(z_{u, v}) \big )$
		\vspace{4pt}
		\STATE $\cZ_v \leftarrow$ Lemma \ref{lemma:cZFS_v} 
		\vspace{4pt}
		\STATE $[\ell_{u, v}, r_{u, v}] = \cZ_{u, v} \leftarrow \cZ_u \cap \cZ_v$ 
		\vspace{6pt}
		\STATE $\cW \leftarrow \cW \cup \{ (u, v)\} $ \textbf{if} $\cA \big(\cT_u, \cO_v \big ) = \cM_{\rm obs}$
		\vspace{4pt}
		\STATE $v \leftarrow v + 1$, $z_{u, v} = r_{u, v}$
		%
%		\IF {$r_{u, v} = r_u$}
%		\vspace{4pt}
%		\STATE $v \leftarrow 1$, $u \leftarrow u + 1$, $z_{u, v} = r_{u, v}$, \textbf{break}
%		\vspace{4pt}
%		\ENDIF
		%
%		\STATE \textbf{if} {$R_{u, v} = R_u$} \textbf{then} ${\rm break}$ \textbf{end if}
		%
%		\vspace{4pt}
%		\STATE $v \leftarrow v + 1$, $z_{u, v} = r_{u, v}$
		\vspace{4pt}
		\ENDWHILE	
		\vspace{4pt}
		\STATE $v \leftarrow 1$, $u \leftarrow u + 1$, $z_{u, v} = r_{u, v}$
		\vspace{4pt}
	\ENDWHILE
	\vspace{2pt}
  \ENSURE $\cW$ 
 \end{algorithmic}
\end{footnotesize}
\caption{{\tt divide\_and\_conquer}}
\label{alg:divide_and_conquer}
\end{algorithm}

\subsubsection{Computation of $\cZ$ in \eq{eq:cZ} by combining multiple sub-problems} \label{subsubsec:compute_cZ}

To identify $\cW$ in \eq{eq:cW}, the OT-based DA and SeqFS after DA are iteratively performed on a sequence of datasets
$\bm a + \bm b z$, over an adequately broad range of $z \in [z_{\rm min}, z_{\rm max}]$\footnote{We set $z_{\rm min} = -20\sigma$ and $z_{\rm max} = 20 \sigma$, $\sigma$ is the standard deviation of the distribution of the test statistic, because the probability mass outside this range is negligibly small.}.
For simplicity, we focus on the case where each of $\cZ_u$ and $\cZ_v$ consists of a single interval\footnote{If $\cZ_u$ or $\cZ_v$ is a union of intervals, we can select the interval containing the data point that we are currently considering.}. Consequently, $\cZ_{u,v}$ is also a single interval. 
We denote $\cZ_u = [\ell_u, r_u]$ and $\cZ_{u, v} = [\ell_{u, v}, r_{u, v}]$.
The divide-and-conquer procedure is summarized in Algorithm \ref{alg:divide_and_conquer}.
Once $\cW$ in \eq{eq:cW} is obtained through Algorithm \ref{alg:divide_and_conquer}, we can compute $\cZ$ in \eq{eq:cZ_new}, which is then used to calculate the proposed selective $p$-value in \eq{eq:selective_p_reformulated}.
The complete steps of the proposed method are outlined in Algorithm \ref{alg:si_seqfs_da}.

\section{Extensions to Backward SeqFS and Criteria for Optimal Model Selection} \label{sec:extensions}

In this section, we further enhance the proposed SI-SeqFS-DA method by extending it to Backward SeqFS and incorporating optimal model selection criteria, such as AIC, BIC, and adjusted $R^2$, to automatically determine the number of selected features $K$.

\subsection{Backward SeqFS}

The Backward SeqFS algorithm begins with a model that incorporates all available features and progressively assesses their relevance. 
In each iteration, it removes the least relevant feature based on a predefined criterion. 
This process continues until the feature set is reduced to a specified number $K$ of selected features.
At step $k \in \{p, p - 1, \dots, K + 1\}$, the feature $j_k$ is removed as 
\begin{align}
    j_k = \argmin \limits_{j \in \mathcal{M}_{k}} ~ {\rm RSS}\Big (\tilde{\bm Y}, \tilde{X}_{\mathcal{M}_{k} \setminus \{j\}} \Big ),
\end{align}
where $\mathcal{M}_p = [p].$
Similar to the truncation region $\cZ$ described in \eq{eq:cZ} of Sec. \ref{sec:proposed_method}, the truncation region $\cZ^{\rm bw}$ for the case of SI in Backward SeqFS-DA is defined as:
\begin{align} \label{eq:cZ_backward}
	\cZ^{\rm bw} 
	& = \Big \{ 
	z \in \RR 
	\mid 
	\cM^{\rm bw}_{\bm a + \bm b z} = \cM_{\rm obs}^{\rm bw} 
	\Big \} \nonumber \\ 
	& = 
	\bigcup \limits_{(u, v) \in \cW^{\rm bw} }
	\Big \{ 
	z \in \RR 
	\mid 
	\cT_{\bm a + \bm b z} = \cT_u,
	\cO^{\rm bw} _{\bm a + \bm b z} = \cO^{\rm bw} _v
	\Big \}, 
\end{align}
where 
$
	\cW^{\rm bw} = 
	\Big \{ 
		(u, v) \mid \cA \big(\cT_u, \cO_v ^{\rm bw}\big ) = \cM_{\rm obs}^{\rm bw}
	\Big \}
$.
Here, $\cM^{\rm bw}_{\bm a + \bm b z}$ denotes the set of $K$ features selected by Backward SeqFS-DA on the data $\bm a + \bm b z$, the order sequence $\cO^{\rm bw}_{\bm a + \bm b z}$ in the backward feature elimination process is defined as:
\begin{align*}
	\cO^{\rm bw}_{\bm a + \bm b z}
	= \Big (
	\cM_p, \cM_{p - 1}, ..., \cM_{K}
	\Big ).
\end{align*}
According to \eq{eq:cZ_backward}, the sub-problem is defined as follows:
\begin{align} \label{eq:cZbw_extra_condition}
	\cZ^{\rm bw}_{u, v} = 
	\Big \{z \in \RR 
	\mid 
	\cT_{\bm a + \bm b z} = \cT_u,
	\cO^{\rm bw}_{\bm a + \bm b z} = \cO^{\rm bw}_v 
	\Big \},
\end{align}
which can be decomposed into
\begin{align} \label{eq:Z_u_and_Z_v_bw}
	\cZ^{\rm bw}_{u, v}  = \cZ_u \cap \cZ^{\rm bw}_v,
\end{align}
where 
$
	\cZ_u  = 
	\Big \{ 
	z \in \RR
	\mid 
	\cT_{\bm a + \bm b z} = \cT_u
	\Big \}
$ is derived from Lemma \ref{lemma:cZ_u}
and
$
	\cZ_v^{\rm bw} = 
	\Big \{ 
	z  \in \RR
	\mid 
	\cO_{\bm a + \bm b z}^{\rm bw} = \cO_v^{\rm bw}
	\Big \}.
$ is obtained from the following Lemma \ref{lemma:cZBS_v}.

\begin{lemma}\label{lemma:cZBS_v}

Let us define the backward SeqFS problem after $\cT_u$ transportation as, for any $k \in \{p, p - 1, \dots, K + 1\}$:
\begin{align*}
    j_k = \argmin \limits_{j \in \mathcal{M}_{k}} 
{\rm RSS} \Big (\tilde{\bm Y}_u(z), \tilde{X}_{u_{\mathcal{M}_{k} \setminus \{j\}}} \Big ),
\end{align*}
where $\tilde{\bm Y}_u(z) = \Omega_u \bm Y (z)$ and $\tilde{X}_{u_{\mathcal{M}_{k} \setminus \{j\}}} = (\Omega_u X)_{\mathcal{M}_{k} \setminus \{j\}}$.
The set $\cZ_v$ in \eq{eq:Z_u_and_Z_v_bw} can be characterized by a set of quadratic inequalities w.r.t. $z$ described as follows:
\begin{align*}
    \cZ_v^{\rm bw} = \Big \{ 
	z \in \RR 
	\mid 
	\bm w^{\rm bw} + \bm r^{\rm bw} z + \bm o^{\rm bw} z^2 \leq \bm 0
	\Big \},
\end{align*}
where the vectors $\bm w^{\rm bw}$, $\bm r^{\rm bw}$, and $\bm o^{\rm bw}$ are defined in Appendix \ref{app:proof_Zv_bw}.

\end{lemma}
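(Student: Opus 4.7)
The plan is to mirror the argument of Lemma \ref{lemma:cZFS_v} exactly, with ``argmin over features to add'' replaced by ``argmin over features to remove''. Fix a candidate backward order sequence $\cO_v^{\rm bw} = (\cM_p, \cM_{p-1}, \dots, \cM_K)$, where $\cM_{k-1} = \cM_k \setminus \{j_k\}$. The event $\{\cO_{\bm a + \bm b z}^{\rm bw} = \cO_v^{\rm bw}\}$ is equivalent to requiring that at every step $k \in \{p, p-1, \dots, K+1\}$ and every competing index $j' \in \cM_k \setminus \{j_k\}$, the inequality
\begin{equation*}
	{\rm RSS}\Big(\tilde{\bm Y}_u(z), \tilde X_{u_{\cM_k \setminus \{j_k\}}}\Big) \;\leq\; {\rm RSS}\Big(\tilde{\bm Y}_u(z), \tilde X_{u_{\cM_k \setminus \{j'\}}}\Big)
\end{equation*}
holds simultaneously. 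The goal is then to show that each such inequality is quadratic in $z$, and to stack them into the promised vector form.

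The key step is to exploit the fact that $\tilde{\bm Y}_u(z) = \Omega_u(\bm a + \bm b z)$ is \emph{affine} in $z$ while RSS is a quadratic form. Concretely, for any index set $\cN \subseteq [p]$,
\begin{equation*}
	{\rm RSS}\big(\tilde{\bm Y}_u(z), \tilde X_{u_\cN}\big)
	= \big\|(I - P_{\tilde X_{u_\cN}})\Omega_u(\bm a + \bm b z)\big\|_2^2
	= \alpha_\cN + \beta_\cN z + \gamma_\cN z^2,
\end{equation*}
where $\alpha_\cN, \beta_\cN, \gamma_\cN$ are explicit scalars depending only on $\Omega_u$, $\bm a$, $\bm b$, $X$, and $\cN$ (with $\gamma_\cN \geq 0$ since it is a squared norm). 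Taking the difference of the two RSS expressions for the pair $(j_k, j')$ and moving everything to one side produces a quadratic $w_{k,j'} + r_{k,j'} z + o_{k,j'} z^2 \leq 0$. Collecting these coefficients over all $(k, j')$ pairs into vectors $\bm w^{\rm bw}$, $\bm r^{\rm bw}$, $\bm o^{\rm bw}$ (as formalized in Appendix \ref{app:proof_Zv_bw}) yields the stated characterization.

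The conceptual content is identical to the forward case, so the main obstacles are purely bookkeeping: (i) indexing the $\sum_{k=K+1}^{p}(k-1)$ pairwise inequalities consistently, and (ii) verifying that the projection-based coefficients $(\alpha_\cN, \beta_\cN, \gamma_\cN)$ combine correctly after subtracting the two RSS values so that the resulting $\bm w^{\rm bw}, \bm r^{\rm bw}, \bm o^{\rm bw}$ match the definitions given in Appendix \ref{app:proof_Zv_bw}. Since the elimination rule is a pointwise argmin exactly like the addition rule in Lemma \ref{lemma:cZFS_v}, no new analytical machinery is required and the desired quadratic characterization of $\cZ_v^{\rm bw}$ follows.
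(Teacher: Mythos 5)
Your proposal matches the paper's proof: both reduce the event $\{\cO^{\rm bw}_{\bm a + \bm b z} = \cO^{\rm bw}_v\}$ to the pairwise RSS inequalities at each elimination step, exploit the affinity of $\tilde{\bm Y}_u(z) = \Omega_u(\bm a + \bm b z)$ in $z$ to write each RSS difference as a quadratic $w^{\rm bw}_{j_k,j} + r^{\rm bw}_{j_k,j} z + o^{\rm bw}_{j_k,j} z^2 \leq 0$, and stack the coefficients over all $(k,j)$ pairs, exactly mirroring the forward case of Lemma \ref{lemma:cZFS_v}. The argument is correct and essentially identical to the one in Appendix \ref{app:proof_Zv_bw}.
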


\subsection{Criteria for Automatically Determining the Number of Selected Features}

\paragraph{\textbf{AIC}.}
The goal of the AIC criterion for SeqFS-DA is to identify the model with the lowest AIC score among the models $\mathcal{M}_K$ selected by SeqFS after DA, where $K \in \{1,\dots, p\}$. 
In the case of the model with Gaussian errors, the AIC score for a $\cM_K$ is defined as
\begin{align*}
   {\rm AIC} \big (\cM_K, \tilde{\bm Y} \big ) = 
    \big (\tilde{\bm Y} - \tilde{X}_{\cM_K} \hat{\bm \beta}_{\cM_K} \big)^\top
    \Sigma^{-1}
    \big (\tilde{\bm Y} - \tilde{X}_{\cM_K} \hat{\bm \beta}_{\cM_K} \big) + 2 |\cM_K| 
\end{align*}
where 
$|\cM_K|$ is the number of selected features and 
$
\hat{\bm \beta}_{\cM_K}
= 
\big (\tilde{X}_{\cM_K}^\top \tilde{X}_{\cM_K} \big )^{-1} \tilde{X}_{\cM_K}^\top \tilde{\bm Y}.
$
The AIC can be re-written as:
\begin{align*}
	{\rm AIC}\big (\cM_K, \tilde{\bm Y} \big) = 
	\tilde{\bm Y}^T P_{\tilde{X}_{\cM_K}}^{\perp^\top} \Sigma^{-1} P_{\tilde{X}_{\cM_K}}^{\perp} \tilde{\bm Y}
	+ 2 |\cM_K|, 
\end{align*}
where 
$
P_{\tilde{X}_{\cM_K}}^{\perp} = 
I_n - \tilde{X}_{\cM_K} (\tilde{X}_{\cM_K}^\top \tilde{X}_{\cM_K} \big )^{-1} \tilde{X}_{\cM_K}^\top
$.
Let us denote by $\cM_{\hat{K}^{\rm AIC}}$ is the optimal model selected by AIC, the additional event of AIC-based model selection after $\cT_u$ transportation is defined as
\begin{align*}
	\cZ_u^{\rm AIC} = 
	\left \{ 
		z \in \RR \mid 
		{\rm AIC}\Big (\cM_{\hat{K}^{\rm AIC}}, \tilde{\bm Y}_u(z) \Big) \leq {\rm AIC}\Big (\cM_{K}, \tilde{\bm Y}_u(z) \Big) \text{ for any } K \in [p]
	\right \}.
\end{align*}
Since the ${\rm AIC}\Big (\cM_{ {K}}, \tilde{\bm Y}_u(z) \Big)$ is a quadratic function w.r.t. $z$, the $\cZ_u^{\rm AIC}$ can be identified by solving the system of quadratic inequalities.

\paragraph{\textbf{BIC}.} The BIC, derived from a Bayesian perspective, is similar to the AIC but includes an additional penalty term that accounts for the sample size.
The BIC is defined as:
\begin{align*}
    {\rm BIC} \big (\cM_K, \tilde{\bm Y} \big ) &= 
    \big (\tilde{\bm Y} - \tilde{X}_{\cM_K} \hat{\bm \beta}_{\cM_K} \big)^\top
    \Sigma^{-1}
    \big (\tilde{\bm Y} - \tilde{X}_{\cM_K} \hat{\bm \beta}_{\cM_K} \big) + \log(n_s + n_t) |\cM_K| \notag\\
    &= \tilde{\bm Y}^T P_{\tilde{X}_{\cM_K}}^{\perp^\top} \Sigma^{-1} P_{\tilde{X}_{\cM_K}}^{\perp} \tilde{\bm Y} + \log(n_s + n_t)|\cM_K|. %\label{eq:bic}
\end{align*}
Let us denote by $\cM_{\hat{K}^{\rm BIC}}$ is the optimal model selected by BIC, the additional event of BIC-based model selection after $\cT_u$ transportation is defined as
\begin{align*}
	\cZ_u^{\rm BIC} = 
	\left \{ 
		z \in \RR \mid 
		{\rm BIC}\Big (\cM_{\hat{K}^{\rm BIC}}, \tilde{\bm Y}_u(z) \Big) \leq {\rm BIC}\Big (\cM_{K}, \tilde{\bm Y}_u(z) \Big) \text{ for any } K \in [p]
	\right \}.
\end{align*}
Similar to the case of AIC, the $\cZ_u^{\rm BIC}$ can be identified by solving the system of quadratic inequalities.

\paragraph{\textbf{Adjusted $R^2$.}} The adjusted $R^2$ is another commonly used criterion for model selection, which is defined as follows:
\begin{align*}
    \text{Adjusted $R^2$} &= 1 - \frac{{\rm RSS}\big (\tilde{\bm Y}, \tilde{X}_{\cM_K} \big) / (n - |\cM_K| -1)}{{\rm TSS}\big(\tilde{\bm Y}\big)/ (n -1) },
\end{align*}
where ${\rm TSS}\big(\tilde{\bm Y}\big)$ is the \emph{total sum of squares} for the $\tilde{\bm Y}$.
Maximizing the adjusted $R^2$ is equivalent to minimizing 
\begin{align*}
%\label{eq:adjr2}
    \frac{{\rm RSS}\big(\tilde{\bm Y}, \tilde{X}_{\cM_{K}}\big)}{n - |\cM_K| - 1} 
    = 
    \frac{\tilde{\bm Y}^\top P_{\tilde{X}_{\cM_K}}^{\perp^\top} P_{\tilde{X}_{\cM_K}}^{\perp} \tilde{\bm Y}}{n - |\cM_K| - 1}.
\end{align*}
Let us denote by $\cM_{\hat{K}^{\rm adj}}$ is the optimal model selected by the adjusted $R^2$, the additional event of adjusted $R^2$-based model selection after $\cT_u$ transportation is:
\begin{align*}
	\cZ_u^{\rm adj} = 
	\left \{ 
		z \in \RR ~ \Big | ~
		\frac{{\rm RSS}\big(\tilde{\bm Y}_u(z), \tilde{X}_{\cM_{\hat{K}^{\rm adj}}}\big)}{n - |\cM_{\hat{K}^{\rm adj}}| - 1} 
		\leq
		\frac{{\rm RSS}\big(\tilde{\bm Y}_u(z), \tilde{X}_{\cM_{K}}\big)}{n - |\cM_K| - 1} 
		\text{ for any } K \in [p]
	\right \},
\end{align*}
which can also be obtained by solving the system of quadratic inequalities w.r.t. $z$.

%The sub-problem of SFS after DA with stopping criterion is obtained by $\cZ_{u,v} \cap \cZ_v^{\text{crit}}$, 
%\begin{align*}
%\label{eq:ZC}
%    \cZ_v^{\text{crit}} &= \left\{ z \in \mathbb{R} \mid C_{\bm a +\bm b z} = C_v\right\} \notag 
%\\
%    &= \left\{ z \in \mathbb{R} \mid \bm d + \bm e z + \bm g z^2\right\}
%\end{align}
%Depending on the chosen criterion, $C$ could be $C^{AIC}$, $C^{BIC}$ or $C^{Adj R^2}$. The proof of \ref{eq:ZC} is deferred to Appendix 
% \ref{}
% The sub-problem of SFS after DA with stopping criterion is obtained by $\cZ_{u,v} \cap \cZ^{\text{crit}}$ which $\cZ^{\text{crit}}$ represents $\cZ^{AIC}$, $\cZ^{BIC}$ or $\cZ^{Adj R^2}$ depending the chosen criterion. 
\section{Experiments}
\label{sec:experiments}

In this section, we demonstrate the performance of the proposed SI-SeqFS-DA approach.
We considered the following methods:

\begin{itemize}
	\item[$\bullet$] {\tt SI-SeqFS-DA:} the proposed method
	
	\item[$\bullet$] {\tt SI-SeqFS-DA-oc:} the proposed method focuses exclusively on a single sub-problem, i.e., over-conditioning, described in \S\ref{subsubsec:solving_each_subproblem}. This can be considered as an extension of the framework presented in \cite{lee2016exact} to our setting
	
	\item[$\bullet$] {\tt DS:} data splitting 
	
	\item[$\bullet$] {\tt Bonferroni:} the most popular multiple testing
	
	\item[$\bullet$] {\tt Naive:} traditional statistical inference
	
	\item[$\bullet$] {\tt No inference:} SeqFS after DA without inference
\end{itemize}

It is important to note that if a method fails to control the FPR at the desired level $\alpha$, it is considered \emph{invalid}, and its TPR becomes irrelevant. Additionally, a method with a high TPR inherently implies a low FNR.
In all experiments, we set $\alpha$ to 0.05.

\subsection{Numerical experiments}

\paragraph{\textbf{Synthetic data generation.}}

We generated the response vector $\bm{Y}^s$ in the source domain as
$\bm Y^s_i = {X^s_i}^\top \bm \beta^s + \veps$,
$X^s_i \sim \mathbb{N}(\bm 0, I_p), \forall i \in [n_s]$, and $\veps \sim \mathbb{N}(0, 1)$.
Similarly, the $\bm Y^t$ in the target domain was generated as 
$\bm Y^t_i = {X^t_i}^\top \bm \beta^t + \veps$ where
$X^t_i \sim \mathbb{N}(\bm 0, I_p)$.
We set $p = 5$ and $K = 3$. 
Regarding the FPR experiments, all elements of $\boldsymbol{\beta}^t$ were set to 0, and $n_s \in \{50, 100, 150, 200\}$. In regards to the TPR experiments, all elements of $\boldsymbol{\beta}^t$ in each test case were set to one of $\{1, 2, 3, 4\}$ and $n_s = 100$.
We set $n_t = 10$, indicating that the target data is limited.
In all experiments, the elements of $\bm{\beta}^s$ were set to 2. Since the statistical inference was conducted only on the target data, the values of $\bm{\beta}^s$ did not affect the results.
Each experiment was repeated 120 times.

\paragraph{\textbf{The results of FPRs and TPRs.}}
The results of FPR and TPR for the two cases of  Forward and Backward SeqFS after DA are shown in Figs. \ref{fig:forward_fpr_tpr} and \ref{fig:backward_fpr_tpr}. In the plots on the left, the methods {\tt SI-SeqFS-DA}, {\tt SI-SeqFS-DA-oc}, {\tt Bonferroni}, and {\tt DS} successfully controlled the FPR, while the {\tt Naive} and {\tt No Inference} methods \textit{failed} to do so. Since the {\tt Naive} and {\tt No Inference} methods could not control the FPR, their TPRs are not considered further.
In the plots on the right, the {\tt SI-SeqFS-DA} method achieves the highest TPR compared to the other methods in all cases, indicating that it has the lowest FNR.
We additionally present the FPRs and TPRs for the SeqFS-DA using the AIC, BIC, and adjusted $R^2$ in Figs. \ref{fig:FS_AIC_fpr_tpr}, \ref{fig:BS_AIC_fpr_tpr}, \ref{fig:FS_BIC_fpr_tpr}, \ref{fig:BS_BIC_fpr_tpr}, \ref{fig:FS_AdjR2_fpr_tpr}, and \ref{fig:BS_AdjR2_fpr_tpr}.
In all cases, the proposed {\tt SI-SeqFS-DA} successfully controlled the FPR while achieving the highest TPR compared to the competitors.

% FORWARD K-Step FPR TPR

\begin{figure}[!t]
    \centering
    \begin{subfigure}[b]{0.47\linewidth}  % Reduced width for better spacing
        \includegraphics[width=\linewidth]{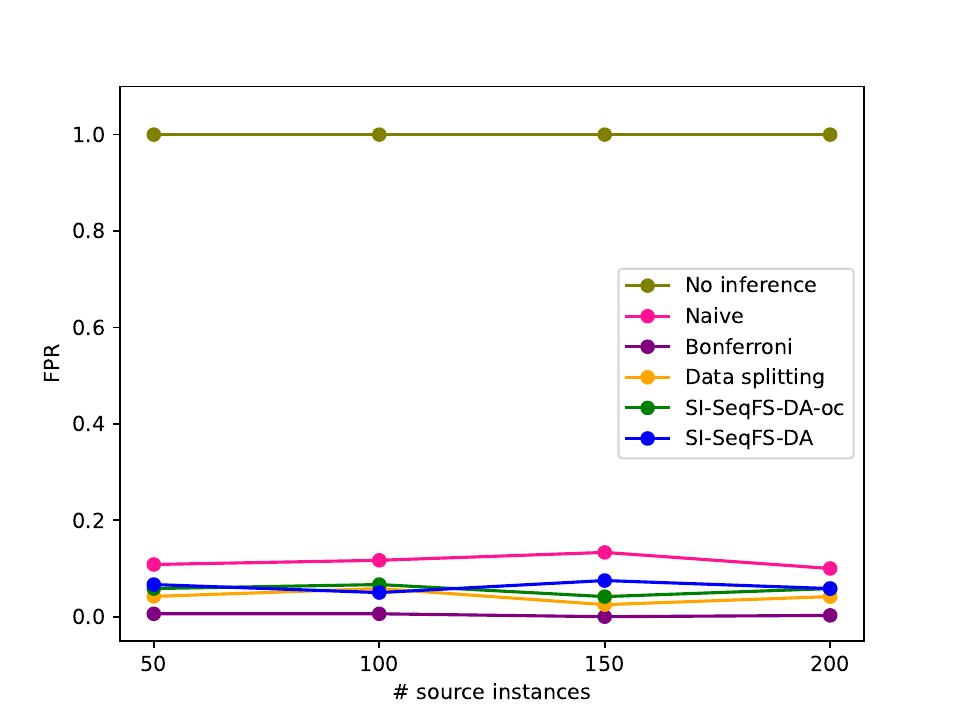}  % Make sure the path is correct
        \caption{FPR}
        \label{fig:fpr_univariate_forward}
    \end{subfigure}
    \hspace{0.02\linewidth}  % Add some space between subfigures
    \begin{subfigure}[b]{0.47\linewidth}
        \includegraphics[width=\linewidth]{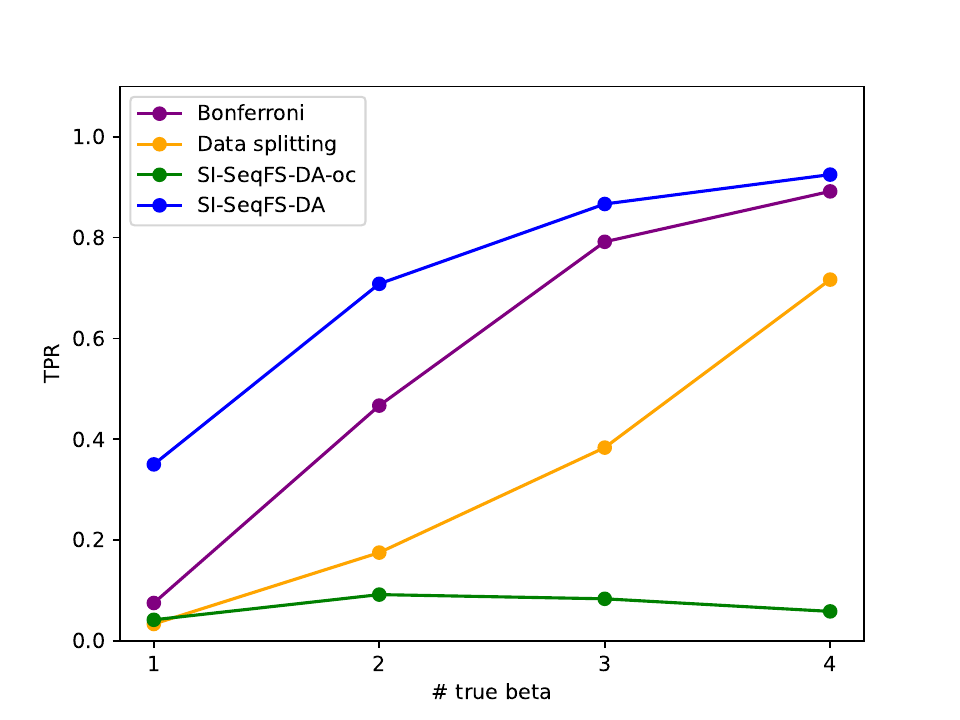}
        \caption{TPR}
        \label{fig:tpr_univariate_forward}
    \end{subfigure}
%    \vspace{-4pt}
    \caption{FPR and TPR in the case of Forward SeqFS}
    \label{fig:forward_fpr_tpr}
    \vspace{-10pt}
\end{figure}

% BACKWARD K-Step FPR TPR

\begin{figure}[!t]
    \centering
    \begin{subfigure}[b]{0.47\linewidth}  % Reduced width for better spacing
        \includegraphics[width=\linewidth]{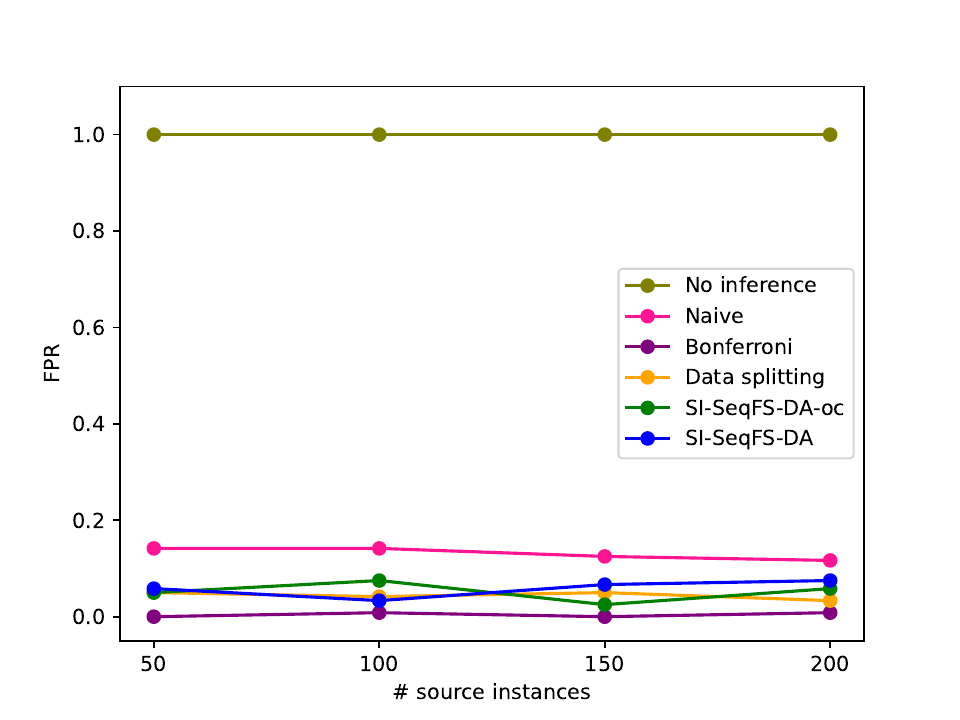}  % Make sure the path is correct
        \caption{FPR}
        \label{fig:fpr_univariate_backward}
    \end{subfigure}
    \hspace{0.02\linewidth}  % Add some space between subfigures
    \begin{subfigure}[b]{0.47\linewidth}
        \includegraphics[width=\linewidth]{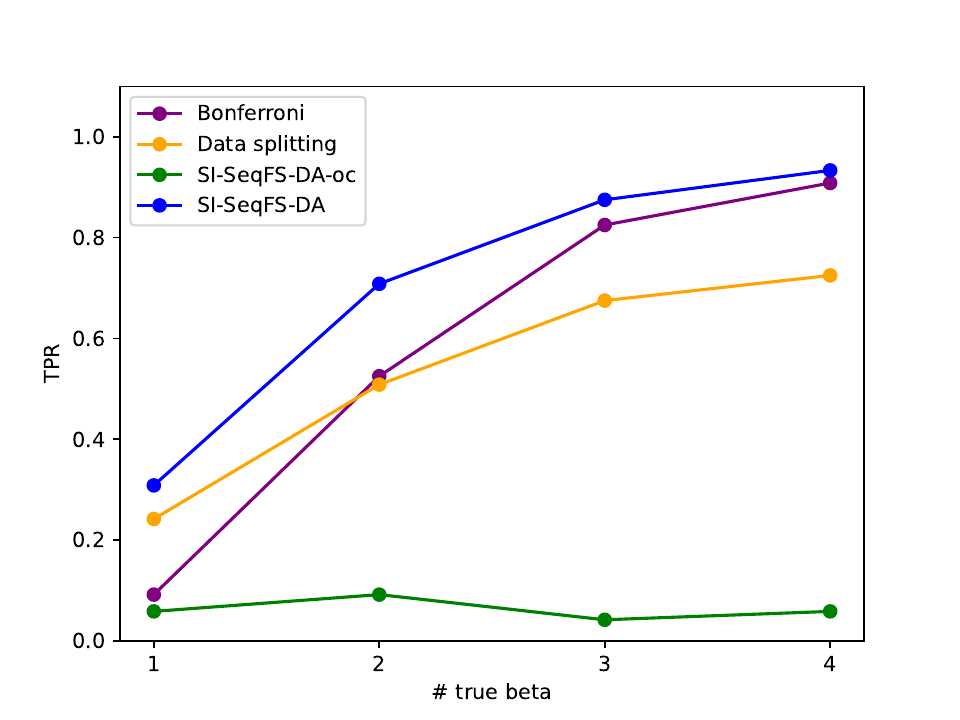}
        \caption{TPR}
        \label{fig:tpr_univariate_backward}
    \end{subfigure}
%    \vspace{-4pt}
    \caption{FPR and TPR in the case of Backward SeqFS}
    \label{fig:backward_fpr_tpr}
    \vspace{-10pt}
\end{figure}
% 
% 
% 
% 
% 
% 
% 

% 
% 
% 
% 
% 
% 

% \red{Results on AIC}
% FORWARD SeqFS with Criteria
% AIC
\begin{figure}[!t]
    \centering
    \begin{subfigure}[b]{0.47\linewidth}  % Reduced width for better spacing
        \includegraphics[width=\linewidth]{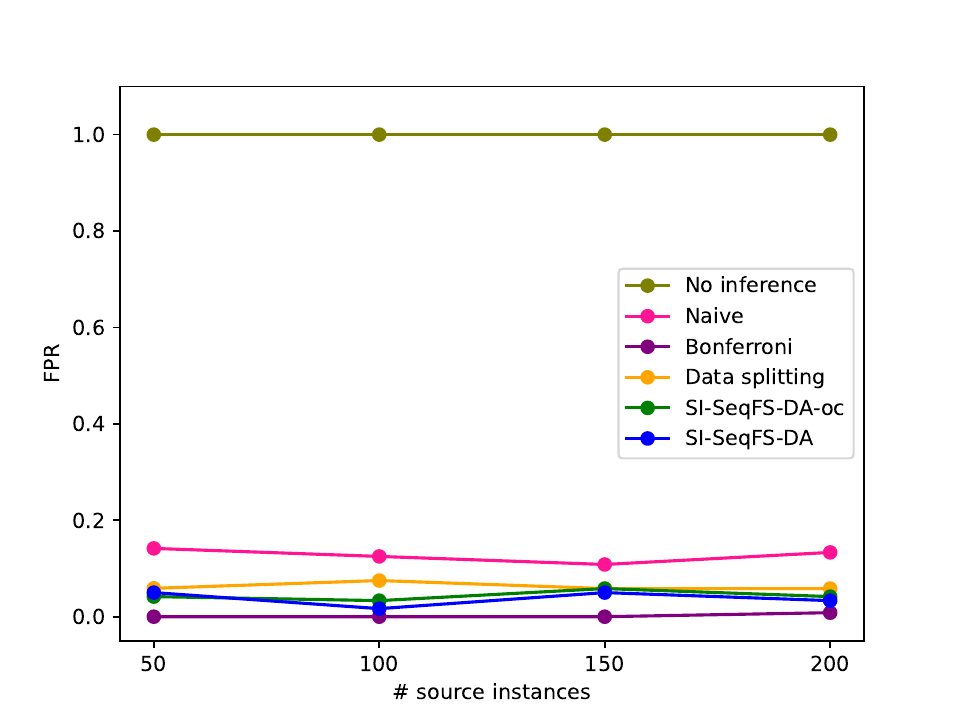}  % Make sure the path is correct
        \caption{FPR}
        \label{fig:FPR_FS_AIC}
    \end{subfigure}
    \hspace{0.02\linewidth}  % Add some space between subfigures
    \begin{subfigure}[b]{0.47\linewidth}
        \includegraphics[width=\linewidth]{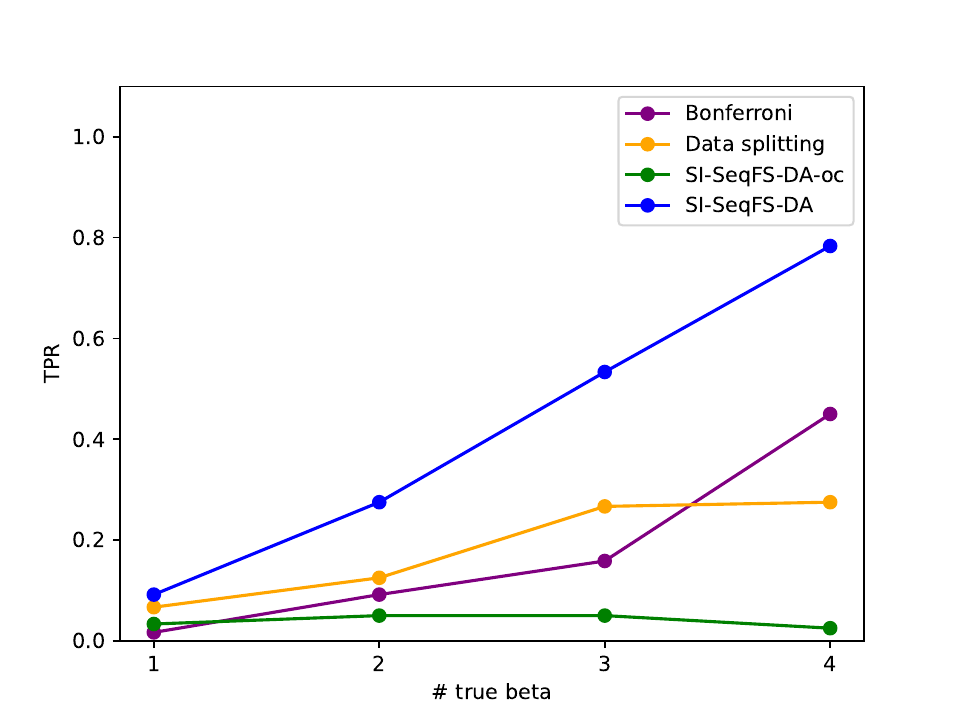}
        \caption{TPR}
        \label{fig:TPR_FS_AIC}
    \end{subfigure}
%    \vspace{-4pt}
    \caption{FPR and TPR in the case of Forward SeqFS with AIC}
    \label{fig:FS_AIC_fpr_tpr}
    \vspace{-10pt}
\end{figure}

% BACKWARD SeqFS with Criteria
% AIC
\begin{figure}[!t]
    \centering
    \begin{subfigure}[b]{0.47\linewidth}  % Reduced width for better spacing
        \includegraphics[width=\linewidth]{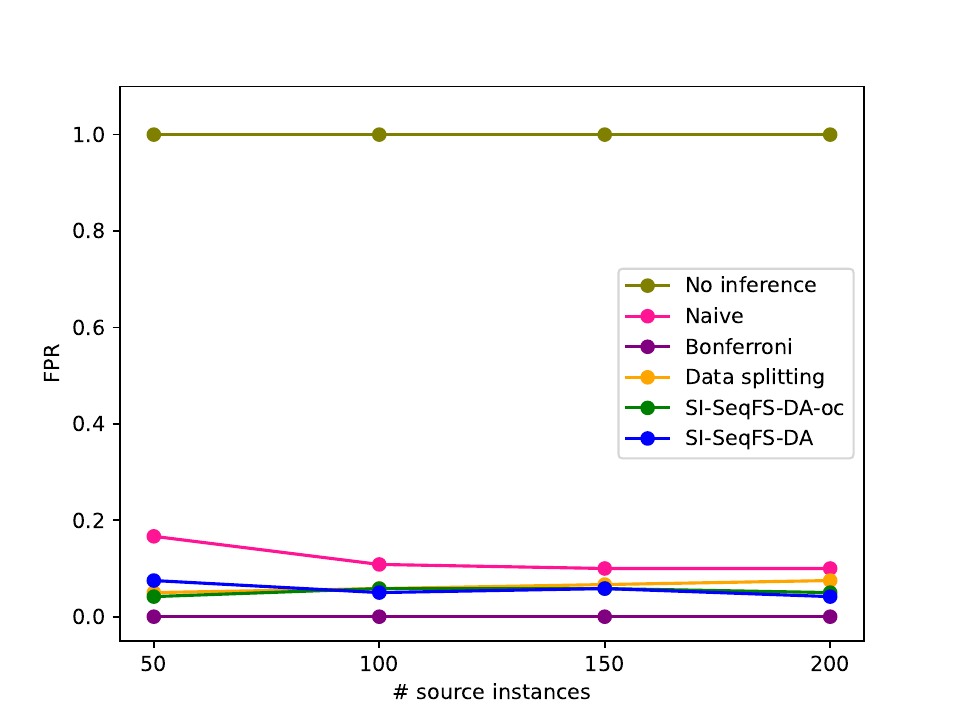}  % Make sure the path is correct
        \caption{FPR}
        \label{fig:FPR_BS_AIC}
    \end{subfigure}
    \hspace{0.02\linewidth}  % Add some space between subfigures
    \begin{subfigure}[b]{0.47\linewidth}
        \includegraphics[width=\linewidth]{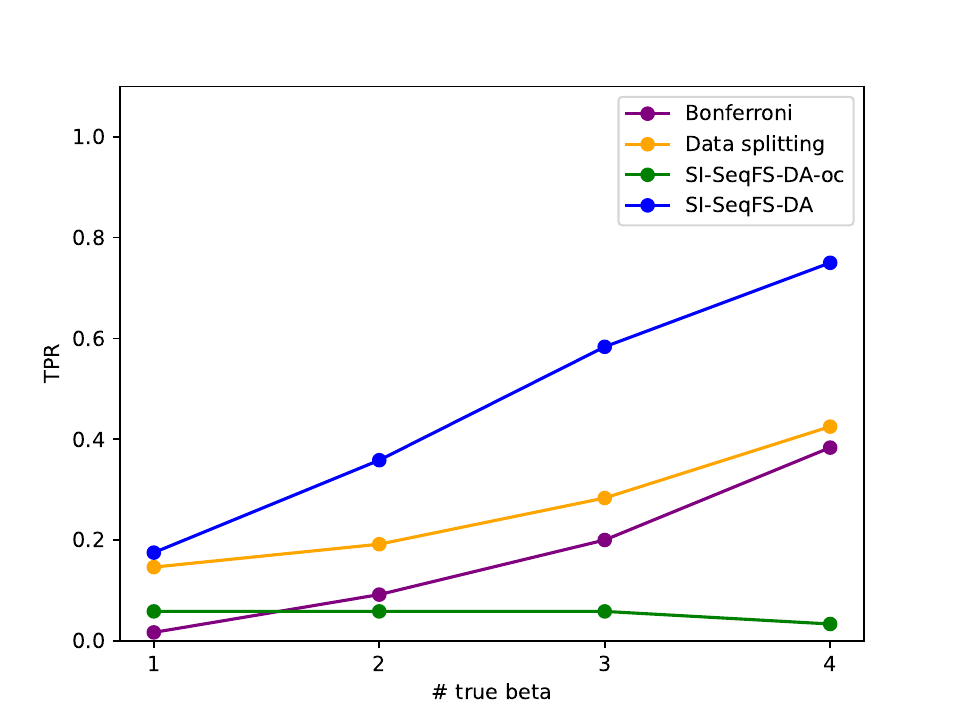}
        \caption{TPR}
        \label{fig:TPR_BS_AIC}
    \end{subfigure}
%    \vspace{-4pt}
    \caption{FPR and TPR in the case of Backward SeqFS with AIC}
    \label{fig:BS_AIC_fpr_tpr}
    \vspace{-10pt}
\end{figure}

% \red{Results on BIC}
% FORWARD BIC
% BIC
\begin{figure}[!t]
    \centering
    \begin{subfigure}[b]{0.47\linewidth}  % Reduced width for better spacing
        \includegraphics[width=\linewidth]{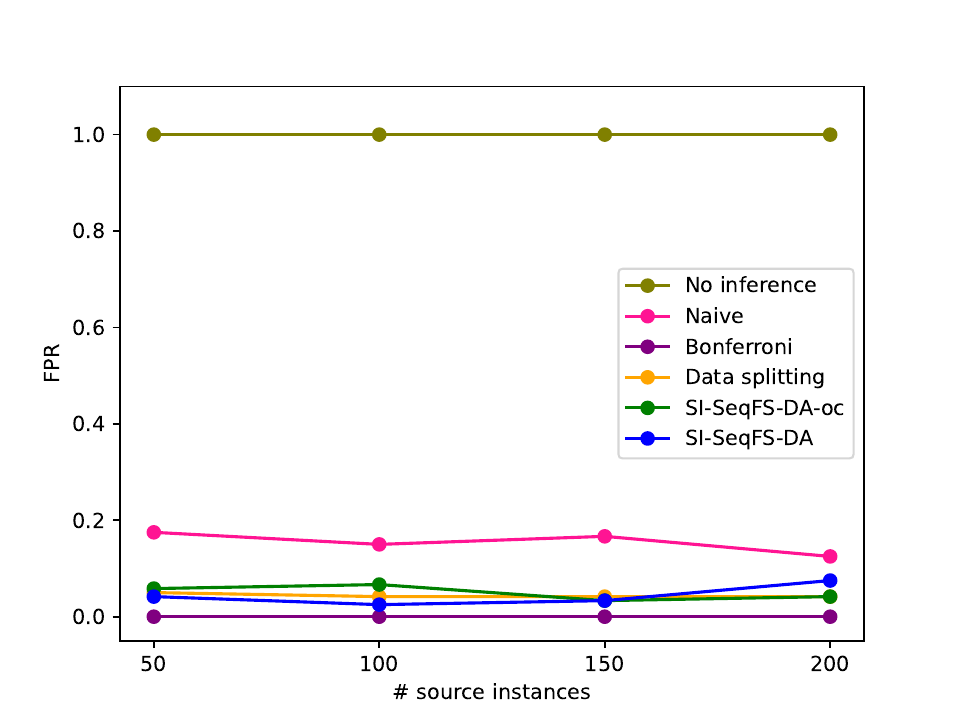}  % Make sure the path is correct
        \caption{FPR}
        \label{fig:FPR_FS_BIC}
    \end{subfigure}
    \hspace{0.02\linewidth}  % Add some space between subfigures
    \begin{subfigure}[b]{0.47\linewidth}
        \includegraphics[width=\linewidth]{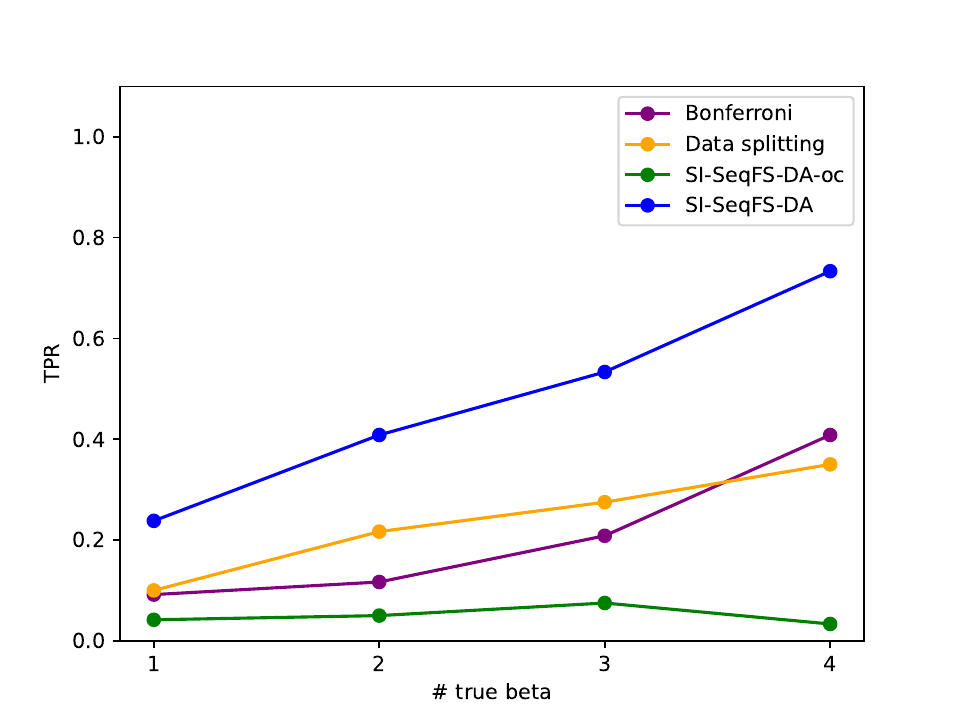}
        \caption{TPR}
        \label{fig:TPR_FS_BIC}
    \end{subfigure}
%    \vspace{-4pt}
    \caption{FPR and TPR in the case of Forward SeqFS with BIC}
    \label{fig:FS_BIC_fpr_tpr}
    \vspace{-10pt}
\end{figure}
% BACKWARD
% BIC
\begin{figure}[!t]
    \centering
    \begin{subfigure}[b]{0.47\linewidth}  % Reduced width for better spacing
        \includegraphics[width=\linewidth]{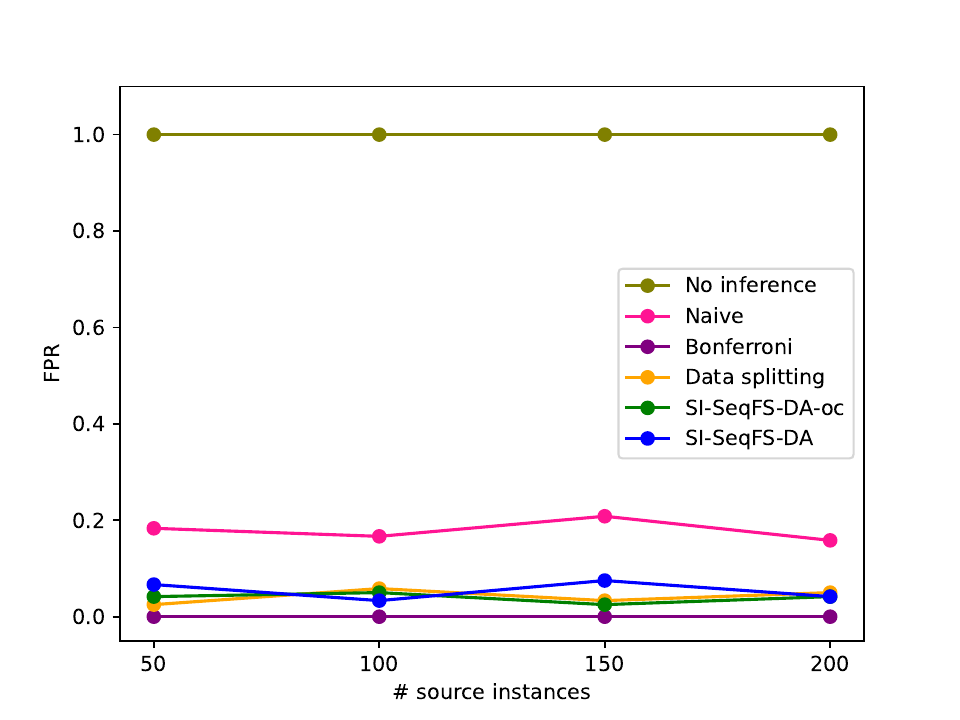}  % Make sure the path is correct
        \caption{FPR}
        \label{fig:FPR_BS_BIC}
    \end{subfigure}
    \hspace{0.02\linewidth}  % Add some space between subfigures
    \begin{subfigure}[b]{0.47\linewidth}
        \includegraphics[width=\linewidth]{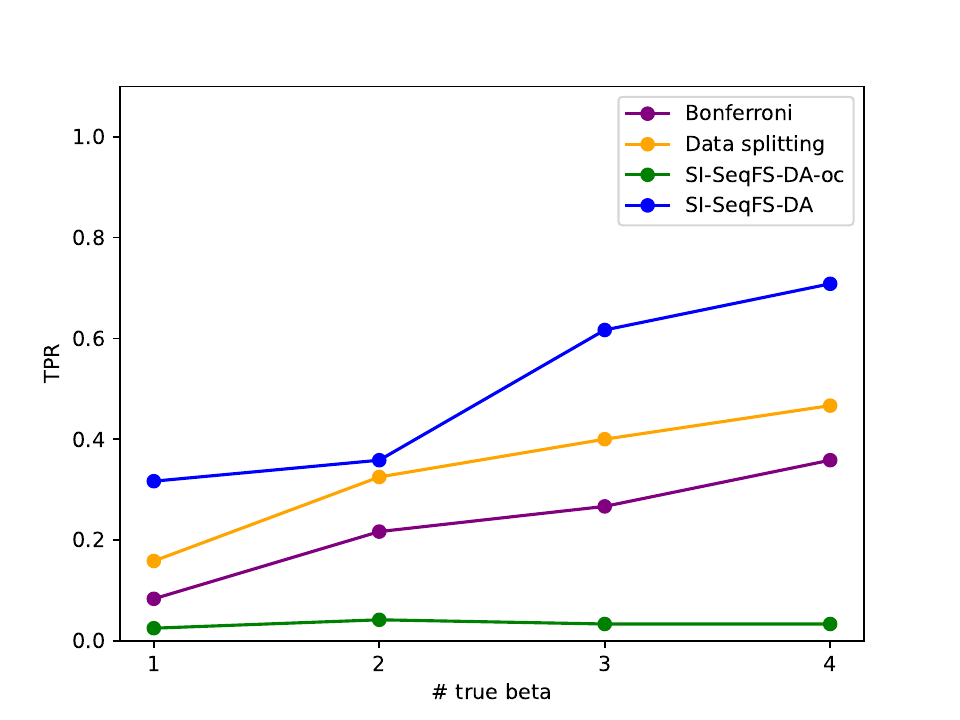}
        \caption{TPR}
        \label{fig:TPR_BS_BIC}
    \end{subfigure}
%    \vspace{-4pt}
    \caption{FPR and TPR in the case of Backward SeqFS with BIC}
    \label{fig:BS_BIC_fpr_tpr}
    \vspace{-10pt}
\end{figure}

% \red{Results on Adjusted R2}
% FORWARD
% AdjR2
\begin{figure}[!t]
    \centering
    \begin{subfigure}[b]{0.47\linewidth}  % Reduced width for better spacing
        \includegraphics[width=\linewidth]{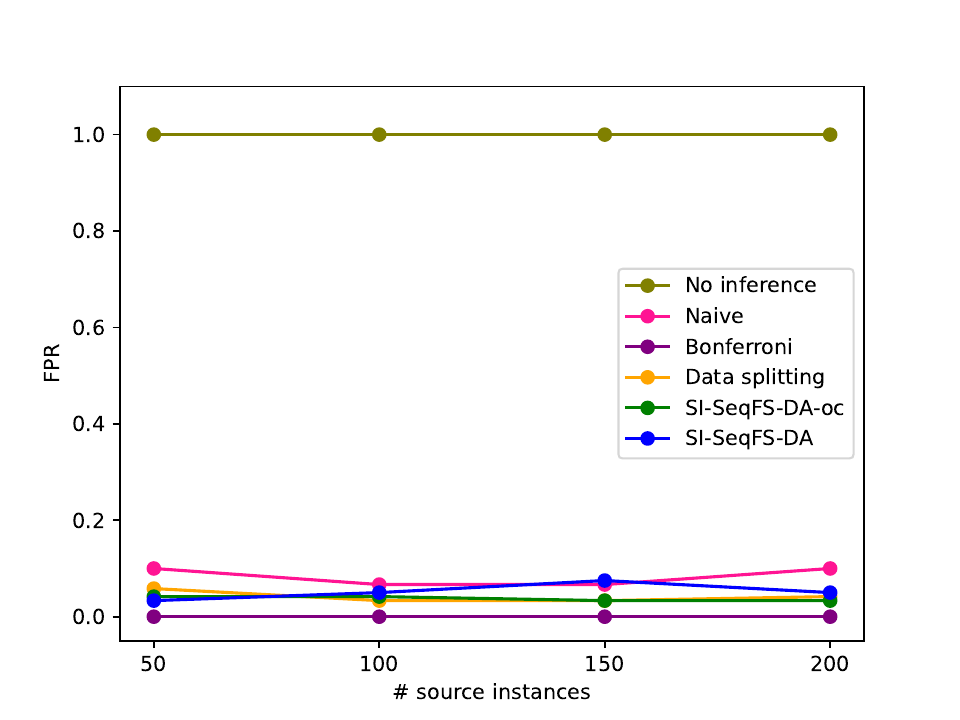}  % Make sure the path is correct
        \caption{FPR}
        \label{fig:FPR_FS_AdjR2}
    \end{subfigure}
    \hspace{0.02\linewidth}  % Add some space between subfigures
    \begin{subfigure}[b]{0.47\linewidth}
        \includegraphics[width=\linewidth]{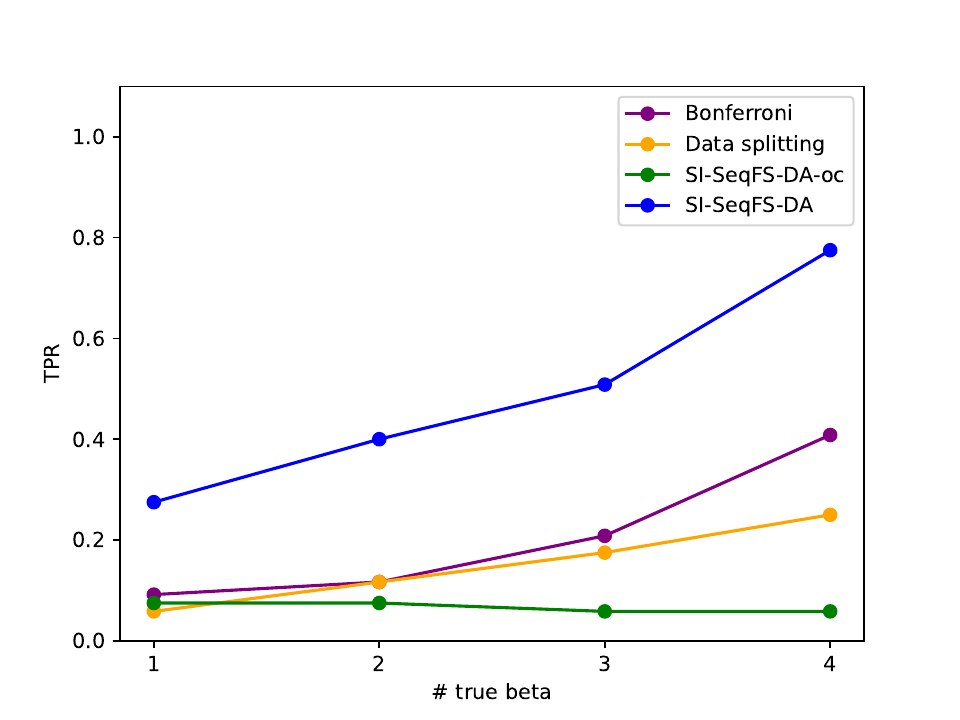}
        \caption{TPR}
        \label{fig:TPR_FS_AdjR2}
    \end{subfigure}
%    \vspace{-4pt}
    \caption{FPR and TPR in the case of Forward SeqFS with adjusted $R^2$}
    \label{fig:FS_AdjR2_fpr_tpr}
    \vspace{-10pt}
\end{figure}

% BACKWARD
% AdjR2
\begin{figure}[!t]
    \centering
    \begin{subfigure}[b]{0.47\linewidth}  % Reduced width for better spacing
        \includegraphics[width=\linewidth]{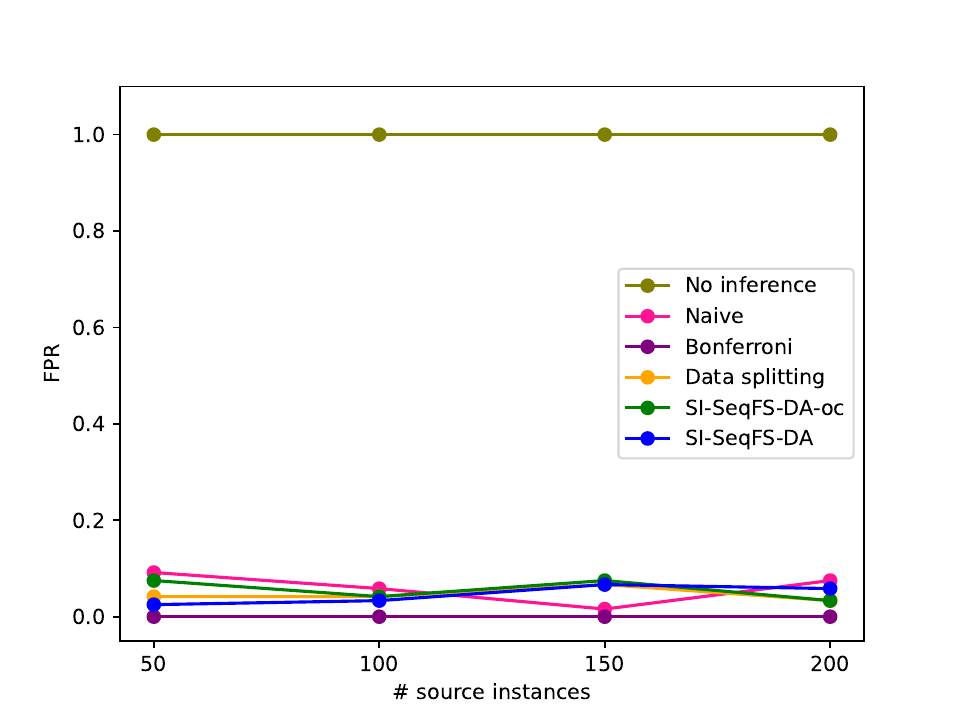}  % Make sure the path is correct
        \caption{FPR}
        \label{fig:FPR_BS_AdjR2}
    \end{subfigure}
    \hspace{0.02\linewidth}  % Add some space between subfigures
    \begin{subfigure}[b]{0.47\linewidth}
        \includegraphics[width=\linewidth]{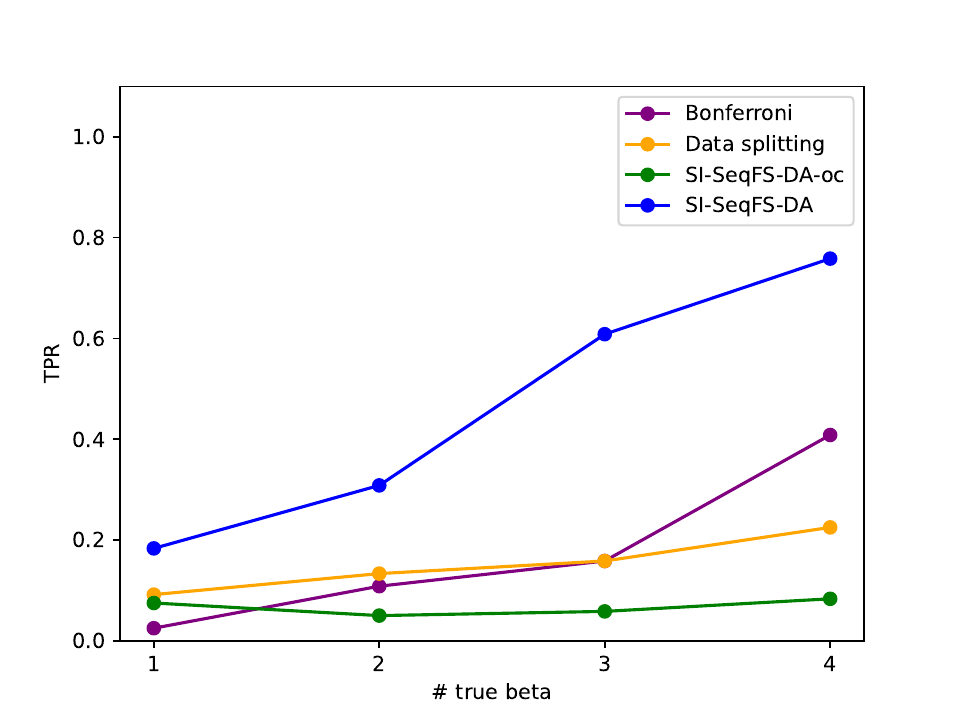}
        \caption{TPR}
        \label{fig:TPR_BS_AdjR2}
    \end{subfigure}
%    \vspace{-4pt}
    \caption{FPR and TPR in the case of Backward SeqFS with adjusted $R^2$}
    \label{fig:BS_AdjR2_fpr_tpr}
    \vspace{-10pt}
\end{figure}

\paragraph{\textbf{Computational time.}} In Figure \ref{fig:computation_cost}, we present boxplots showing the computation time for each \textit{p}-value, as well as the actual number of intervals of $z$ encountered on the line when constructing the truncation region $\mathcal{Z}$ with respect to $n_s$. The plots illustrate that the complexity of {\tt SI-SeqFS-DA} increases linearly w.r.t. $n_s$.

%\red{In the case of SI-SeqFS-DA with the AIC criterion, Figure \ref{fig:computation_cost_AIC} further illustrates the computation cost, emphasizing the linear increase in complexity with respect to $n_s$}

\begin{figure}[!t]
    \centering
    \begin{subfigure}[b]{0.47\linewidth}
        \includegraphics[width=\linewidth]{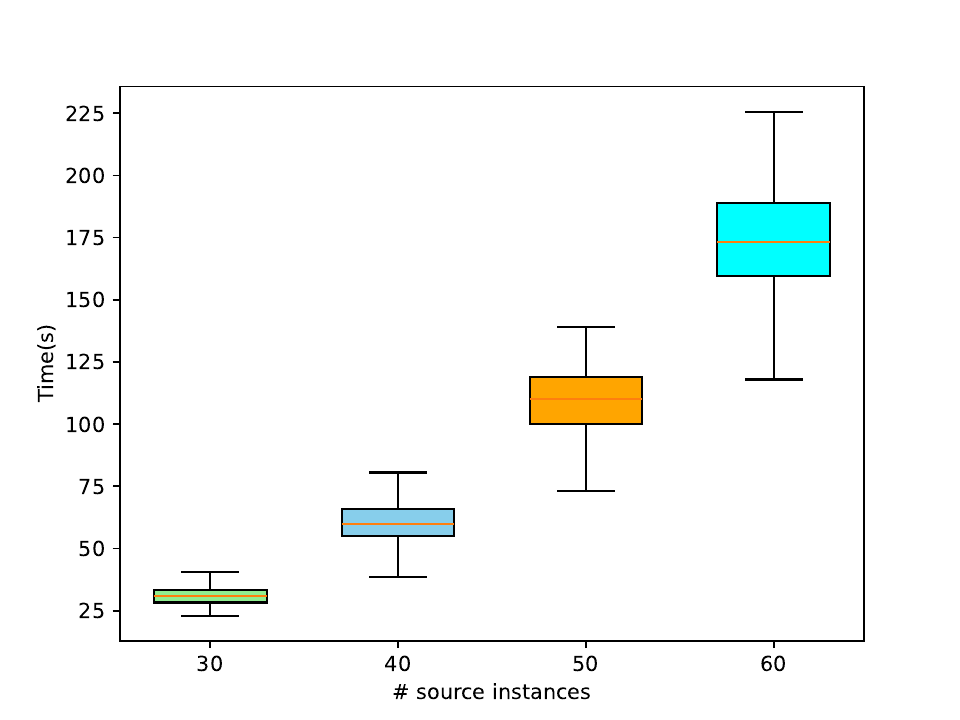}
        \caption{Computational time}
        \label{fig:time}
    \end{subfigure}
    \hspace{0.02\linewidth}
    \begin{subfigure}[b]{0.47\linewidth}
        \includegraphics[width=\linewidth]{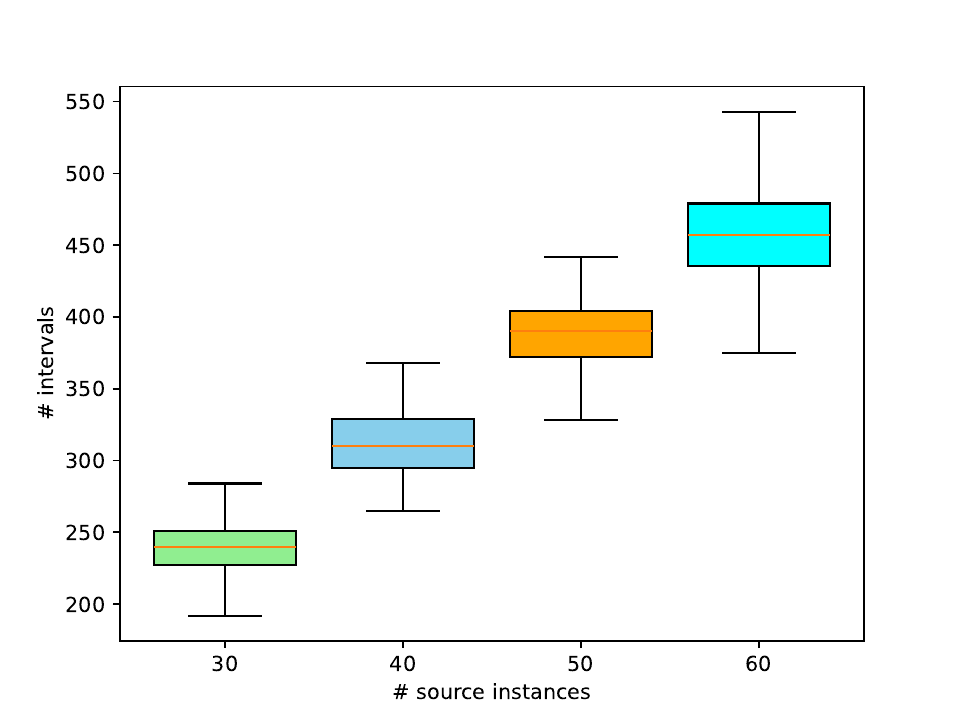}
        \caption{Encountered intervals}
        \label{fig:polytope}
    \end{subfigure}
%    \vspace{-4pt}
    \caption{Computational cost of the proposed SI-SeqFS-DA}
    \label{fig:computation_cost}
%    \vspace{-8pt}
\end{figure}

\paragraph{\textbf{Results on high-dimensional data.}}
We conducted experiments to evaluate the performance of SI-SeqFS-DA in high-dimensional settings. The setup included $p = \{1000, 1500, 2000, 2500 \}, n_s = 100$,  $n_t = 10$, and $K=4$. The results are shown in Fig. \ref{fig:FPR_highdim}.
We also present the computational cost in terms of time using a high-dimensional real-world dataset, as shown in Fig. \ref{fig:time_riboflavin}.
Specifically, we used the riboflavin production dataset, which contains 4,088 features and is available in \emph{High-Dimensional Statistics with a View Toward Applications in Biology}.
In this experiment, the source domain comprised data from \emph{riboflavinGrouped}, while the target domain comprised data from \emph{riboflavin}.
We set $p=4088$ and randomly selected instances from the source and target domains, with $n_s \in \{40, 60, 80, 100\}$. 

\begin{figure}[!t]
    \centering
    % \captionsetup[subfigure]{labelformat=empty}
    \begin{subfigure}[b]{0.47\linewidth}
        \includegraphics[width=\linewidth]{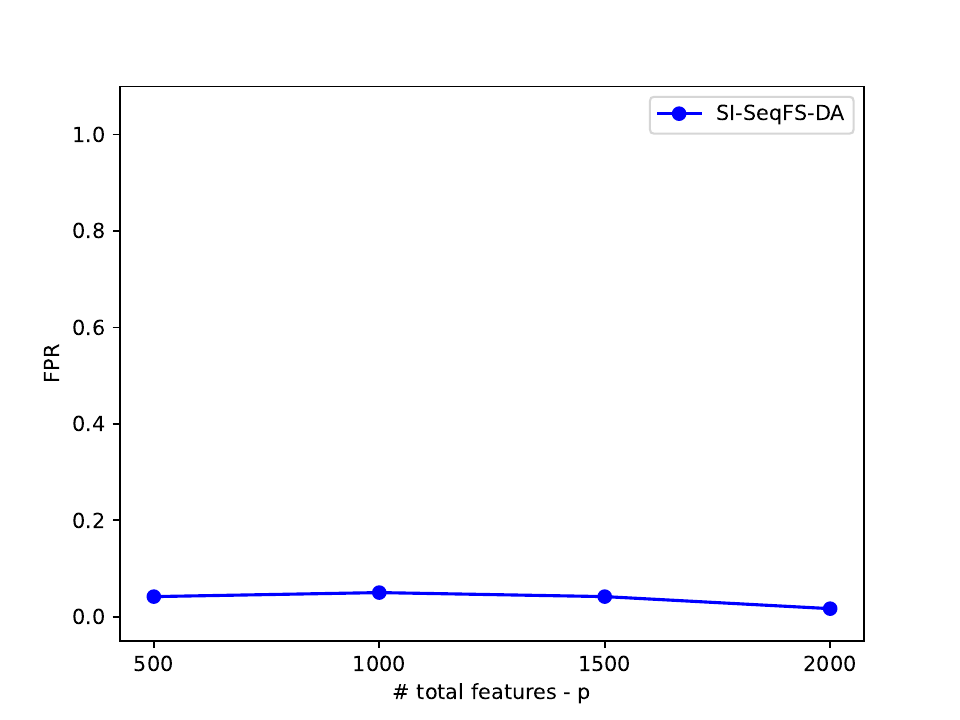}
        \caption{FPR in High-dimensional setting.}
        \label{fig:FPR_highdim}
    \end{subfigure}
    \hspace{0.02\linewidth}
    \begin{subfigure}[b]{0.47\linewidth}
        \includegraphics[width=\linewidth]{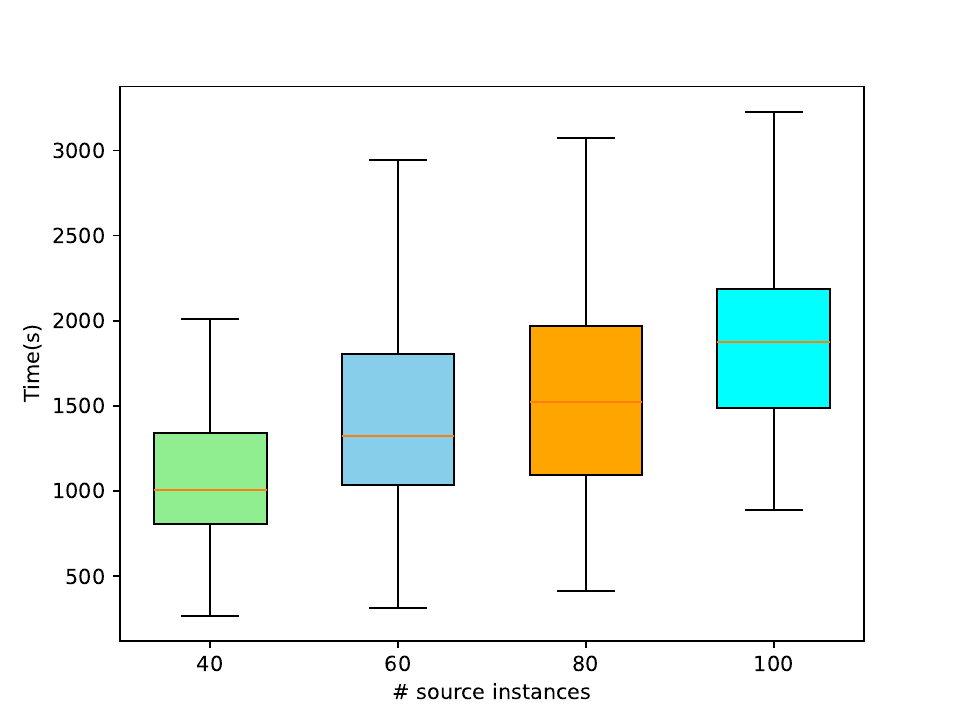}
        \caption{Computational time on riboflavin dataset}
        \label{fig:time_riboflavin}
    \end{subfigure}
%    \vspace{-4pt}
    \caption{SI-SeqFS-DA in high-dimensional setting}
    \label{fig:highdim}
%    \vspace{-8pt}
\end{figure}

\subsection{Results on Real-World Datasets}
\label{subsec:realdata}

We conducted comparisons on three real-world datasets: the Diabetes dataset \citep{efron2004least}, the Heart Failure dataset, and the Seoul Bike dataset, all of which are available in the UCI Machine Learning Repository.
In this section, we present the experimental results for Forward SeqFS after DA.
The additional results for the extended methods can be found in Appendix \ref{app:additional_experiments}.
For each dataset, we present the distribution of $p$-values for each feature.
We randomly selected instances from both the source and target domains, with $n_s = 100$ and $n_t = 15$.
Forward SeqFS was used for the analysis.
The results are shown in Figs. \ref{fig:diabetes}, \ref{fig:heart_failure}, and \ref{fig:Seoul_Bike}.
The $p$-values for {\tt Bonferroni} are nearly equal to one in almost all cases, indicating that this method is \emph{conservative}.
In all cases, the $p$-value of the proposed {\tt SI-SeqFS-DA} tends to be smaller than those of the competing methods. This demonstrates that {\tt SI-SeqFS-DA} exhibits the highest statistical power.

% Second row of subfigures 
\begin{figure}[!t]
    \centering

    \includegraphics[width=\linewidth]{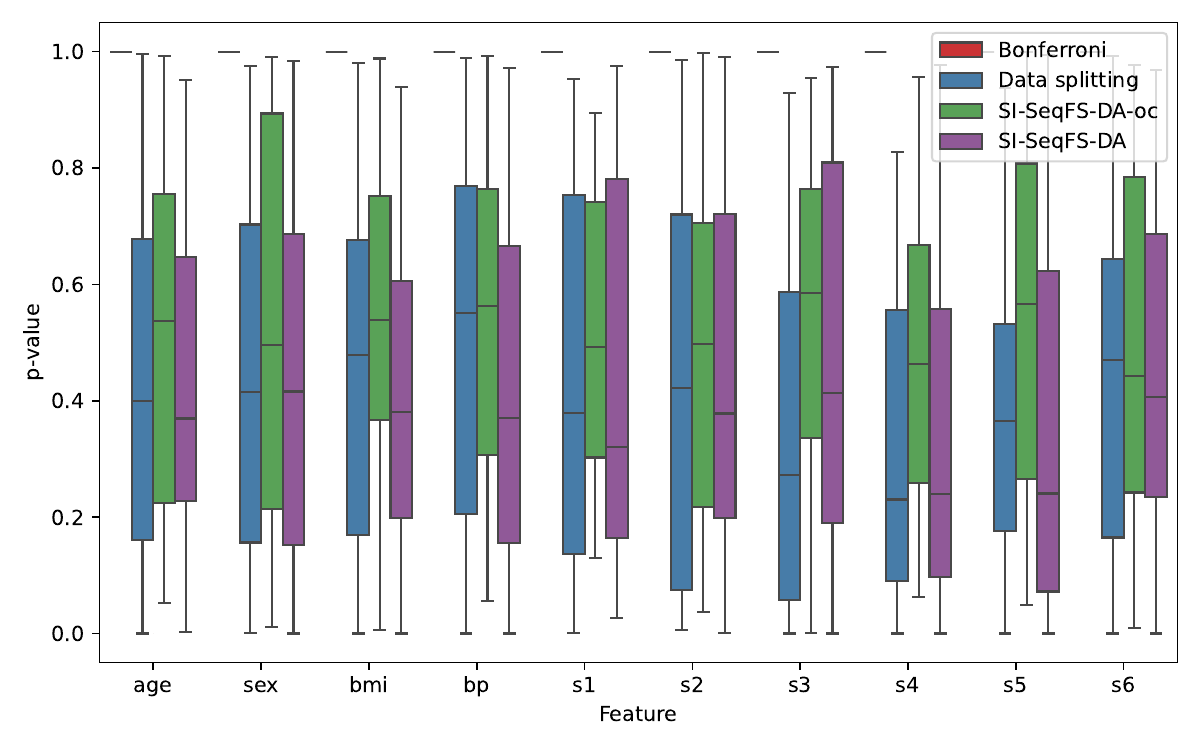}
    \vspace{-17pt}
    \caption{Diabetes dataset. The source domain consists of ``people over 50 years old'', while the target domain consists of ``people under 50 years old''.} 
    \label{fig:diabetes}
    \vspace{-8pt}
\end{figure}

\begin{figure}[!t]
    \centering
\includegraphics[width=\linewidth]{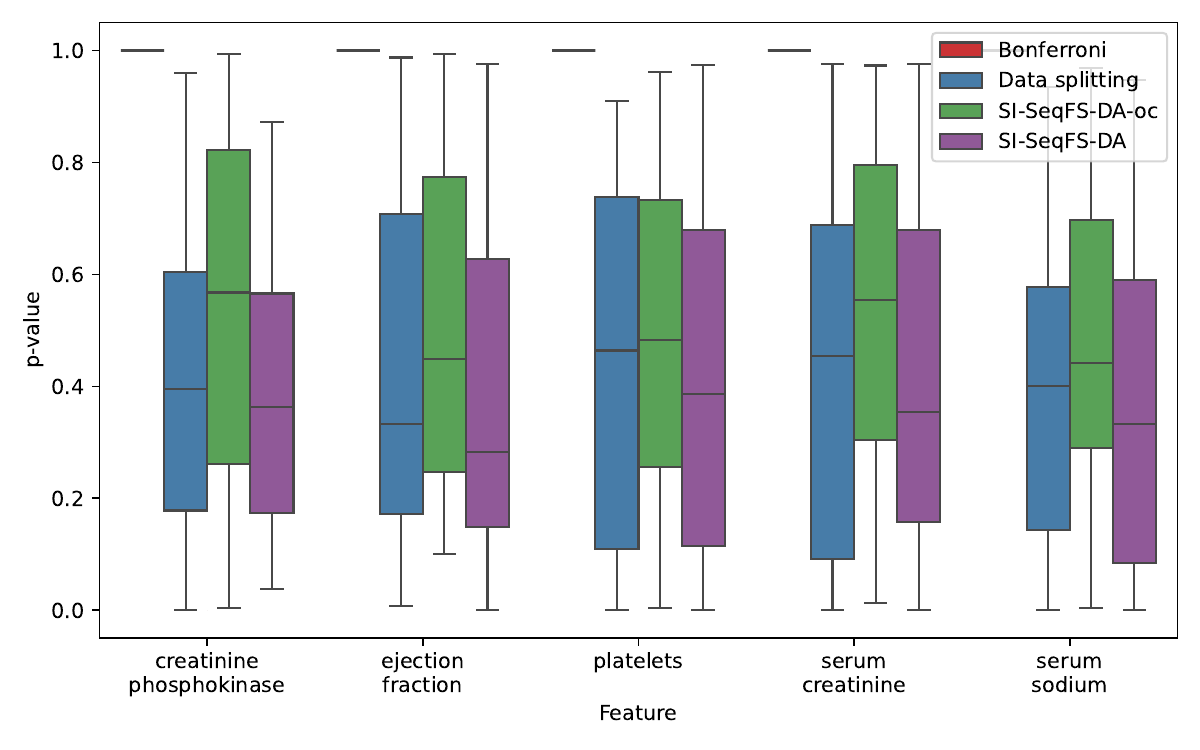}
\vspace{-17pt}
    \caption{Heart Failure dataset. The settings for the source and target domains are similar to those in Diabetes dataset.}
    \label{fig:heart_failure}
    \vspace{-8pt}
\end{figure}

\begin{figure}[!t]
    \centering
    \includegraphics[width=\linewidth]{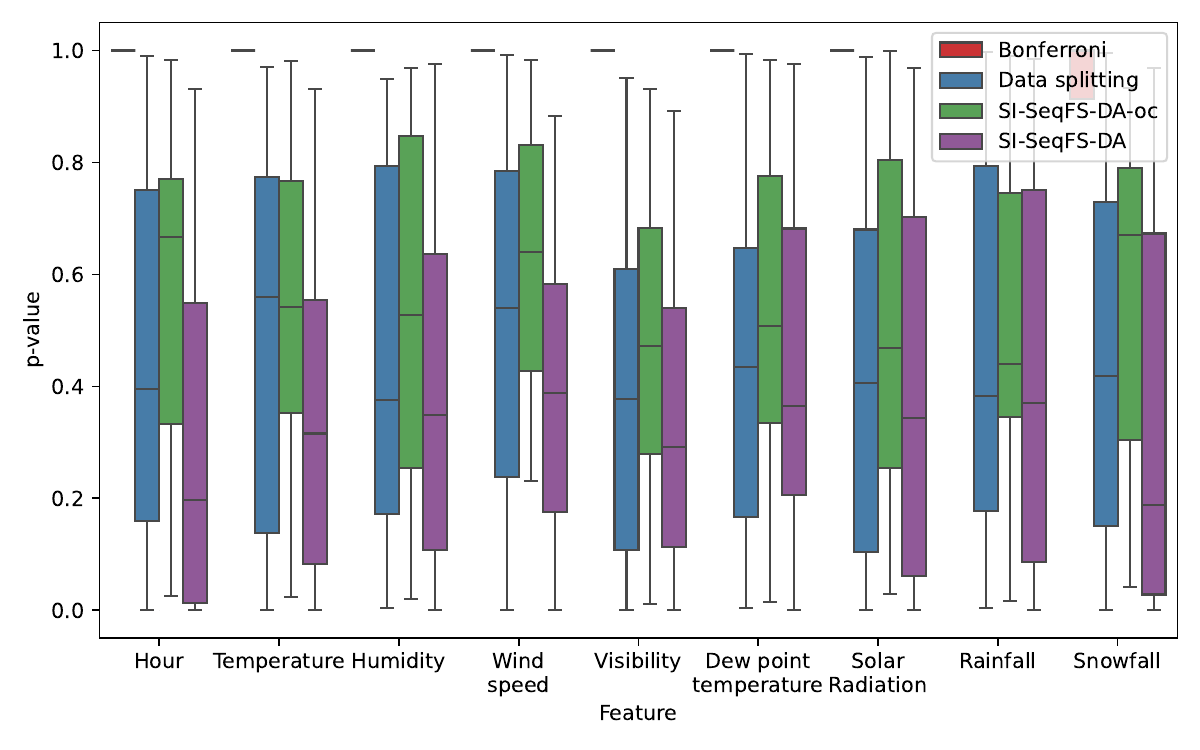}
    \vspace{-17pt}
    \caption{Seoul Bike dataset. The source domain is ``people who rent bikes on regular days'', while the target domain is ``people who rent bikes on holidays''.}
    \label{fig:Seoul_Bike}
    \vspace{-15pt}
\end{figure}

%
%
%
%
%
%
%
% \begin{figure}[ht]
%     \centering
%     % \captionsetup[subfigure]{labelformat=empty}
%     \begin{subfigure}[b]{0.9\linewidth}
%         \includegraphics[width=\linewidth]{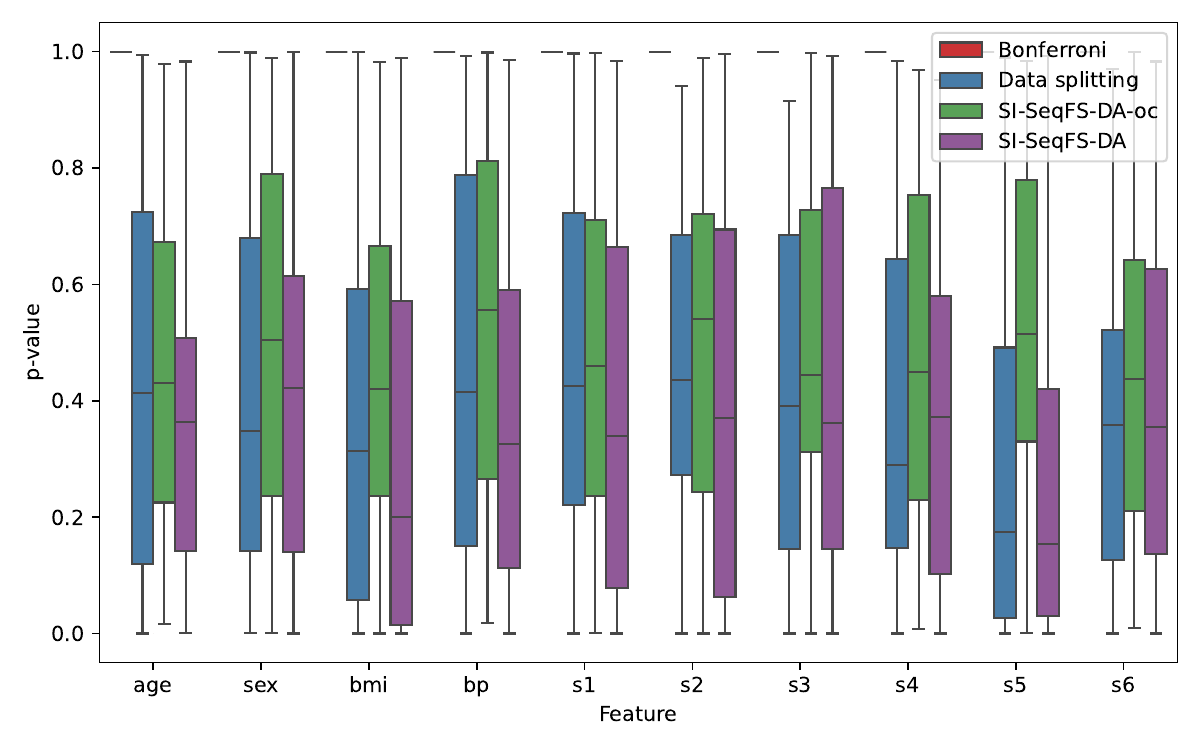}
%         \caption{Diabetes.}
%         \label{fig:DB_BS_fixed}
%     \end{subfigure}
    
%     % \hspace{0.02\linewidth}
%     \vspace{4pt}
    
%     \begin{subfigure}[b]{0.9\linewidth}
%         \includegraphics[width=\linewidth]{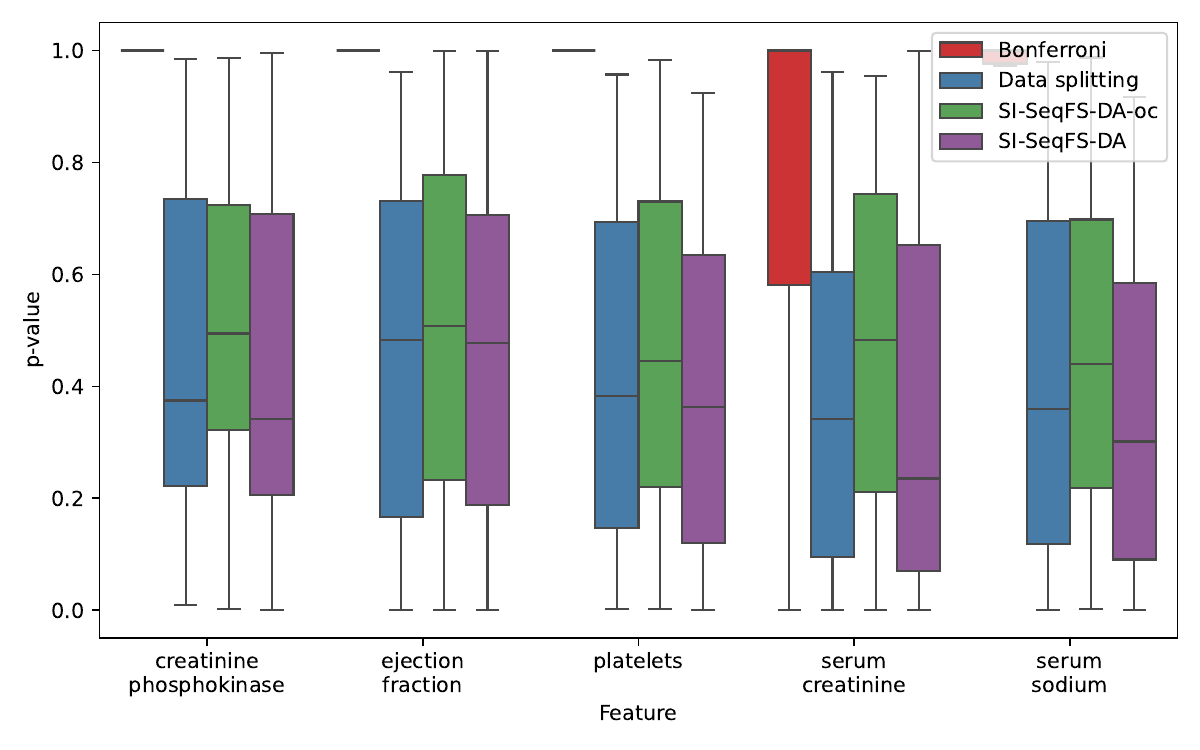}
%         \caption{Heart Failure}
%         \label{fig:HF_BS_fixed}
%     \end{subfigure}
%     \vspace{4pt}
    
%     \begin{subfigure}[b]{\linewidth}
%         \includegraphics[width=0.9\linewidth]{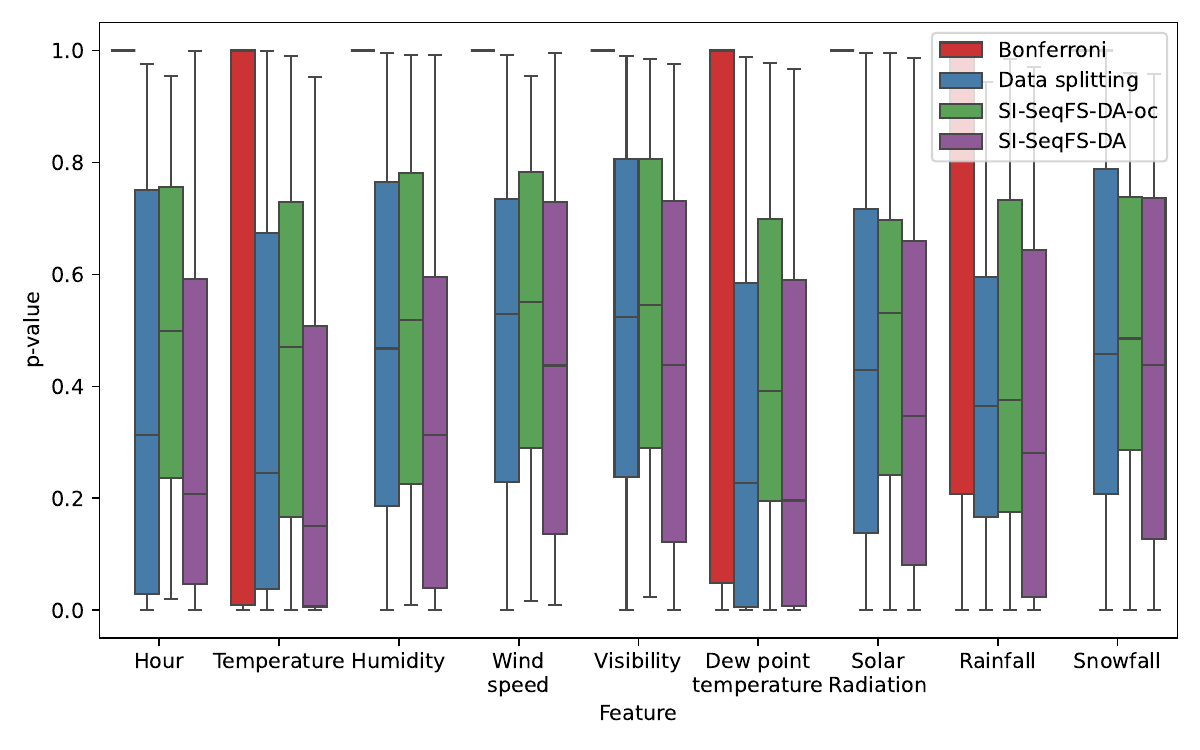}
%         \caption{Seoul Bike}
%         \label{fig:SB_BS_fixed}
%     \end{subfigure}
%     \vspace{4pt}
%     \caption{K-Step Backward SeqFS after DA in real-world datasets}
%     \label{fig:fixed_BS_realdata}
%     \vspace{-8pt}
% \end{figure}
\section{Conclusion}
\label{sec:conclusion}

We propose a novel method for computing a valid $p$-value to perform statistical tests on the features selected by SeqFS after DA. This method leverages the SI framework and uses a divide-and-conquer approach to efficiently compute the $p$-value. Additionally, we extend the proposed method to various settings, including backward SeqFS and optimal model selection criteria such as AIC, BIC, and adjusted $R^2$. We believe this study represents a significant step toward reliable ML in the context of DA.
At present, our method does not support other types of domain adaptations, such as MMD-based DA \citep{baktashmotlagh2013unsupervised}, metric learning-based DA \citep{saenko2010adapting}, or Deep Learning (DL)-based DA. This is due to the more complex nature of the selection events involved in these methods compared to OT-based DA. One potential solution could involve using a sampling-based approach to approximate the truncation region for $p$-value computation. Extending the proposed method to handle more complex DA methods would provide a valuable contribution.

%\begin{acknowledgements}
%If you'd like to thank anyone, place your comments here
%and remove the percent signs.
%\end{acknowledgements}

% BibTeX users please use one of
\bibliographystyle{apalike}      % basic style, author-year citations
\bibliography{ref.bib}   % name your BibTeX data base

\newpage
\section{Appendix}
\label{sec:appendix}

\subsection{Proof of Lemma \ref{lemma:valid_selective_p}} \label{app:proof_valid_p}

We have 
\begin{align*}
    \bm \eta_j^\top {\bm Y^s \choose \bm Y^t } \Big| \Big \{ 
	\cM_{\bm Y^s, \bm Y^t}
	=
	\cM_{\rm obs}, ~
	\cQ_{\bm Y^s, \bm Y^t}
	=
	\cQ_{\rm obs} \Big \} 
    \sim TN\left( 
            \bm \eta_j^\top {\bm \mu^s \choose \bm \mu^t
            }, \bm \eta_j^\top \Sigma \bm \eta_j, \cZ
            \right),
\end{align*}
which is a truncated normal distribution with mean ${\bm \eta}_j^\top {\bm \mu^s \choose \bm \mu^t}$, variance $\eta_j^\top \Sigma \eta_j$, in which $\Sigma = \begin{pmatrix}
	\Sigma^s & 0 \\ 
	0 & \Sigma^t
\end{pmatrix}$, and the truncation region $\cZ$ described in \S\ref{subsec:identification_cZ}. Therefore, under null hypothesis,

\begin{align*}
    p^{\rm selective}_j \ \Big| \ \Big \{ 
	\cM_{\bm Y^s, \bm Y^t}
	=
	\cM_{\rm obs}, ~
	\cQ_{\bm Y^s, \bm Y^t}
	=
	\cQ_{\rm obs} \Big \}   
    \sim \text{Unif}(0,1)
\end{align*}
Thus, $\mathbb{P}_{\rm H_{0, j}} \left( p^{\rm selective}_j \ \Big| \ 
	\cM_{\bm Y^s, \bm Y^t}
	=
	\cM_{\rm obs}, ~
	\cQ_{\bm Y^s, \bm Y^t}
	=
	\cQ_{\rm obs} 
\right) = \alpha, \forall \alpha \in [0,1]$.
Next, we have
\begin{align*}
    & \mathbb{P}_{\rm H_{0, j}} \left( p^{\rm selective}_j \ \Big| \ 
	\cM_{\bm Y^s, \bm Y^t}
	=
	\cM_{\rm obs}
\right)\\
    &= \int \mathbb{P}_{H_{0,j}} \left( p_j^{\rm selective} \leq \alpha \, \middle| \, 
    \begin{array}{l}
    \cM_{\bm Y^s, \bm Y^t} = \cM_{\rm obs}, \\ \cQ_{\bm Y^s, \bm Y^t} = \cQ_{\rm obs}     
    \end{array}
    \right)  \mathbb{P}_{H_{0,j}} \left( \cQ_{\bm Y^s, \bm Y^t} = \cQ_{\rm obs} \, \middle| \, 
    \cM_{\bm Y^s, \bm Y^t} = \cM_{\rm obs} 
    \right) \, d \cQ_{\rm obs} \\ 
    &= \int \alpha \, \mathbb{P}_{H_{0,j}} \left( \cQ_{\bm Y^s, \bm Y^t} = \cQ_{\rm obs} \, \middle| \, \cM_{\bm Y^s, \bm Y^t} = \cM_{\rm obs} \right) \, d\cQ_{\rm obs} \\ 
    &= \alpha \int \mathbb{P}_{H_{0,j}} \left( \cQ_{\bm Y^s, \bm Y^t} = \cQ_{\rm obs} \, \middle| \, \cM_{\bm Y^s, \bm Y^t} = \cM_{\rm obs} \right) \, d\cQ_{\rm obs} \\ 
    &= \alpha. 
\end{align*}
Finally, we obtain the result in Lemma \ref{lemma:valid_selective_p} as follows:
\begin{align*}
    \mathbb{P}_{\rm H_{0, j}} &\left( p^{\rm selective}_j \leq \alpha
    \right) \\
    &= \sum_{\cM_{\rm obs}} \mathbb{P}_{H_{0,j}} \left( p_j^{\rm selective} \leq \alpha \, \middle| \, \cM_{\bm Y^s, \bm Y^t} = \cM_{\rm obs} \right) \mathbb{P}_{H_{0,j}} \left( \cM_{\bm Y^s, \bm Y^t} = \cM_{\rm obs} \right)\\
    &= \sum_{\cM_{\rm obs}} \alpha \, \mathbb{P}_{H_{0,j}} \left( \cM_{\bm Y^s, \bm Y^t} = \cM_{\rm obs} \right) \\
    &= \alpha \sum_{\cM_{\rm obs}} \, \mathbb{P}_{H_{0,j}} \left( \cM_{\bm Y^s, \bm Y^t} = \cM_{\rm obs} \right) \\
    &= \alpha.
\end{align*}

\subsection{Proof of Lemma \ref{lemma:data_line}} \label{app:proof_line}
Base on the second condition in (\ref{eq:conditional_data_space}), we have
\begin{align*}
    \cQ_{\bm Y^s, \bm Y^t} &= \cQ_{\rm obs} \\
    \Leftrightarrow \left( I_{n_s + n_t} - \bm b \bm \eta_j^\top \right) {\bm Y^s \choose \bm Y^t } &= \cQ_{\rm obs} \\
    \Leftrightarrow {\bm Y^s \choose \bm Y^t } &= \cQ_{\rm obs} + \bm b \bm \eta_j^\top  {\bm Y^s \choose \bm Y^t }.
\end{align*}
By defining $\bm a= \cQ_{\rm obs}, z = \bm \eta_j^\top{\bm Y^s \choose \bm Y^t }$, and incorporating the second condition of (\ref{eq:conditional_data_space}), we obtain Lemma \ref{lemma:data_line}.

\subsection{Proof of Lemma \ref{lemma:cZ_u}} \label{app:proof_cZ_u}
Building on the results from \cite{le2024cad}, where the authors introduced a method for characterizing the event of OT using the concept of \textit{parametric linear programming}, we reformulate the OT problem between the source and target domains, as presented in (\ref{eq:ot_problem}), as follows:
\[
    \hat{\boldsymbol{t}} = \underset{\boldsymbol{t} \in \mathbb{R}^{n_s  n_t}} {\arg\min}  \ \boldsymbol{t}^\top \boldsymbol{c} \left(D^s, D^t\right) 
\]
\[
    \qquad \text{s.t.} \quad H \boldsymbol{t} = \boldsymbol{h}, \boldsymbol{t} \geq 0,
\]

where $\boldsymbol{t} =  \text{vec}(T), \boldsymbol{c} \left(D^s, D^t\right) = \text{vec} \left(C \left(D^s, D^t\right) \right) = {\bm c'} +
\left[ \Theta \begin{pmatrix}
    \bm Y^s \\
    \bm Y^t
\end{pmatrix} \right] \circ \left[ \Theta \begin{pmatrix}
    \bm Y^s \\
    \bm Y^t
\end{pmatrix} \right]$,
\[
    {\bm c'} = \text{vec} \left(\Big[
	\big \| X_i^s - X_j^t \big \|^2_2 
	\Big]_{ij}\right) \in \RR^{n_sn_t},
\]
\[
    \Theta = \text{hstack}\left(I_{n_s} \otimes \mathbf{1}_{n_t}, - \mathbf{1}_{n_s} \otimes I_{n_t}\right) \in \mathbb{R}^{n_sn_t \times (n_s + n_t)},
\]
the cost vector ${\bm c'}$ once computed from $X^s$ and $X^t$ remains fixed, vec($\cdot$) is an operator that transforms a matrix into a vector with concatenated rows, the operator $\circ$ is element-wise product, hstack($\cdot,\cdot$) is horizontal stack operation, the operator $\otimes$ is Kronecker product, $I_{n} \in \mathbb{R}^{n \times n}$ is the identity matrix, and $\mathbf{1}_{m} \in \mathbb{R}^{m}$ is a vector of ones. The matrix $H$ is defined  as $H = \begin{pmatrix}
    H_r & H_c
\end{pmatrix}^\top \in \mathbb{R}^{(n_s+n_t) \times n_sn_t}$ in which
\[
H_r = 
\begin{bmatrix}
1 & \dots & 1 & 0 & \dots & 0 & \dots & 0 & \dots & 0 \\
0 & \dots & 0 & 1 & \dots & 1 & \dots & 0 & \dots & 0 \\
\vdots & \dots & \vdots & \vdots & \dots & \vdots & \dots & \vdots & \dots & \vdots \\
0 & \dots & 0 & 0 & \dots & 0 & \dots & 1 & \dots & 1 \\
\end{bmatrix}
\in \mathbb{R}^{n_s \times n_s n_t}
\]
that performs the sum over the rows of $T$ and
\[
    H_c = \begin{bmatrix}
I_{n_t} & I_{n_t} & \dots & I_{n_t}
\end{bmatrix} \in \mathbb{R}^{n_t \times n_s n_t}
\]
that performs the sum over the columns of $T$, and $h = \left(\frac{\mathbf{1}_{n_s}}{n_s}, \frac{\mathbf{1}_{n_t}}{n_t}\right)^\top \in \mathbb{R}^{n_s + n_t}.$
Next, we analyze the OT problem formulated with the parameterized data $\boldsymbol{a} + \boldsymbol{b}z$:
\begin{align*}
    &\min_{{\bm t} \in \mathbb{R}^{n_sn_t}} {\bm t}^T \left[ ({\bm c'} + \Theta(\boldsymbol{a} + \boldsymbol{b}z) ) \circ ({\bm c'} + \Theta(\boldsymbol{a} + \boldsymbol{b}z) ) \right] \quad \text{s.t.} \quad H{\bm t} = h, \quad {\bm t} \geq 0, \\
    \Leftrightarrow &\min_{{\bm t} \in \mathbb{R}^{n_sn_t}} (\tilde{\bm p} + \tilde{\bm q}z + \tilde{\bm f}z^2)^\top {\bm t} \quad \text{s.t.} \quad H{\bm t} = h, \quad {\bm t} \geq 0,
\end{align*}
where
\begin{align*}
    \tilde{\boldsymbol{p}} &= ({\bm c'} + \Theta \boldsymbol{a}) \circ ({\bm c'} + \Theta \boldsymbol{a}), \quad 
    \tilde{\boldsymbol{q}} = (\Theta \boldsymbol{a}) \circ (\Theta \boldsymbol{b}) + (\Theta \boldsymbol{b}) \circ (\Theta \boldsymbol{a}), \quad \\ 
    \text{and} \quad \tilde{\boldsymbol{f}} &= (\Theta \boldsymbol{b}) \circ (\Theta \boldsymbol{b}).
\end{align*}
By fixing $\mathcal{B}_u$ as the optimal basic index set of the linear program, the \textit{relative cost vector} w.r.t to the set of non-basis variables $\mathcal{B}_u^c$ is defined as
\[
    \boldsymbol{r}_{\mathcal{B}_u^c} = \boldsymbol{p} + \boldsymbol{q} z + \boldsymbol{f}z^2,
\]
where
\begin{align*}
    \label{define p q f}
    \boldsymbol{p} &= (\tilde{\boldsymbol{p}}_{\mathcal{B}_u^c}^\top - \tilde{\boldsymbol{p}}_{\mathcal{B}_u}^\top H_{:,\mathcal{B}_u}^{-1} H_{:,\mathcal{B}_u^c})^\top, 
    \boldsymbol{q} = (\tilde{\boldsymbol{q}}_{\mathcal{B}_u^c}^\top - \tilde{\boldsymbol{q}}_{\mathcal{B}_u}^\top H_{:,\mathcal{B}_u}^{-1} H_{:,\mathcal{B}_u^c}) ^\top, \\
    \boldsymbol{f} &= (\tilde{\boldsymbol{f}}_{\mathcal{B}_u^c}^\top - \tilde{\boldsymbol{f}}_{\mathcal{B}_u}^\top H_{:,\mathcal{B}_u}^{-1} H_{:,\mathcal{B}_u^c})^\top,    
\end{align*}
$H_{:,\mathcal{B}_u}^{-1}$ is a sub-matrix of $H$ made up of all rows and columns in the set $\mathcal{B}_u$. The requirement for $\mathcal{B}_u$ to be the optimal basis index set is $\boldsymbol{f}_{\mathcal{B}_u^c} \geq \boldsymbol{0}$ (i.e., the cost in minimization problem will never decrease when the non-basic variables become positive and enter the basis). We note that the optimal basis index set $\mathcal{B}_u$ corresponds to the transportation $\mathcal{T}_u$. Therefore, the set $\mathcal{Z}_u$ is defined as 
\[
\begin{aligned}
    \mathcal{Z}_u &= \{ z \in \mathbb{R} \mid \mathcal{T}_{{\bm a} + {\bm b} z} = \mathcal{T}_u \}, \\
    &= \{ z \in \mathbb{R} \mid \mathcal{B}_{{\bm a} + {\bm b} z} = \mathcal{B}_u \}, \\
    &= \{ z \in \mathbb{R} \mid \boldsymbol{r}_{\mathcal{B}_u^c} = \boldsymbol{p} + \boldsymbol{q} z + \boldsymbol{f}z^2 \geq 0 \}.
\end{aligned}
\]
Thus, we obtain the result in Lemma \ref{lemma:cZ_u}. We note that this result is also discussed in \cite{loi2024statistical}.

\subsection{Proof of Lemma \ref{lemma:cZFS_v}}
\label{app:proof_Zv_fw}

At each step $k \in [K]$, the selected feature $j_k$ adding to the model satisfies:
\[
\begin{aligned}
    {\rm RSS}(\tilde{\bm Y}_u(z), \tilde{X}_{u_{\mathcal{M}_{k-1} \cup \{j_k\}}}) &\leq 
    {\rm RSS}(\tilde{\bm Y}_u(z), \tilde{X}_{u_{\mathcal{M}_{k-1} \cup \{j\}}})
    \quad  \forall j \in [p] \setminus (\cM_{k-1} \cup \{ j_k \})
\\
\Leftrightarrow
    \left\Vert P_{\tilde{X}_{u_{\cM_{k-1} \cup \{j_k\}}}}^\perp \tilde{\bm Y}_u(z) \right\Vert_2^2
    &\leq
    \left\Vert P_{\tilde{X}_{u_{\cM_{k-1} \cup \{j\}}}}^\perp \tilde{\bm Y}_u(z) \right\Vert_2^2,
\end{aligned}
\]
where $P_{X_\cM}^\perp = \left( I_{n_s+ n_t} - X_\cM(X^\top_\cM X_\cM)^{-1} X^\top_\cM \right)$.
Expressing the squared norms in their corresponding matrix forms, we denote 
\begin{align*}    
    L_{j_k} = P_{\tilde{X}_{u_{\cM_{k-1} \cup \{j_k\}}}}^{\perp^\top} P_{\tilde{X}_{u_{\cM_{k-1} \cup \{j_k\}}}}^\perp \, \text{ and } \,
    L_{j} = P_{\tilde{X}_{u_{\cM_{k-1} \cup \{j\}}}}^{\perp^\top} P_{\tilde{X}_{u_{\cM_{k-1} \cup \{j\}}}}^\perp.
\end{align*}
Then, the inequality is rewritten as:
\[
\begin{aligned}
% \Leftrightarrow
%     \tilde{\bm Y}_u(z)^\top P_{u,{\cM_{k-1} \cup \{j_k\}}}^{\perp^\top} P_{u,{\cM_{k-1} \cup \{j_k\}}}^\perp \tilde{\bm Y}_u(z)
%     &\leq
%     \tilde{\bm Y}_u(z)^\top P_{u,{\cM_{k-1} \cup \{j\}}}^{\perp^\top} P_{u,{\cM_{k-1} \cup \{j\}}}^\perp \tilde{\bm Y}_u(z)
%     \\
    % \Leftrightarrow
    \tilde{\bm Y}_u(z)^\top L_{j_k} \tilde{\bm Y}_u(z)
    &\leq
    \tilde{\bm Y}_u(z)^\top L_{j}\tilde{\bm Y}_u(z)
    \\
    \Leftrightarrow
    \bm Y (z)^\top \Omega_u^\top L_{j_k} \Omega_u \bm Y (z)
    &\leq
    \bm Y (z)^\top \Omega_u^\top L_{j} \Omega_u \bm Y (z)
    \\
    \Leftrightarrow
    (\bm a+\bm bz)^\top\Omega_u^\top L_{j_k} \Omega_u (\bm a+\bm bz)
    &\leq
    (\bm a+\bm bz)^\top\Omega_u^\top L_{j} \Omega_u (\bm a+\bm bz).
\end{aligned}
\]
By expanding and simplifying the quadratic forms on both sides, the inequality is reduced to a quadratic expression in $z$:
\[
\begin{aligned}
    w_{j_k,j} + r_{j_k,j}z + o_{j_k,j} z^2 &\leq 0 
\end{aligned}
\]
where
\[\begin{aligned}
    w_{j_k,j} &= \bm a^\top\Omega_u^\top L_{j_k} \Omega_u \bm a - \bm a^\top\Omega_u^\top L_{j} \Omega_u \bm a, \\
    r_{j_k,j} &= \bm  a^\top\Omega_u^\top L_{j_k} \Omega_u \bm b + \bm b^\top\Omega_u^\top L_{j_k} \Omega_u \bm a - \bm a^\top\Omega_u^\top L_{j} \Omega_u \bm b - \bm b^\top\Omega_u^\top L_{j} \Omega_u \bm a,
    \\
    o_{j_k,j} &= \bm b^\top\Omega_u^\top L_{j_k} \Omega_u \bm b -  \bm b^\top\Omega_u^\top L_{j} \Omega_u \bm b,
\end{aligned}
 \]
With all step $k \in [K]$, we have a system of quadratic inequalities in Lemma \ref{lemma:cZFS_v}:
\[
\bm w  + \bm r z + \bm o z^2 \leq 0
\]
where 
\begin{align*}
    \bm w &= {\rm vector} \left(\left\{ w_{j_k,j} \right\}_{k \in [K], j \in [p] \setminus (\cM_{k-1} \cup \{ j_k \})} \right),
    \\
    \bm r  &= {\rm vector} \left(\left\{ r_{j_k,j} \right\}_{k \in [K], j \in [p] \setminus (\cM_{k-1} \cup \{ j_k \})} \right),
    \\
    \bm o &= {\rm vector} \left(\left\{ o_{j_k,j} \right\}_{k \in [K], j \in [p] \setminus (\cM_{k-1} \cup \{ j_k \})} \right),
\end{align*}
${\rm vector}(\cdot)$ is the operation that converts a set to a vector.

\subsection{Proof of Lemma \ref{lemma:cZBS_v}}
\label{app:proof_Zv_bw}

 The identification of $\cZ^{bw}_v$ is constructed almost like that in Lemma \ref{lemma:cZFS_v}. At each step $k \in [p, \dots, K+1]$, the removed feature $j_k$ out of the model satisfies:
\[
\begin{aligned}
    {\rm RSS}(\tilde{\bm Y}_u(z), \tilde{X}_{u_{\mathcal{M}_{k-1} \setminus \{j_k\}}}) &\leq 
    {\rm RSS}(\tilde{\bm Y}_u(z), \tilde{X}_{u_{\mathcal{M}_{k-1} \setminus \{j\}}})
    \quad  \forall j \in \cM_{k} \setminus \{ j_k \}
\\
\Leftrightarrow
    \left\Vert P_{\tilde{X}_{u_{\cM_{k} \setminus \{j_k\}}}}^\perp \tilde{\bm Y}_u(z) \right\Vert_2^2
    &\leq
    \left\Vert P_{\tilde{X}_{u_{\cM_{k} \setminus \{j\}}}}^\perp \tilde{\bm Y}_u(z) \right\Vert_2^2.
\end{aligned}
\]
Expressing the squared norms in their corresponding matrix forms, we denote 
\begin{align*}    
    L^{\text{bw}}_{j_k} = P_{\tilde{X}_{u_{\cM_{k} \setminus \{j_k\}}}}^{\perp^\top} P_{\tilde{X}_{u_{\cM_{k} \setminus \{j_k\}}}}^\perp \, \text{ and } \,
    L^{\text{bw}}_{j} = P_{\tilde{X}_{u_{\cM_{k} \setminus \{j\}}}}^{\perp^\top} P_{\tilde{X}_{u_{\cM_{k} \setminus \{j\}}}}^\perp.
\end{align*}
The inequality is rewritten as:
\[
\begin{aligned}
    % \Leftrightarrow \tilde{\bm Y}_u(z)^\top P_{u,{\cM_{k} \setminus \{j_k\}}}^{\perp^\top} P_{u,{\cM_{k} \setminus \{j_k\}}}^\perp \tilde{\bm Y}_u(z)
    % &\leq
    % \tilde{\bm Y}_u(z)^\top P_{u,{\cM_{k} \setminus \{j\}}}^{\perp^\top} P_{u,{\cM_{k} \setminus \{j\}}}^\perp \tilde{\bm Y}_u(z)
    % \\
    % \Leftrightarrow
    \tilde{\bm Y}_u(z)^\top L^{\text{bw}}_{j_k} \tilde{\bm Y}_u(z)
    &\leq
    \tilde{\bm Y}_u(z)^\top L^{\text{bw}}_{j}\tilde{\bm Y}_u(z)
    \\
    \Leftrightarrow
    \bm Y (z)^\top \Omega_u^\top L^{\text{bw}}_{j_k} \Omega_u \bm Y (z)
    &\leq
    \bm Y (z)^\top \Omega_u^\top L^{\text{bw}}_{j} \Omega_u \bm Y (z)
    \\
    \Leftrightarrow
    (\bm a+ \bm bz)^\top\Omega_u^\top L^{\text{bw}}_{j_k} \Omega_u (\bm a+\bm bz)
    &\leq
    (\bm a+\bm bz)^\top\Omega_u^\top L^{\text{bw}}_{j} \Omega_u (\bm a+\bm bz)
\end{aligned}
\]
By expanding and simplifying the quadratic forms on both sides, the inequality is reduced to a quadratic expression in $z$:
\[
\begin{aligned}
    w^\text{bw}_{j_k,j} + r^\text{bw}_{j_k,j}z + o^\text{bw}_{j_k,j} z^2 &\leq 0 
\end{aligned}
\]
where
\[\begin{aligned}
    w^\text{bw}_{j_k,j} &= \bm a^\top\Omega_u^\top L^{\text{bw}}_{j_k} \Omega_u \bm a - \bm a^\top\Omega_u^\top L^{\text{bw}}_{j} \Omega_u \bm a, \\
    r^\text{bw}_{j_k,j} &= \bm  a^\top\Omega_u^\top L^{\text{bw}}_{j_k} \Omega_u \bm b + \bm b^\top\Omega_u^\top L^{\text{bw}}_{j_k} \Omega_u \bm a - \bm a^\top\Omega_u^\top L^{\text{bw}}_{j} \Omega_u \bm b - \bm b^\top\Omega_u^\top L^{\text{bw}}_{j} \Omega_u \bm a,
    \\
    o^\text{bw}_{j_k,j} &= \bm b^\top\Omega_u^\top L^{\text{bw}}_{j_k} \Omega_u \bm b -  \bm b^\top\Omega_u^\top L^{\text{bw}}_{j} \Omega_u \bm b,
\end{aligned}
 \]
With all step $k \in [p, \dots, K+1]$, we have a system of quadratic inequalities in Lemma \ref{lemma:cZBS_v}:
\[
\bm w^\text{bw}  + \bm r^\text{bw} z +\bm  o^\text{bw} z^2 \leq 0
\]
where 
\begin{align*}
    \bm w^\text{bw} &= {\rm vector} \left(\left\{ w^\text{bw}_{j_k,j} \right\}_{k \in [p,\dots, K+1], j \in \cM_{k} \setminus \{ j_k \}} \right),
    \\
    \bm r^\text{bw}  &= {\rm vector} \left(\left\{ r^\text{bw}_{j_k,j} \right\}_{k \in [p,\dots, K+1], j \in \cM_{k} \setminus \{ j_k \}} \right),
    \\
    \bm o^\text{bw} &= {\rm vector} \left(\left\{ o^\text{bw}_{j_k,j} \right\}_{k \in [p,\dots, K+1], j \in \cM_{k} \setminus \{ j_k \}} \right).
\end{align*}

\subsection{Additional Experiments} \label{app:additional_experiments}
\textbf{Experiments for extended methods.} 
We perform additional experiments on three real-world datasets, as described in Section \ref{subsec:realdata}. The plots display the results using extended methods of SI-SeqFS-DA, including Backward SeqFS after DA, Forward SeqFS after DA with AIC, BIC, and adjusted $R^2$, as well as Backward SeqFS after DA with AIC, BIC, and adjusted $R^2$.
The results are shown in Figs. \ref{fig:fixed_BS_realdata}, \ref{fig:AIC_FS_realdata}, \ref{fig:BIC_FS_realdata}, \ref{fig:AdjR2_FS_realdata}, \ref{fig:AIC_BS_realdata}, \ref{fig:BIC_BS_realdata}, and \ref{fig:AdjR2_BS_realdata}.
%%%%%%%%%%%%%%%%%%111111111
\begin{figure}[!t] 
    \centering

    % Subfigure 1
    \begin{subfigure}{\linewidth}
        \centering
        \includegraphics[width=0.9\linewidth]{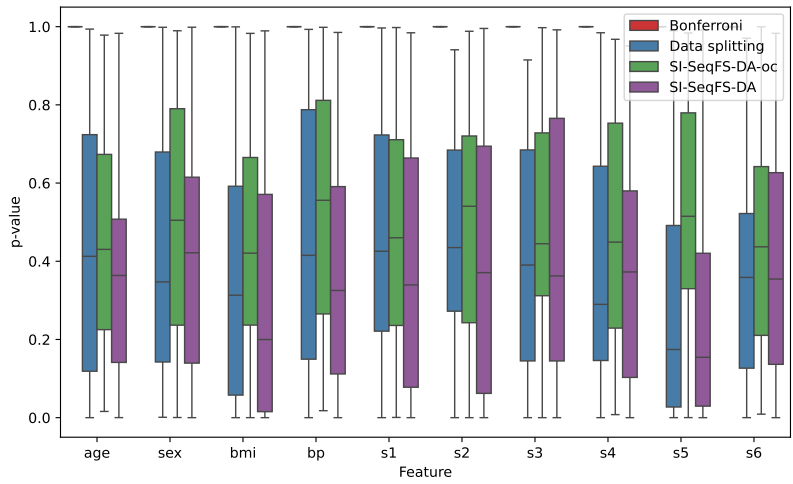}
        \caption{Diabetes.}
        % \label{fig:}
    \end{subfigure}
    \vspace{1pt}

    % Subfigure 2
    \begin{subfigure}{\linewidth}
        \centering
        \includegraphics[width=0.9\linewidth]{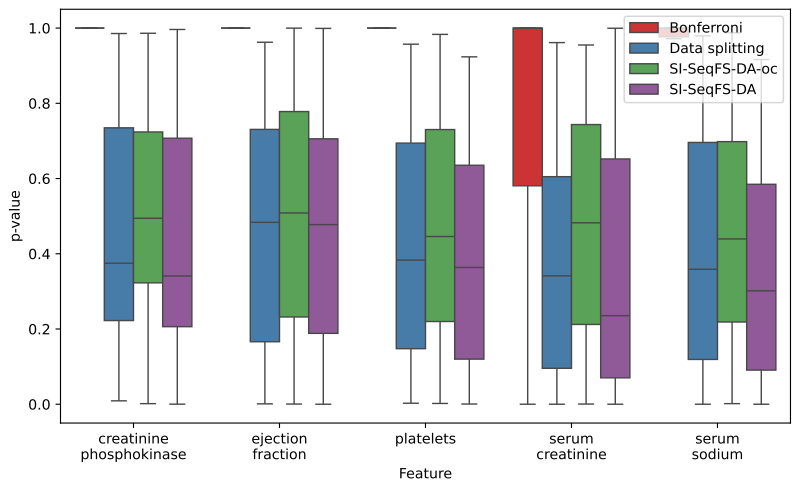}
        \caption{Heart Failure.}
        % \label{fig:}
    \end{subfigure}
    \vspace{1pt}

    % Subfigure 3
    \begin{subfigure}{\linewidth}
        \centering
        \includegraphics[width=0.9\linewidth]{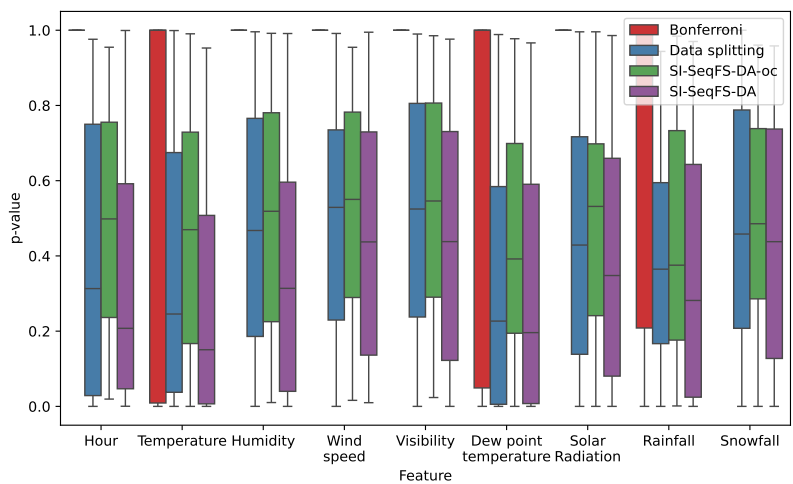}
        \caption{Seoul Bike.}
        % \label{fig:}
    \end{subfigure}
    \vspace{1pt}

    \caption{Backward SeqFS after DA on real-world datasets.}
    \label{fig:fixed_BS_realdata}
\end{figure}

%%%%%%%%%%%%%%%%%%%22222222222222

\begin{figure}[!t] % Allow placement flexibility with [htbp]
    \centering

    % Subfigure 1
    \begin{subfigure}{\linewidth}
        \centering
        \includegraphics[width=0.9\linewidth]{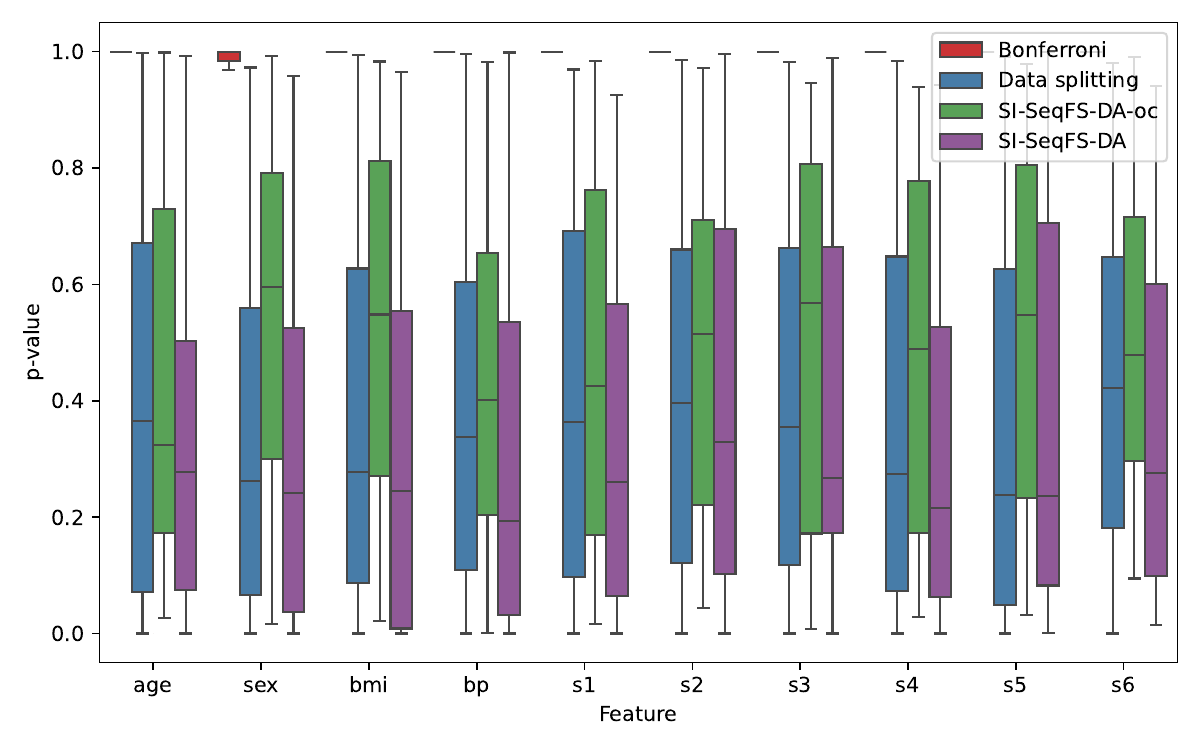}
        \caption{Diabetes.}
        % \label{fig:}
    \end{subfigure}
    \vspace{1pt}

    % Subfigure 2
    \begin{subfigure}{\linewidth}
        \centering
        \includegraphics[width=0.9\linewidth]{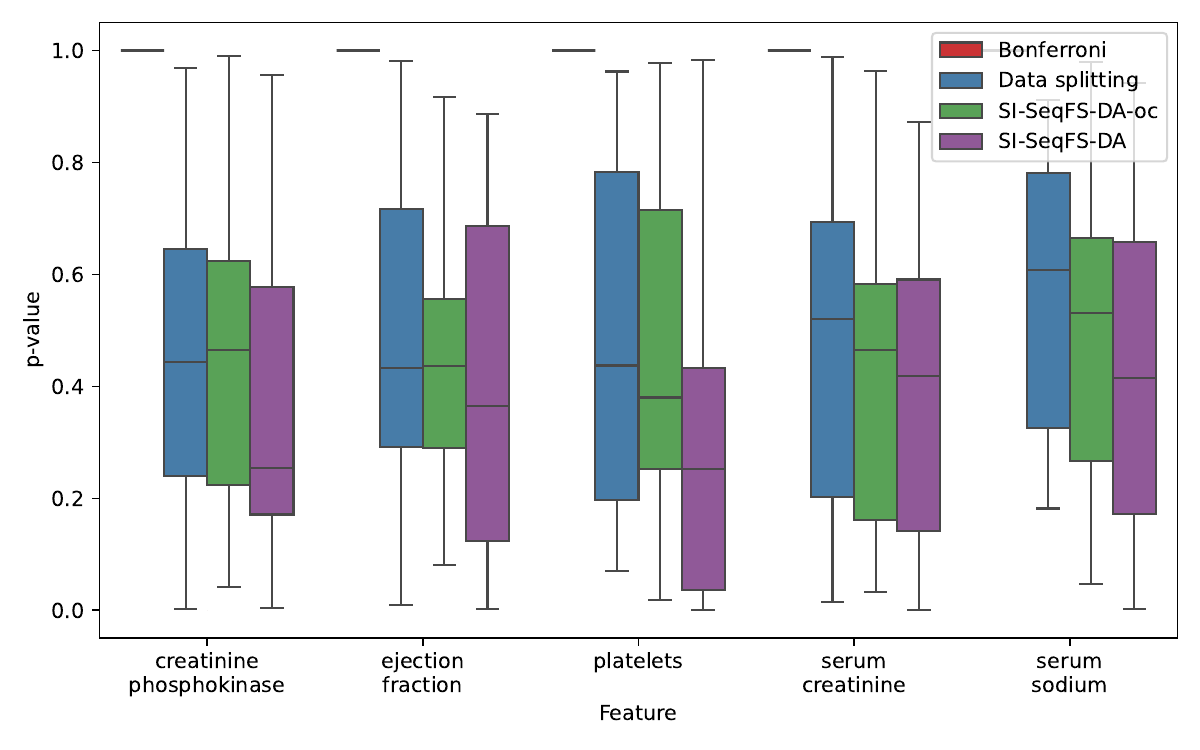}
        \caption{Heart Failure.}
        % \label{fig:}
    \end{subfigure}
    \vspace{1pt}

    % Subfigure 3
    \begin{subfigure}{\linewidth}
        \centering
        \includegraphics[width=0.9\linewidth]{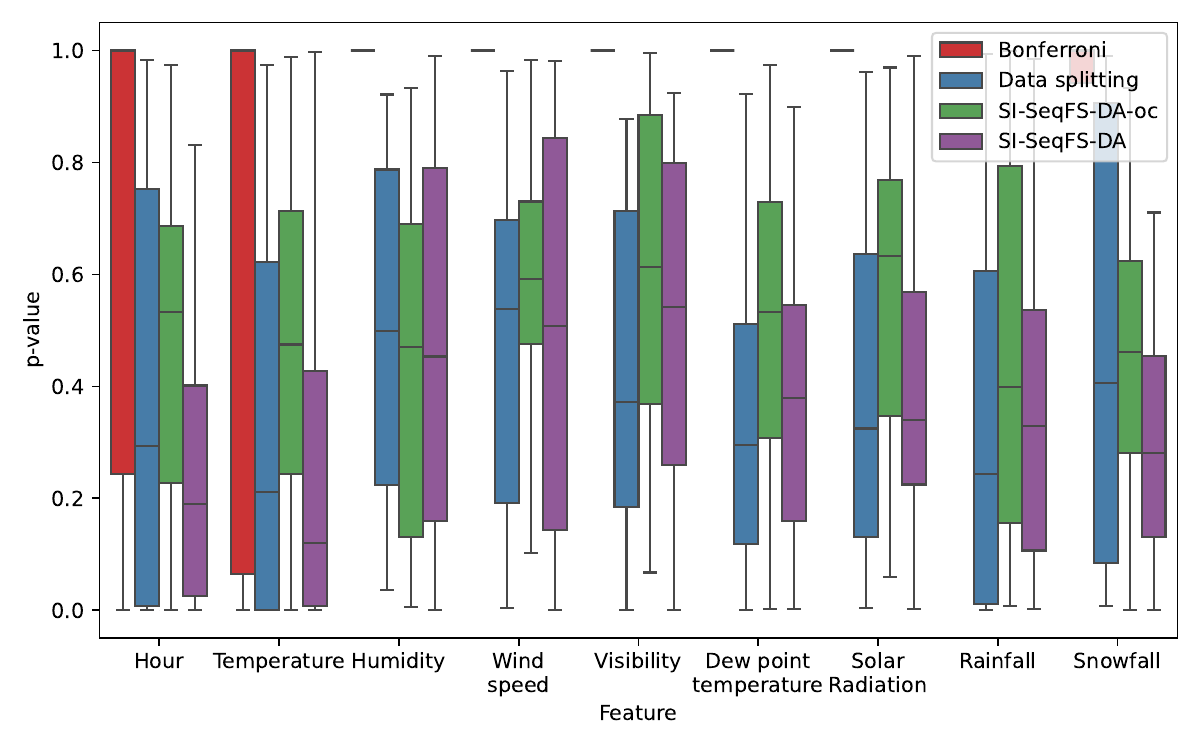}
        \caption{Seoul Bike.}
        % \label{fig:}
    \end{subfigure}
    \vspace{1pt}

    \caption{Forward SeqFS after DA with AIC on real-world datasets.}
    \label{fig:AIC_FS_realdata}
\end{figure}

%%%%%%%%%%%%%%%%%%%%%%%%3
\begin{figure}[!t] % Allow placement flexibility with [htbp]
    \centering

    % Subfigure 1
    \begin{subfigure}{\linewidth}
        \centering
        \includegraphics[width=0.9\linewidth]{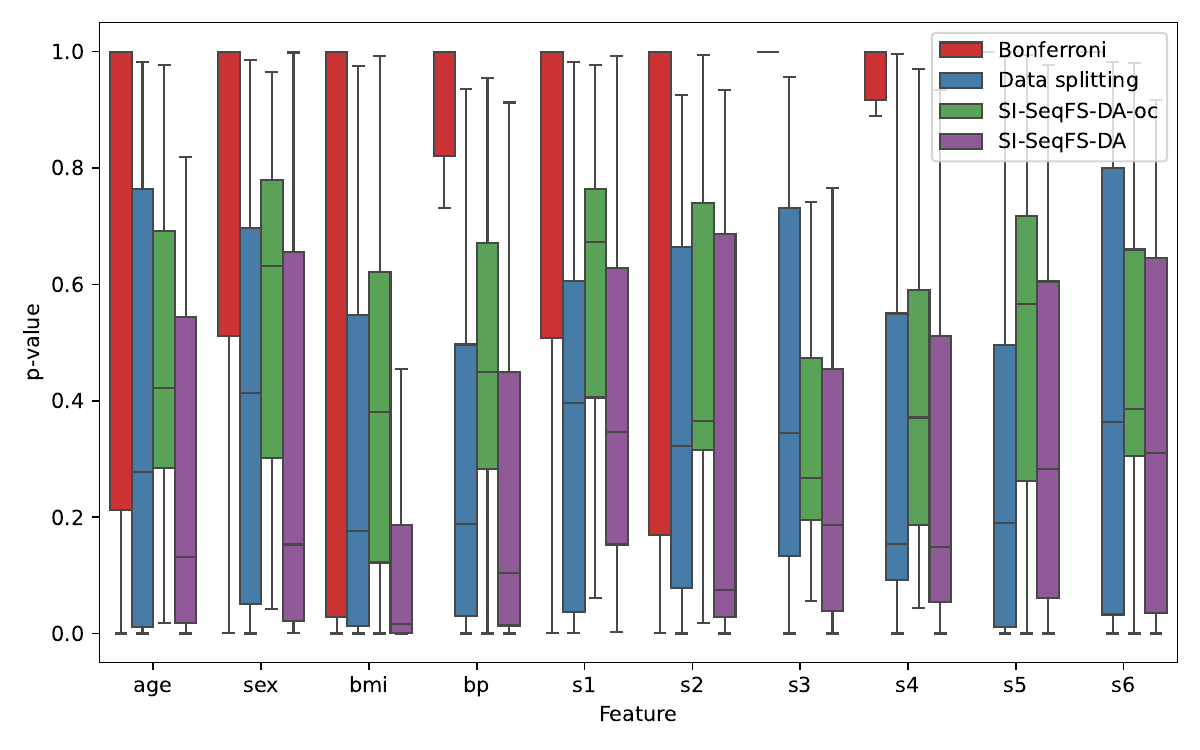}
        \caption{Diabetes.}
        % \label{fig:}
    \end{subfigure}
    \vspace{1pt}

    % Subfigure 2
    \begin{subfigure}{\linewidth}
        \centering
        \includegraphics[width=0.9\linewidth]{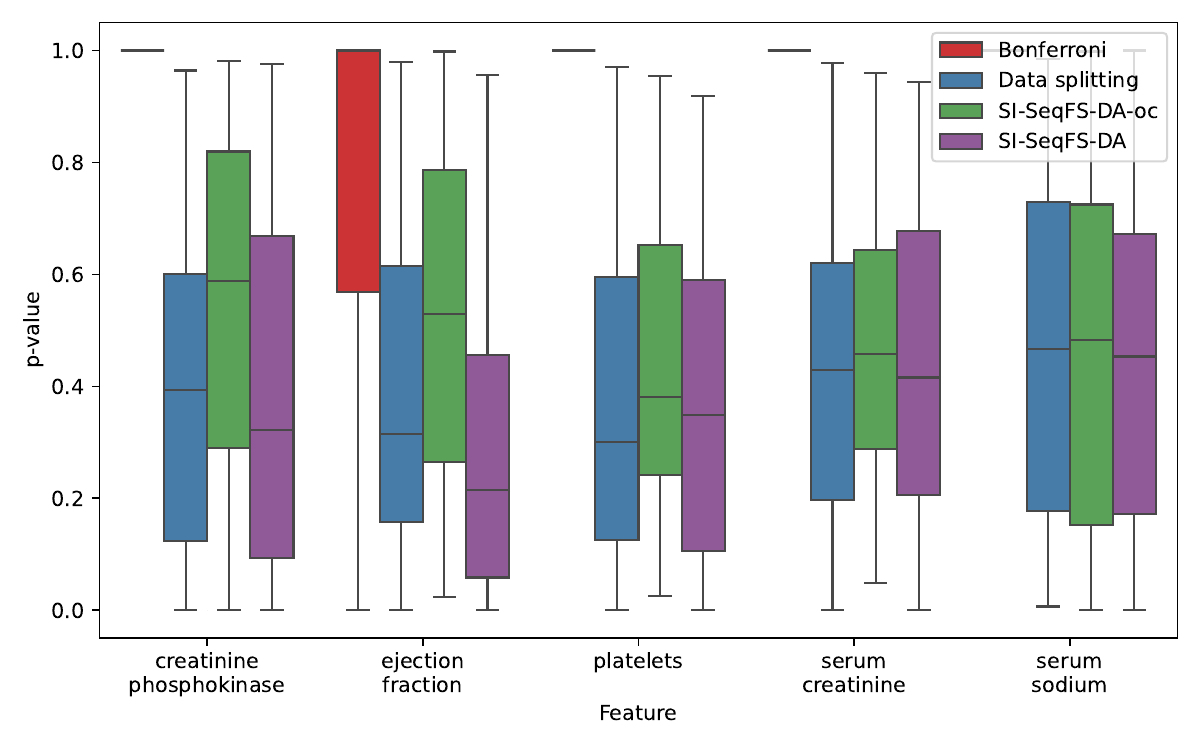}
        \caption{Heart Failure.}
        % \label{fig:}
    \end{subfigure}
    \vspace{1pt}

    % Subfigure 3
    \begin{subfigure}{\linewidth}
        \centering
        \includegraphics[width=0.9\linewidth]{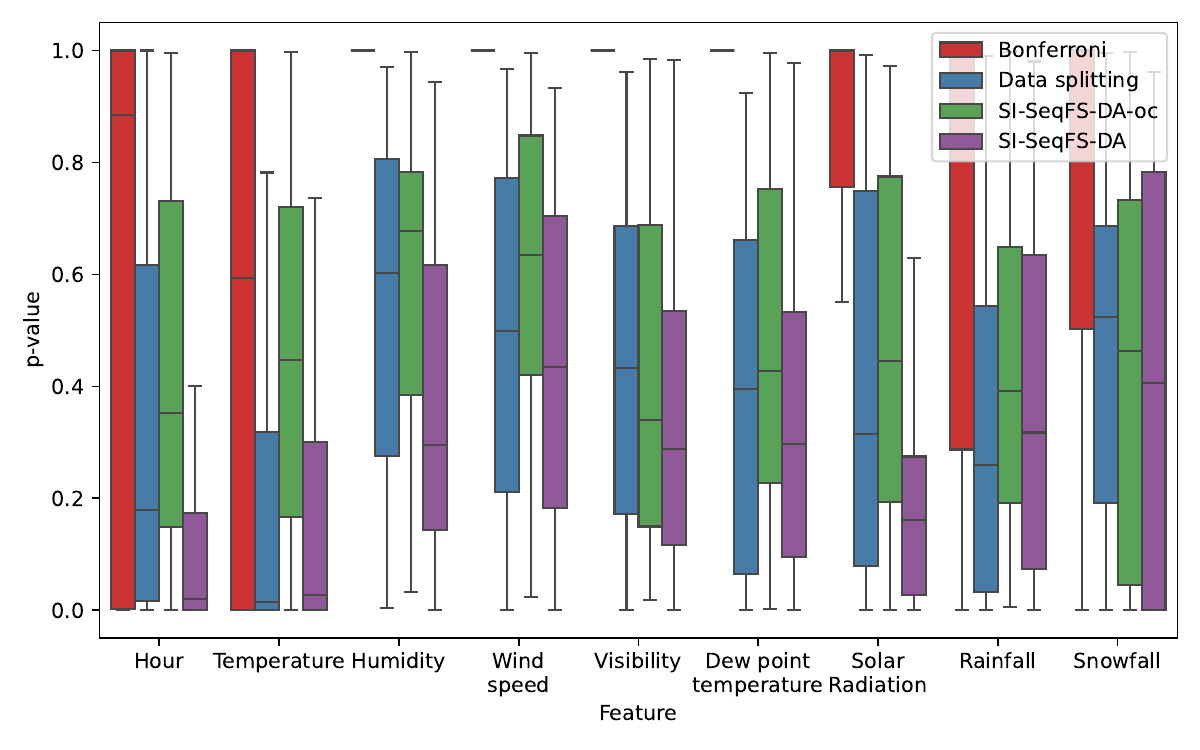}
        \caption{Seoul Bike.}
        % \label{fig:}
    \end{subfigure}
    \vspace{1pt}

    \caption{Forward SeqFS after DA with BIC on real-world datasets.}
    \label{fig:BIC_FS_realdata}
\end{figure}
%%%%%%%%%%%%%%%%%%%%%%%%4
\begin{figure}[htbp] % Allow placement flexibility with [htbp]
    \centering

    % Subfigure 1
    \begin{subfigure}{\linewidth}
        \centering
        \includegraphics[width=0.9\linewidth]{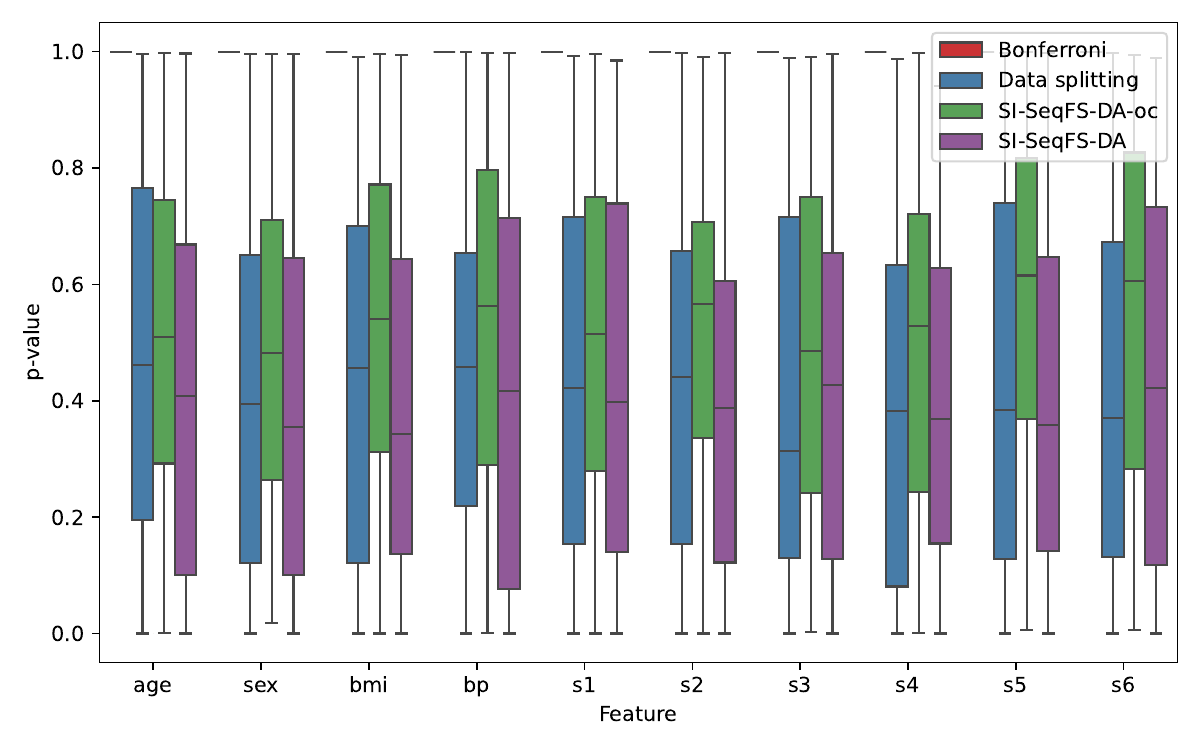}
        \caption{Diabetes.}
        % \label{fig:}
    \end{subfigure}
    \vspace{1pt}

    % Subfigure 2
    \begin{subfigure}{\linewidth}
        \centering
        \includegraphics[width=0.9\linewidth]{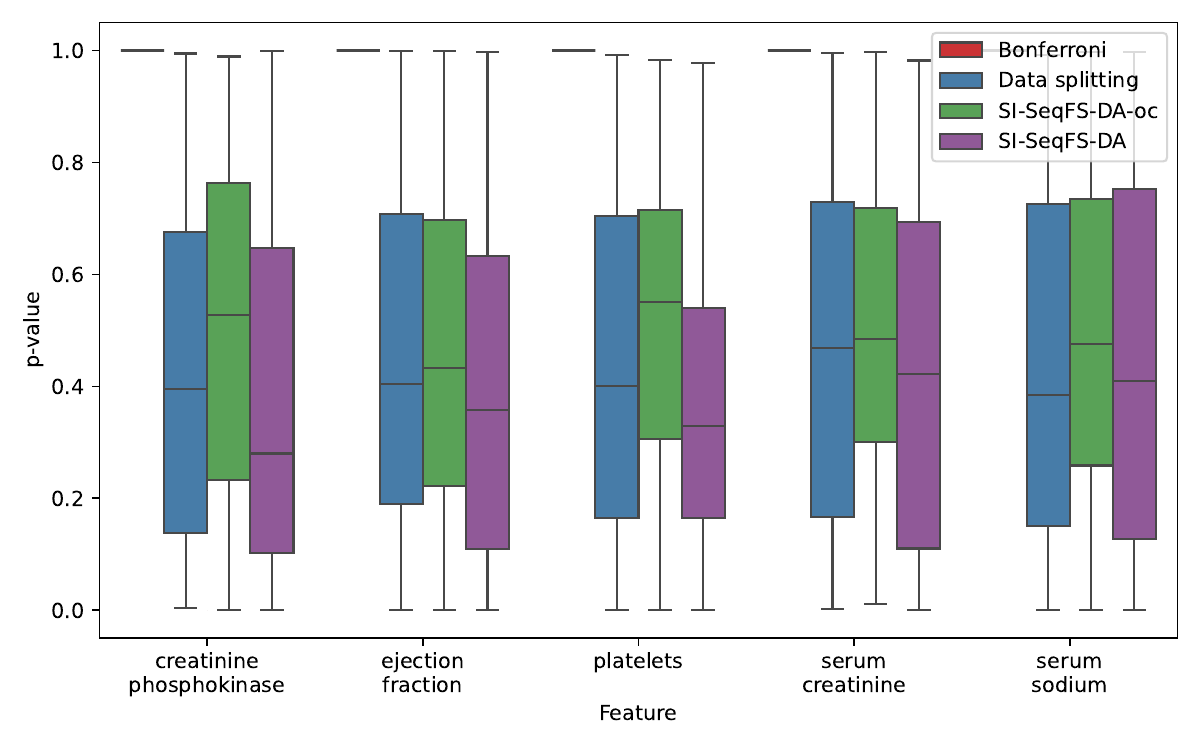}
        \caption{Heart Failure.}
        % \label{fig:}
    \end{subfigure}
    \vspace{1pt}

    % Subfigure 3
    \begin{subfigure}{\linewidth}
        \centering
        \includegraphics[width=0.9\linewidth]{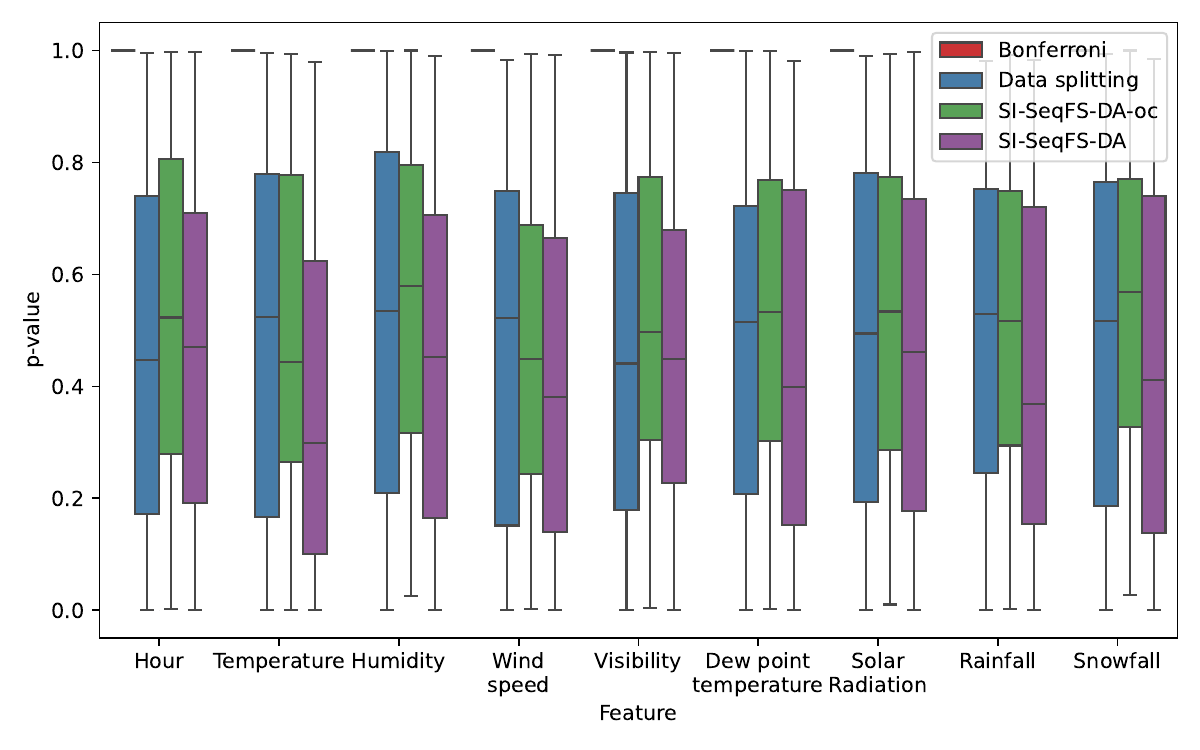}
        \caption{Seoul Bike.}
        % \label{fig:}
    \end{subfigure}
    \vspace{1pt}

    \caption{Forward SeqFS after DA with adjusted $R^2$ on real-world datasets.}
    \label{fig:AdjR2_FS_realdata}
\end{figure}
%%%%%%%%%%%%%%%%%%%%%%%%5

%%%%%%%%%%%%%%%%%%%%%%%%6

%%%%%%%%%%%%%%%%%%%%%%%%7

%%%%%%%%%%%%%%%%%%%22222222222222

\begin{figure}[!t] % Allow placement flexibility with [htbp]
    \centering

    % Subfigure 1
    \begin{subfigure}{\linewidth}
        \centering
        \includegraphics[width=0.9\linewidth]{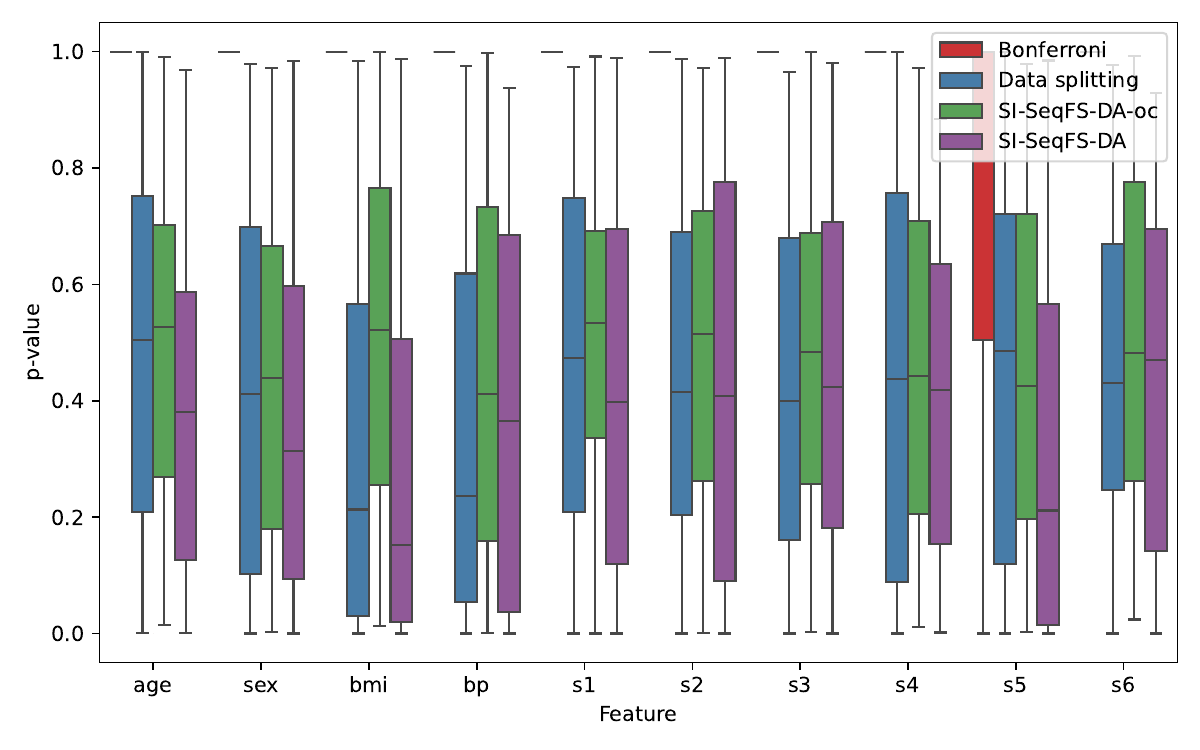}
        \caption{Diabetes.}
        % \label{fig:}
    \end{subfigure}
    \vspace{1pt}

    % Subfigure 2
    \begin{subfigure}{\linewidth}
        \centering
        \includegraphics[width=0.9\linewidth]{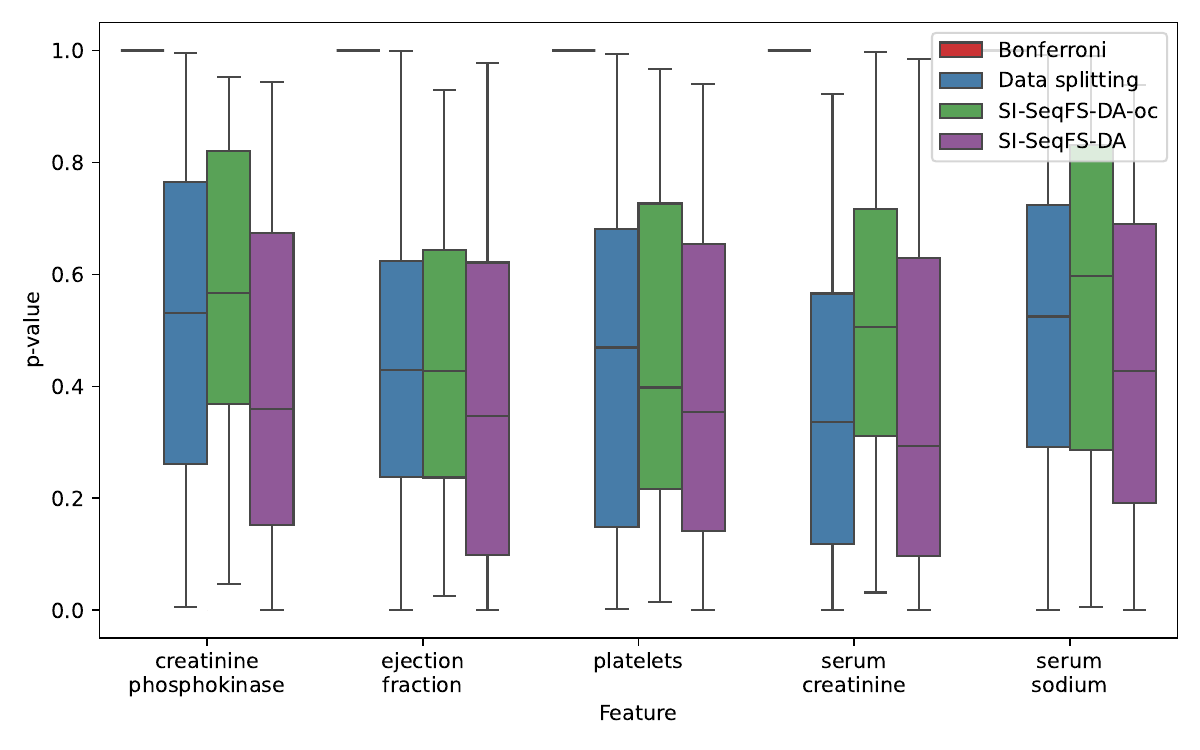}
        \caption{Heart Failure.}
        % \label{fig:}
    \end{subfigure}
    \vspace{1pt}

    % Subfigure 3
    \begin{subfigure}{\linewidth}
        \centering
        \includegraphics[width=0.9\linewidth]{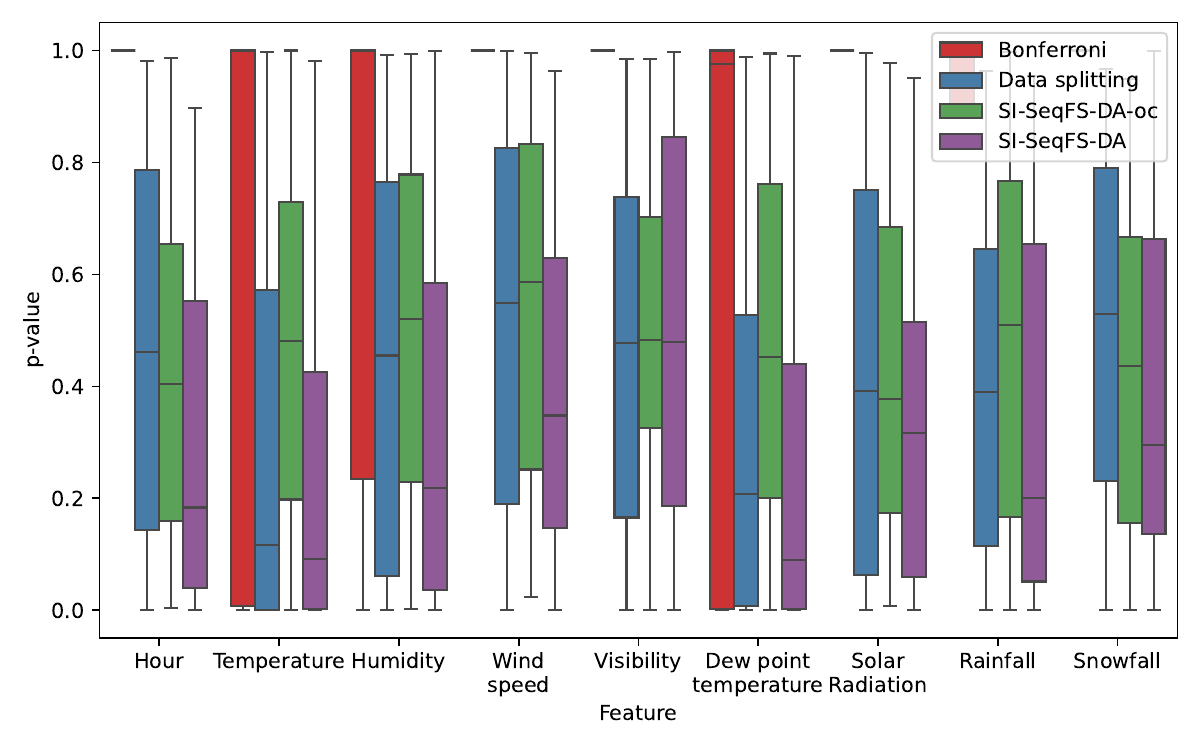}
        \caption{Seoul Bike.}
        % \label{fig:}
    \end{subfigure}
    \vspace{1pt}

    \caption{Backward SeqFS after DA with AIC on real-world datasets.}
    \label{fig:AIC_BS_realdata}
\end{figure}

%%%%%%%%%%%%%%%%%%%%%%%%3
\begin{figure}[!t] % Allow placement flexibility with [htbp]
    \centering

    % Subfigure 1
    \begin{subfigure}{\linewidth}
        \centering
        \includegraphics[width=0.9\linewidth]{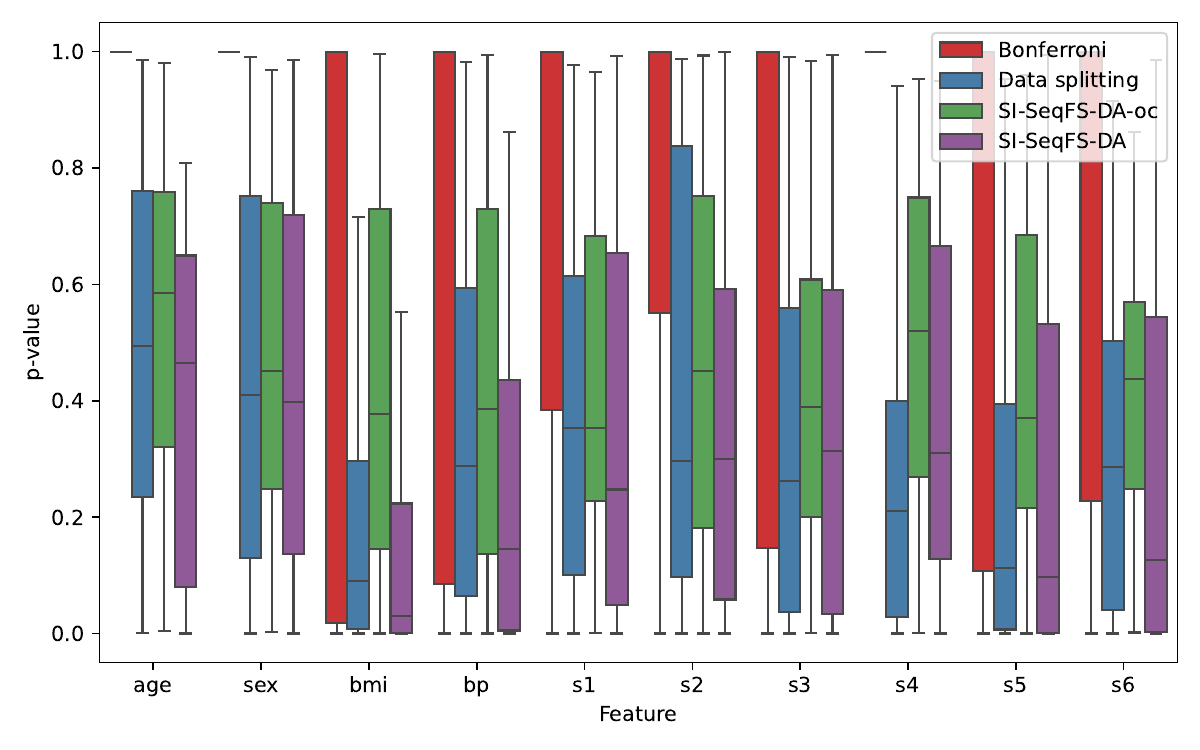}
        \caption{Diabetes.}
        % \label{fig:}
    \end{subfigure}
    \vspace{1pt}

    % Subfigure 2
    \begin{subfigure}{\linewidth}
        \centering
        \includegraphics[width=0.9\linewidth]{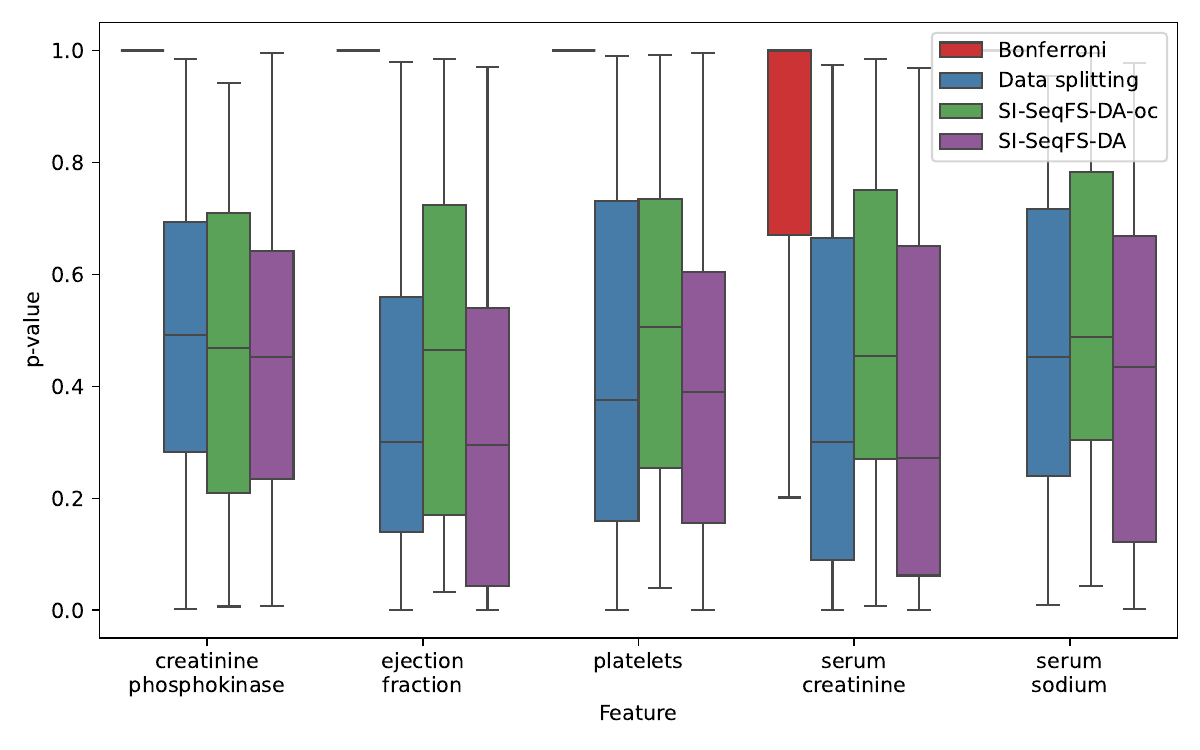}
        \caption{Heart Failure.}
        % \label{fig:}
    \end{subfigure}
    \vspace{1pt}

    % Subfigure 3
    \begin{subfigure}{\linewidth}
        \centering
        \includegraphics[width=0.9\linewidth]{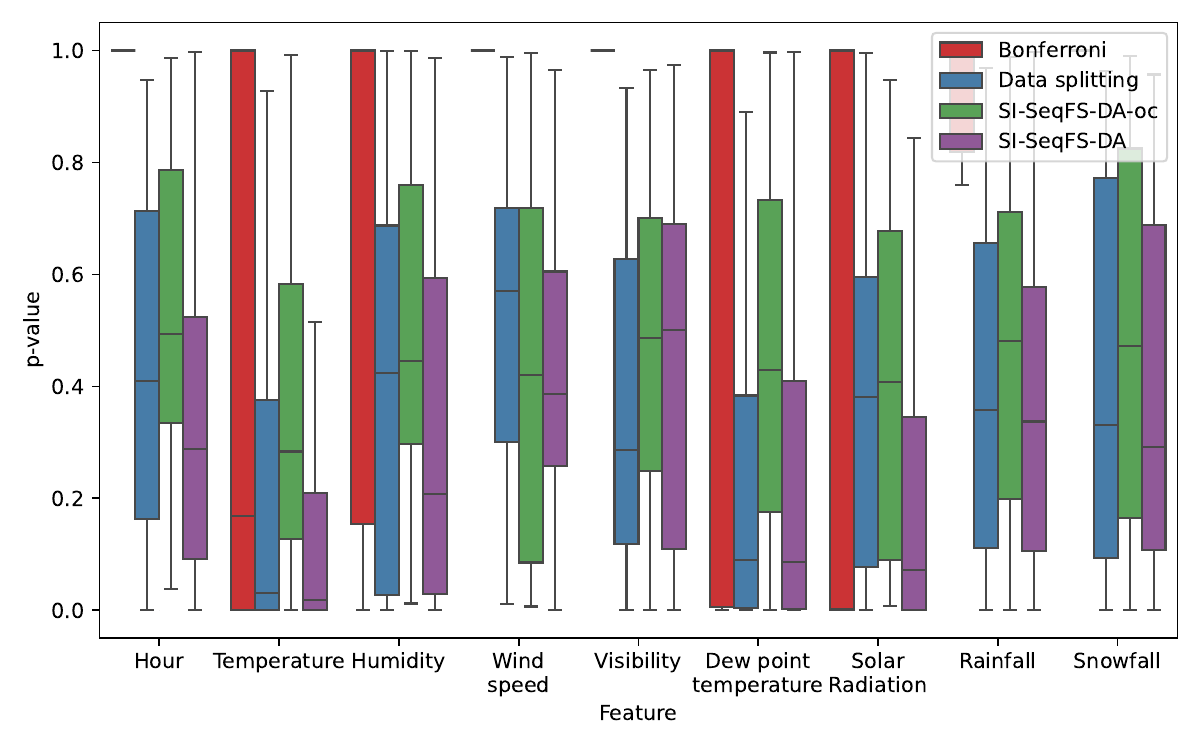}
        \caption{Seoul Bike.}
        % \label{fig:}
    \end{subfigure}
    \vspace{1pt}

    \caption{Backward SeqFS after DA with BIC on real-world datasets.}
    \label{fig:BIC_BS_realdata}
\end{figure}
%%%%%%%%%%%%%%%%%%%%%%%%4
\begin{figure}[htbp] % Allow placement flexibility with [htbp]
    \centering

    % Subfigure 1
    \begin{subfigure}{\linewidth}
        \centering
        \includegraphics[width=0.9\linewidth]{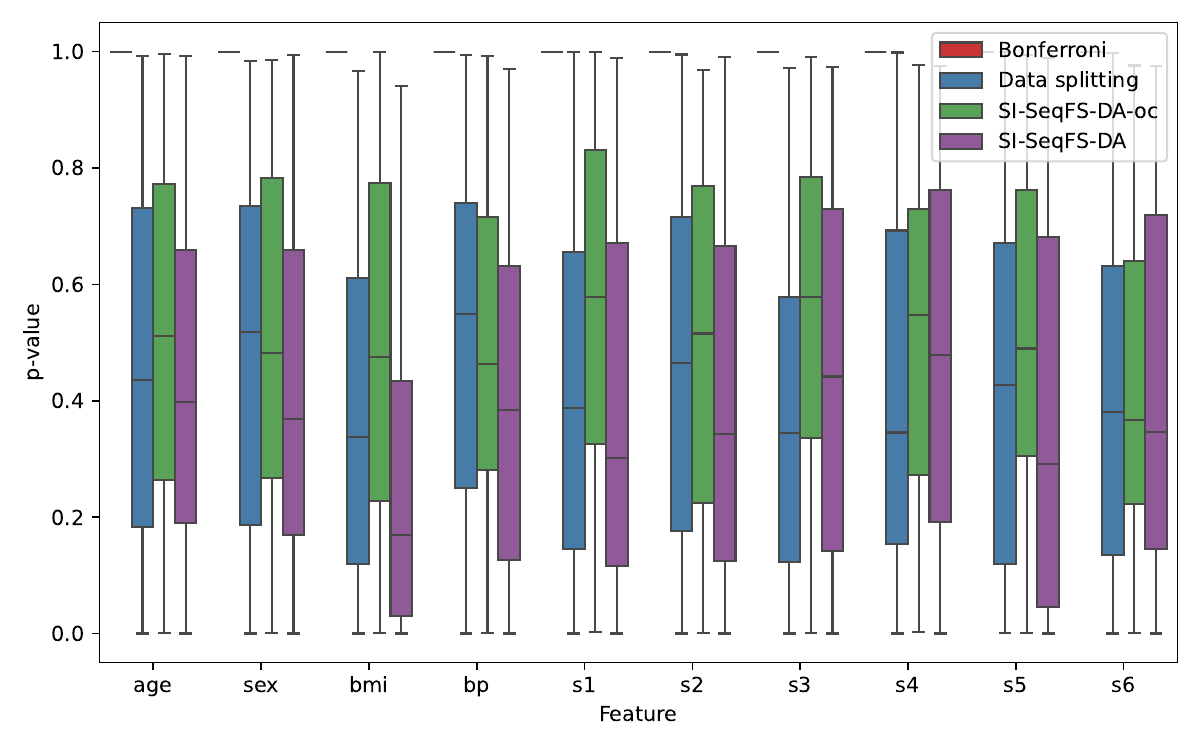}
        \caption{Diabetes.}
        % \label{fig:}
    \end{subfigure}
    \vspace{1pt}

    % Subfigure 2
    \begin{subfigure}{\linewidth}
        \centering
        \includegraphics[width=0.9\linewidth]{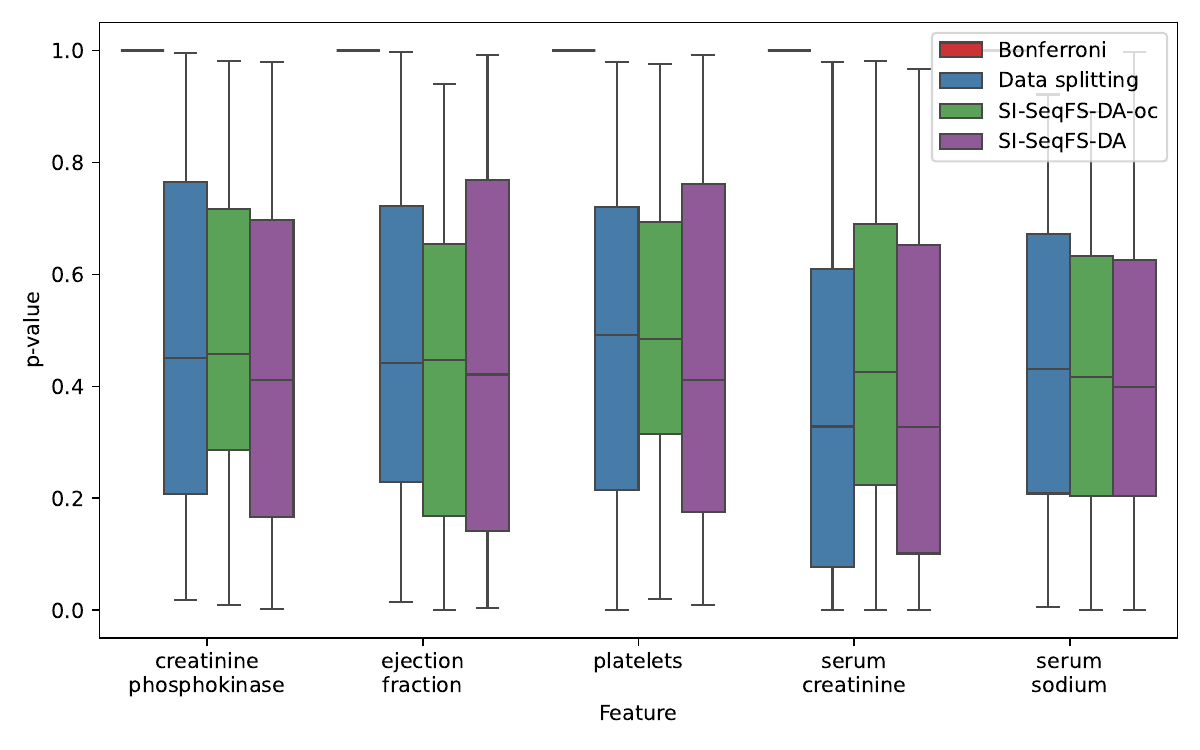}
        \caption{Heart Failure.}
        % \label{fig:}
    \end{subfigure}
    \vspace{1pt}

    % Subfigure 3
    \begin{subfigure}{\linewidth}
        \centering
        \includegraphics[width=0.9\linewidth]{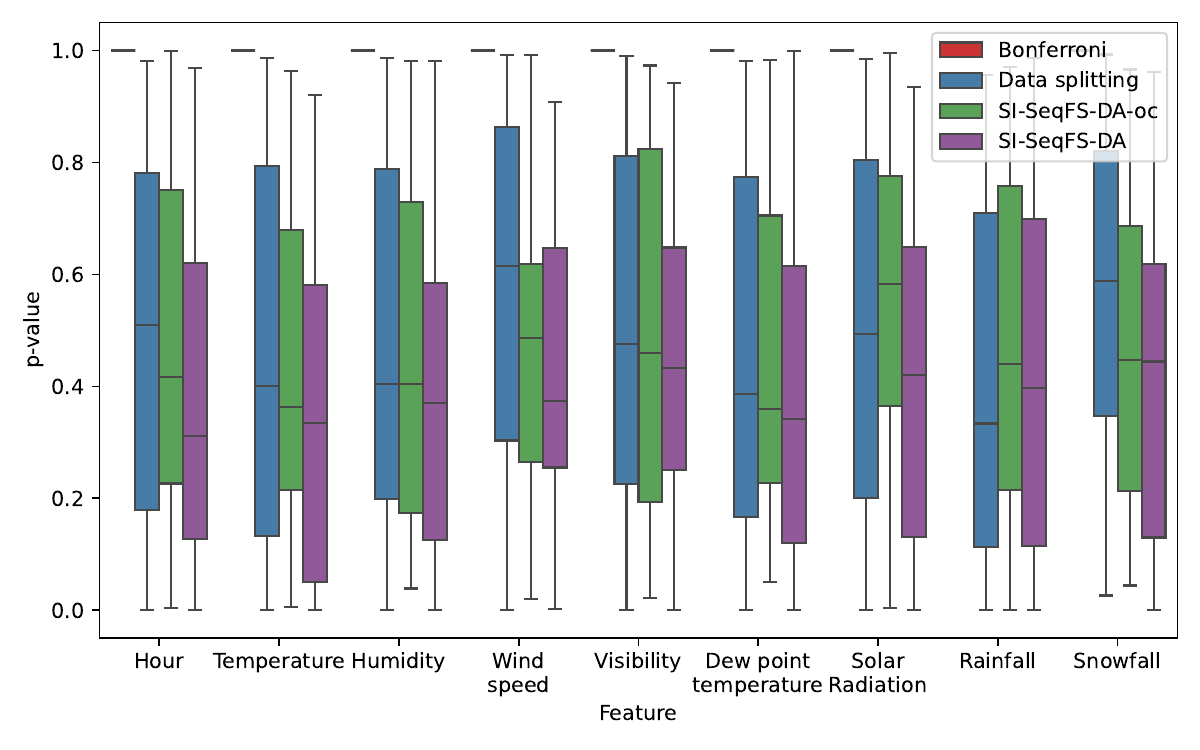}
        \caption{Seoul Bike.}
        % \label{fig:}
    \end{subfigure}
    \vspace{1pt}

    \caption{Backward SeqFS after DA with adjusted $R^2$ on real-world datasets.}
    \label{fig:AdjR2_BS_realdata}
\end{figure}
\end{document}